\newcommand{\R}{\mathds{R}}
\newcommand{\E}{\mathds{E}}
\newcommand{\bC}{{\boldsymbol C}}
\newcommand{\N}{\mathds{N}}
\newcommand{\Z}{\mathds{Z}}
\newcommand{\dP}{\mathds{P}}
\newcommand{\dL}{\mathds{L}}
\newcommand{\cF}{\mathcal{F}}
\newcommand{\cL}{\mathcal{L}}
\newcommand{\cN}{\mathcal{N}}
\newcommand{\cA}{ \mathcal{A} }
\newcommand{\cU}{\mathcal{U}}
\newcommand{\cI}{\mathcal{I}}
\newcommand{\dd}{\mathrm{d}}
\newcommand{\dt}{\dd t}
\newcommand{\dW}{\dd W}
\newcommand{\dX}{\dd X}
\newcommand{\optA}{\boldsymbol{A}_0}
\newcommand{\ha}{\widehat{\boldsymbol{A}}}
\newcommand{\hA}{\widehat{\boldsymbol{A}}_{MLE}}
\newcommand{\hAl}{\ha}
\newcommand{\hAla}{\ha_{ad.}}
\newcommand{\mat}[1]{{\boldsymbol #1}}
\newcommand{\vect}[1]{\MakeLowercase{#1}}
\newcommand{\mA}{{\boldsymbol A}}
\newcommand{\mB}{\boldsymbol{B}}
\newcommand{\mC}{\boldsymbol{C}}
\newcommand{\mD}{\boldsymbol{D}}
\newcommand{\mU}{{\boldsymbol{U}}}
\newcommand{\mV}{{\boldsymbol{V}}}
\newcommand{\mM}{{\boldsymbol{M}}}
\newcommand{\mG}{{\boldsymbol{G}}}
\newcommand{\mSigma}{{\boldsymbol{\Sigma}}}
\newcommand{\optcA}{\cA_0}
\DeclareMathOperator*{\argmin}{arg\,min}
\DeclareMathOperator*{\Sp}{Sp}
\DeclareMathOperator*{\adj}{adj}
\DeclareMathOperator*{\supp}{supp}
\newcommand{\eqL}{\overset{\rm d}=}
\newcommand{\vc}{\mathrm{vec}}
\newcommand{\Esp}[1]{\ensuremath{\mathds{E}\left[ #1 \right]}}
\newcommand{\Espq}[2]{\ensuremath{\mathds{E}_{#1}\left[ #2 \right]}}
\newcommand{\Prob}[1]{\ensuremath{\mathds{P}\left[ #1 \right]}}
\newcommand{\Var}[1]{\ensuremath{\mathds{V}\mathrm{ar}\left( #1 \right)}}
\newcommand{\Cov}[1]{\ensuremath{\mathds{C}\mathrm{ov}\left( #1 \right)}}
\newcommand\cov{\Cov}
\newcommand{\one}[1]{\ensuremath{\mathds{1}_{#1}}}
\newcommand{\id}{\ensuremath{{\boldsymbol I}}}
\newcommand{\tr}{{\rm tr \,}}
\newcommand{\diag}{{\rm diag \,}}
\newcommand{\diagII}{\hat{\delta}_T}
\newcommand{\Vinf}{\boldsymbol{C}_{\infty}}
\newcommand{\II}{\widehat {\boldsymbol{C}}_T}
\newcommand{\IW}{{\boldsymbol{\varepsilon}}_T}
\newcommand{\llangle}[1]{\left \langle #1 \right \rangle}
\newcommand{\sL}[1]{ \langle #1 \rangle_{L^2}}
\newcommand{\sFl}[1]{\left \langle #1 \right \rangle_{F}}
\newcommand{\sF}[1]{\langle #1 \rangle_{F}}
\newcommand{\nInflarge}[1]{\left\| #1 \right\|_{\infty}}
\newcommand{\nL}[1]{\lVert #1\rVert_{L^2}}
\newcommand{\nF}[1]{\| #1 \|_{F}}
\newcommand{\nT}[1]{\| #1 \|_{2}}
\newcommand{\nO}[1]{\| #1 \|_{1}}
\newcommand{\nZ}[1]{\| #1 \|_{0}}
\newcommand{\nQ}[1]{\| #1 \|_{q}}
\newcommand{\nInf}[1]{\| #1 \|_{\infty}}
\newcommand{\nOp}[1]{\| #1 \|_{\mathrm{op}}}
\newcommand{\OO}[1]{O\left(#1\right)}
\newcommand{\cNN}[1]{\cN \left( #1 \right)}
\newcommand{\beq}[2]{
\begin{equation}
\label{#1}
#2
\end{equation}
}
\newtheorem{lemma}{Lemma}
\newtheorem{remark}{Remark}
\newtheorem{corollary}{Corollary}
\newtheorem{theo}{Theorem}
\numberwithin{equation}{section}
\author{Stéphane Gaïffas\footnote{CMAP, Ecole Polytechnique and CNRS, Universit\'e Paris Saclay, Route de Saclay, 91128 Palaiseau cedex, France. Email: stephane.gaiffas@polytechnique.edu.}, Gustaw Matulewicz\footnote{CMAP, Ecole Polytechnique and CNRS, Université Paris Saclay, Route de Saclay, 91128 Palaiseau cedex, France. Email: gustaw.matulewicz@polytechnique.edu. This work was funded jointly by \textit{Chaire Risques Financiers} of the \textit{Risk Foundation},  the {\it Finance for Energy Market Research Centre}, the Natixis Foundation for Quantitative Research, and the Data Science Initiative of Ecole polytechnique}}
\title{Sparse inference of the drift of a high-dimensional Ornstein-Uhlenbeck process}
\begin{document}

\maketitle

\begin{abstract}
Given the observation of a high-dimensional Ornstein-Uhlenbeck (OU) process in continuous time, we proceed to the inference of the drift parameter under a row-sparsity assumption.
Towards that aim, we consider the negative log-likelihood of the process, penalized by an $\ell^1$-penalization (Lasso and Adaptive Lasso). 
We provide both non-asymptotic and asymptotic results for this procedure, by means of a sharp oracle inequality, and a limit theorem in the long-time asymptotics, including asymptotic consistency for variable selection.
As a by-product, we point out the fact that for the Ornstein-Uhlenbeck process, one does not need an assumption of restricted eigenvalue type in order to derive fast rates for the Lasso, while it is well-known to be mandatory for linear regression for instance.
Numerical results illustrate the benefits of this penalized procedure compared to standard maximum likelihood approaches both on simulations and real-world financial data. \\

\noindent
\emph{Keywords}. Ornstein-Uhlenbeck process; High-dimensional statistics; Sparse estimation; Lasso \\

\noindent
\emph{MSC 2010}. 60G15; 62H12; 62M99
\end{abstract}


\section{Introduction}
\label{s:introduction}

The Ornstein-Uhlenbeck, also called mean-reverting diffusion process, describes a process which evolves following a deterministic linear part with an added Gaussian noise, similarly to a vector-autoregressive process in discrete time.
This model is ubiquitous in quantitative finance, for instance the one-dimensional version is used for modeling rates and is called the Vasicek model~\cite{hull:2009}.
In a multi-dimensional setting, it can be therefore used to describe systems with linear interactions perturbed by Gaussian noise, see Figure~\ref{f:example_trajectory_matrix} below.
Among many others, an example of application is inter-bank lending~\cite{carmona:2013,fouque:2013}, where lending is a flux of reserves and is proportional to the difference in reserves.
A natural question is therefore how to estimate the interaction structure from the observation of the process.
Unfortunately, the optimal solution based on the maximum likelihood estimator (MLE) is typically quite inaccurate in high-dimensional settings, because of the well-known curse of dimensionality, see for instance~\cite{buhlmann2011statistics}.
However, in real-world applications, the interaction structure is sparse: in the example mentioned above, banks have typically only a few lending partners \cite{gabrieli:2014,gabrieli:2015,brauning:15}, as the lending arrangements are typically done on a personal level.

\begin{figure}
\centering
\begin{subfigure}{0.55\textwidth}
\centering
\includegraphics[height=5cm]{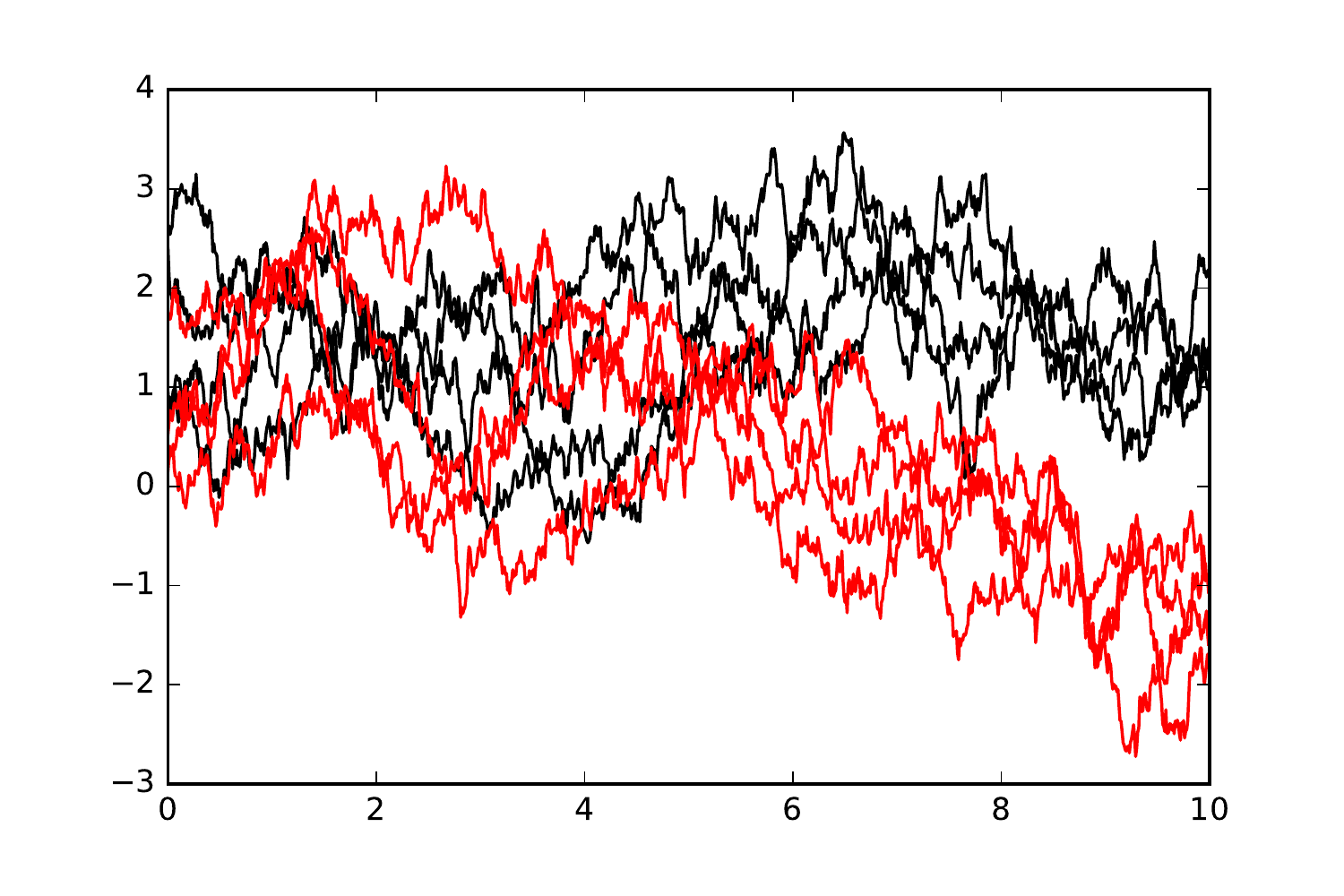}
\caption{Sample trajectory in 8 dimensions.}
\label{f:trajectory}
\end{subfigure}
\begin{subfigure}{0.4\textwidth}
\centering
\includegraphics[height=5cm]{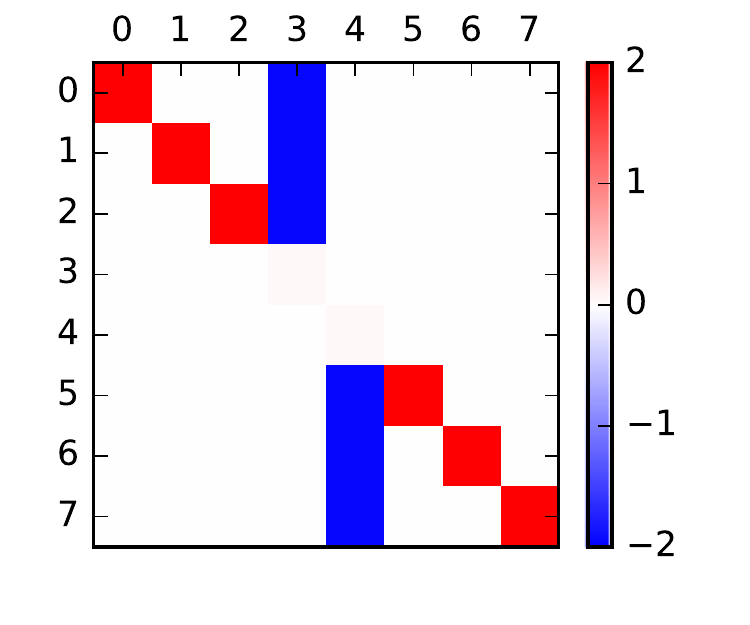}
\caption{Matrix parameter $\optA$.}
\label{f:matrix}
\end{subfigure}
\caption{On the right, heat-map representation of a sparse matrix $\optA$. 
In this particular example, the matrix is chosen in order to have two groups, $0$ to $3$ and $4$ to $7$, that are independent and tend to stay close within each group. On the left, plot of the 8 coordinates of the sample trajectory, each group being attributed a different color.
Our estimation procedure can be applied to find this kind of hidden structure from non-obvious trajectories.}
\label{f:example_trajectory_matrix}
\end{figure}

In this paper, we exploit this property by using a sparsity-inducing penalization.
Sparse inference using convex penalization has known a strong development in the past decade~\cite{buhlmann2011statistics,giraud2014introduction,bach2012optimization}, mostly for linear supervised learning.
Quite surprisingly, there is only a single previous attempt to this work to use these techniques in the setting of diffusion processes, in particular for the Ornstein-Uhlenbeck diffusion considered here, see~\cite{sokol:2013}, with no theoretical guarantees nor applications on real-world data.
The aim of this paper is therefore to fill this gap, and to give a complete theory for this case, by developing both non-asymptotic results by means of a sharp oracle inequality, see Section~\ref{s:nonasymptotic}, and asymptotic results (in the length of observation interval), see Section~\ref{s:asymptotic}, where we notably establish asymptotic consistency for selection of the support of $\mA_0$.
We also prove a minimax lower-bound for the problem of sparse inference in this model in Section~\ref{s:nonasymptotic}.
As a by-product, we exhibit a surprising fact in our analysis that for the Ornstein-Uhlenbeck process, one does not need to assume the restricted eigenvalue assumption~\cite{buhlmann2011statistics}, which is known to be mandatory for the linear regression model, see for instance~\cite{zhang2014lower}.

\subsection{Related work}
\label{ss:related}

We investigate in this article the question of recovery of the drift parameter of an Ornstein-Uhlenbeck process from the continuous observation of a single multidimensional trajectory on the 
interval $[0,T]$.
This relates to the much developed area of inference for stochastic processes in continuous time, see~\cite{kutoyants:2004statistical} for a survey on this topic. 
This work is also related to the field of high-dimensional statistics, in particular sparse inference, since we use a sparsity assumption on the parameter matrix, we refer to~\cite{buhlmann2011statistics,giraud:2014} for surveys on the topic.
Indeed, in this paper we study the Lasso~\cite{tibshirani1996regression} and Adaptive Lasso~\cite{zou:06} penalizations, 
applied to the multivariate Ornstein-Uhlenbeck process.

Note that, however, references that propose sparse inference techniques to stochastic processes are quite scarce.
A Vector Auto-Regressive (VAR) process can be seen as a discretization in time of an Ornstein-Uhlenbeck process, where $\optA$ is analogous to the VAR transition matrix.
The sparse estimation of a VAR process using a Lasso is the subject of \cite{basu:15}. 
However, our work differs on two fundamental points. The first relates to the graph structure implied by $\optA$. 
While~\cite{basu:15} assumes sparsity of the whole graph, we place the sparsity on a node level, restricting the maximum degree of the graph, since we work under a row-sparsity assumption, see Assumption~(H3) below. 
This prescribes for instance the existence of nodes which concentrate most connections, in line with observations of the interbank lending system, which note high connectedness only in the core of the network \cite{gabrieli:2015}.
The second relates to the continuous nature of the considered model.  Since the VAR model has finite dimension both in time and space~\cite{basu:15}, it is possible to analyze them jointly in a space of finite dimension equal to the product of the two dimensions. 
In this paper, we work in continuous time, which forces us to treat time and dimensionality in a fundamentally different way.
Another reference is~\cite{sokol:2013}, where the Lasso is considered as a strategy to estimate Ornstein-Uhlenbeck parameters in a sparse setting, but no theoretical results nor numerical experiments are provided for this problem.
Finally, we consider the particular notion of row-sparsity, which was considered previously for matrix estimation (with additive noise) in~\cite{klopp2015estimation}, instead of the full sparsity of $\optA$.

\subsection{The model, main assumptions and tools}
\label{ss:model}

Throughout the article we consider a $d$-dimensional Ornstein-Uhlenbeck process $X = (X_t)_{t \geq 0}$, where $X_t \in \R^d$ for any $t \geq 0$ is solution to the following stochastic differential equation 
\beq{def:OU}{
	\dX_t = - \optA X_t \dt + \dW_t, \quad \text{for any } t \geq 0,
}
where the initial value $X_0$ is given, $\optA$ is an unknown $d \times d$ matrix to be inferred, and $(W_t)_{t \geq 0}$ is a standard Brownian motion in $\R^d$ defined on a filtered space $( \Omega, \cF, (\cF_t)_{t \geq 0}, \dP)$.

We observe the process on an interval $[0, T]$ with $T > 0$. Based on the observation $(X_t)_{t \in [0, T]}$, we want to estimate $\optA$, under the assumption that $\optA$ has sparse rows, namely that a large number of their entries are zeros.
Note that $\optA^{ij} \not = 0$ encodes the fact that the trajectory of process~$j$ influences the dynamic of process~$i$, which is a property of particular interest for instance in interbank-lending as it implies lending activity from~$j$ to~$i$. Row-sparsity implies that each institution borrows from a limited number of institutions. 
More generally, in time-series analysis, it means that each trajectory is impacted by a limited number of other trajectories.

Throughout the paper, we work under the following assumptions.
\begin{description}
\item [(H1)] The spectrum of $\optA$ has strictly positive real parts.
\item [(H2)] $X_0$ follows the stationary distribution of the process.
\end{description}

These are standard assumptions for Ornstein-Uhlenbeck processes: Assumption~(H1) guarantees ergodicity of $(X_t)_{t \geq 0}$ and existence of a stationary distribution, and is necessary to ensure mean-reversion of the process, the real-world phenomenon that we want to capture and exploit in our modeling. 
Under~(H2) the process is stationary, which is interesting for two reasons. First, it simplifies the results as the initial position doesn't have to be treated differently from the rest of the trajectory. Second, in typical applications one assumes an equilibrium, hence stationarity. For example, in interbank lending there is no reason to assume that the first day of observation is any different from days that precede and follow.

Under these assumptions, the Ornstein-Uhlenbeck verifies interesting  properties.
For instance, we have 
\begin{equation}
	X_t \sim \cNN{0,\Vinf} \quad \text{with} \quad \Vinf = \Vinf(\mA) := \int_0^\infty e^{-\mA t} e^{-\mA^\top t} \dt,
\end{equation}
for all $t \geq 0$. 
For this and other classical properties we refer to~\cite{karatzas1991brownian}, see Section 5.6 herein.
In this model, the maximum-likelihood estimator (MLE) is given as the argument minimum of the following negative log-likelihood:
\beq{eq:logL}{
\cL_T(\mA) := \frac{1}{T} \int_0^T (\mA X_t)^\top \dX_t + \frac{1}{2T} \int_0^T (\mA X_t)^\top \mA X_t \dt,
}
and it can be written explicitly as 
\begin{equation}
\label{eq:mle_estimator}
	\hA := - \left( \int_0^T \dX_t X_t^\top \right) \left( \int_0^T X_t X_t^\top \dt \right) ^{-1}.
\end{equation}
The inverse exists almost surely as the integral is almost surely a symmetric positive definite matrix (see Section \ref{s:nonasymptotic} for more details).
The asymptotic normality of this MLE is well-know, indeed we have
\beq{eq:asymptotic_normality}{
	\sqrt{T} \left( \vc \ \hA - \vc \ \optA \right) \xrightarrow{\cL} \cNN{0,\Vinf^{-1} \otimes \id},
}
see \cite{jacod:01}, where $\xrightarrow{\cL}$ stands for convergence in distribution, $\id$ stands for the identity matrix in $\R^{d \times d}$, $\otimes$ is the matrix-Kronecker product and $\vc$ stands for the vectorization operator, that stacks rows of a $d \times d$ matrix into a flat vector with $d^2$ entries.

When $d$ is large, the performance of the MLE deteriorates, because of the curse of dimensionality problem, see \cite{buhlmann2011statistics} and our numerical results in Section \ref{ss:T_impact}.
So, as motivated above, we will reduce dimensionality using a sparsity-inducing penalization on this estimator, see Sections~\ref{s:nonasymptotic} and~\ref{s:asymptotic} below. 
Our analysis relies on the following two central quantities:
\begin{align*}
\II = \frac{1}{T} \int_0^T X_t X_t^\top \dt \quad \text{ and } \quad
\IW = \frac{1}{T} \int_0^T \dW_t X_t^\top.
\end{align*}x
The matrix $\II$ is the empirical covariance which satisfies
 $\E[\II] = \Vinf$. It is analogous to the Gram matrix in linear regression.
 The matrix $\IW$ is a noise term, note that $(t \mat{\varepsilon}_t)_{t \geq 0}$ is a martingale with quadratic variation given by $\llangle{ \vc \ t \mat{\varepsilon}_t} =
  t \widehat \bC_t \otimes \id$. 
 Using this matrix notation, we have for instance $\hA = \optA - \IW \II^{-1}$ and the matrix formulation of the problem
\beq{eq:logL_matrix}{
\cL_T(\mA) = \tr \mA^\top \IW + \frac{1}{2}(\mA - \optA) \II (\mA - \optA)^\top - \frac{1}{2} \optA \II \optA^\top. }

\paragraph{Notation.} 

For a matrix or a vector $x$, we denote by $\nQ{x}$ the entrywise $\ell_q$ norm for any $q \in [1, +\infty]$. 
The notation $\nZ{x}$ stands for the number of non-zero entries of $x$, $\nF{x} = \|x\|_2$ for the Frobenius norm of $x$ when it is a matrix; we consider also the Euclidean inner product $\sF{\mU, \mV} = \tr 
\mU^\top \mV$, where $\tr \mM$ is the trace of a matrix $\mM$ and define $\nOp{\mM}$ as the operator norm of $\mM$.
We also denote by $\sigma_{\min}(\mA)$ the smallest eigenvalue of a symmetric $\mA$, and $\diag(\mA)$ stands for the vector formed by the diagonal of $\mA$.
We also denote by $\supp(x)$ the support of $x$, i.e. the set of indices of the non-null coordinates of $x$, where $x$ is a matrix or a vector. 
Given a set of indices $\cI$, we denote by $x_{\mid \cI}$ the restriction of $x$ to the indices in $\cI$. 
Moreover, $\Sp \mA$ is the spectrum of $\mA$ and $\diag \mA$ is the extraction of the diagonal of $\mA$.
Additionally, we define
\begin{equation*}
	\nL{X}^2 = \frac{1}{T} \int_0^T |X_t|^2_2 \dt \quad \text{and} 
	\quad \sL{X,Y} = \frac{1}{T} \int_0^T X_t^\top Y_t \dt,
\end{equation*}
that correspond to the empirical norm and inner products along the observed trajectory of $(X_t)_{t \geq 0}$.

\subsection{Main results and organization of the paper} 
\label{sub:main_results_and_organization_of_the_paper}

In Section \ref{s:nonasymptotic} we introduce the Lasso estimator of $\optA$. Our main result, concerning non-asymptotic error bounds, is Theorem \ref{th:1norm_empirical_bound}. 
We show that this upper bound is asymptotically of the same order, up to logarithmic terms, as the lower bound we have in Theorem~\ref{th:sparse_lower_bound}. 
We conclude the section with Theorem~\ref{th:RE} which is an interesting by-product of the proof of Theorem~\ref{th:1norm_empirical_bound} and which states that a Restricted Eigenvalue condition is valid in our setting, when $\mA_0$ is symmetric. 
In Section~\ref{s:asymptotic} we introduce the Adaptive Lasso estimator and prove in Theorem \ref{th:adalasso_oracle} its asymptotic normality and support recovery properties. 
Numerical experiments are provided in Section \ref{s:numerical}, where we illustrate the benefits of sparse inference over direct maximum likelihood estimation. In Section~\ref{s:proofs} we provide the proofs of the properties from the preceding Sections.

\section{Non-asymptotic error bounds for Lasso}
\label{s:nonasymptotic}

Given a regularization parameter $\lambda >0$, we define the Lasso estimator by:
\beq{def:lasso}{
\hAl = \argmin_{\mA \in \R^{d \times d}} \cL_T(\mA) + \lambda \nO{\mA}.
}
The uniqueness of $\hAl$ derives from the strict convexity of $\cL_T$, which comes from the fact $\II$ is a.s.\ a positive definite matrix, see Equation \eqref{eq:logL_matrix}. 
Indeed, observe that for any $\vect{u} \in \R^d$, $\nT{\vect{u}} = 1$, we have $\vect{u}^\top \II \vect{u} = T^{-1} \int_0^T (\vect{u}^\top X_t)^2 \dt$ which can be zero only if the trajectory is included in a hyperplane of $\R^d$. The observation length $T>0$ is fixed in the whole Section. We also fix an integer $1 \leq s \leq d$ and express the sparsity of $\optA$ in the following assumption:

{\begin{description}
\item [{\bf (H3)}] The true parameter is row-$s$-sparse, i.e.
\beq{eq:optA_sparse}{
\nZ{\optA^{i,\bullet}} \leq s \; \text{ for all } i=1, \ldots, d,
}
where $\mA^{i,\bullet}$  stands for the vector such that for any $j \leq d$, $(\mA^{i,\bullet})^j = \mA^{ij}$ for any matrix $\mA$.
\end{description}}

This assumption notably differs from a sparsity assumption on the whole matrix parameter, but has already been used in matrix estimation, for instance in \cite{klopp:15} for additive noise.
We also need to introduce a technical hypothesis on the deviation of $\II$ from $\Vinf$.

\begin{description}
\item [(H4)] There exists a non-decreasing function $H$, positive on $\R^+$, such that for any vector $\vect{u}$ verifying $\nT{\vect{u}} \leq 1$, we have:

\beq{eq:hypo_exponential_deviation}{
\Prob{|\vect{u}^\top (\II - \Vinf) \vect{u}| \geq R } \leq 2 \exp (- T H(R)).
}
\end{description}

We actually prove this assumption in the case where $\optA$ is symmetric, see Theorem \ref{th:H4} in Section \ref{ss:deviation_hypothesis}. The proof is based on a concentration inequality for integrals of functionals of a stochastic process from \cite{cattiaux:2007}. Furthermore, the convergence of $\II$ to $\Vinf$ is constrained by the speed of decorrelation of the process, which is the slowest precisely for symmetric parameters $\optA$, see \cite{hwang:93}. 
We therefore conjecture Assumption~(H4) to hold also for a  non-symmetric~$\mA_0$.

The set of Assumptions~(H1) -- (H4) are relatively unrestrictive. 
As already explained, Assumption~(H1) is necessary for stationarity while Assumption~(H2) could be possibly eliminated, since the exponentially decreasing autocorrelation means that the distribution of $X$ is rapidly approaching the stationary distribution, but this would unnecessarily clutter our results. 
Assumption~(H4) is not very restrictive: as mentioned above it is proved for a symmetric $\mA_0$, and we conjecture it to be true in general (but were unable to prove the general case yet). 
Finally, Assumption~(H3) is the sparsity assumption assumed throughout the paper on $\optA$.

\begin{theo}
\label{th:1norm_empirical_bound}
Assume (H1) -- (H4). Set $\gamma >1$, $0 \leq \tau < \gamma -1$, $\epsilon_0 \in (0,1)$ and define
\beq{def:lambda}{
\lambda_T := \gamma \sqrt{ \frac{4 e |\diagII|_\infty}{T} \left( \frac{1}{2} \log \frac{2 \pi^2 d^2 }{3 \epsilon_0} + \log ( 2 + | \log (T \diagII) |_\infty ) \right) }
}
where $\diagII := \diag \II$ and $\log$ is applied entrywise on $T \hat \delta_T$. 
Set $c_0 := \frac{\gamma + \tau + 1}{\gamma - \tau -1}$, $\kappa := \sqrt{\frac{\min \Sp (\Vinf)}{2}}$ and assume that
\begin{equation*}
	T \geq T_0 := H \left(\frac{\kappa^2}{9 (c_0 + 2)^2}\right)^{-1} \left( s \log \Big(21d \wedge \frac{21 ed}{s} \Big) + \log \Big( 
	\frac{4}{\epsilon_0} \Big) \right).
\end{equation*}
Then, for any row-$s$-sparse matrix $\mA$, the lasso estimator $\hAl := \ha_{\lambda_T}$ verifies
\beq{eq:norm1_L2_err}{
2 \tau \gamma^{-1} \lambda_T \nO{\hAl - \mA} + \nL{ ( \hAl - \optA ) X }^2 \leq \nL{ ( \mA  - \optA ) X }^2 + \left( \frac{1 + \gamma + \tau}{\gamma \kappa} \right)^2 \lambda_T^2 ds}
with probability at least $ 1 - \epsilon_0$.
\end{theo}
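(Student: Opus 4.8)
The plan is to follow the standard "basic inequality" strategy for the Lasso, adapted to the continuous-time log-likelihood $\cL_T$. First I would use the optimality of $\hAl$ in the definition \eqref{def:lasso}: since $\cL_T(\hAl) + \lambda_T \nO{\hAl} \leq \cL_T(\mA) + \lambda_T \nO{\mA}$ for the competitor row-$s$-sparse matrix $\mA$, I would expand both sides using the quadratic form \eqref{eq:logL_matrix}. Writing $\mA - \optA$ and $\hAl - \optA$ in terms of the empirical norm $\nL{\cdot}$, the identity $\cL_T(\mB) = \tr \mB^\top \IW + \tfrac12 \nL{(\mB - \optA)X}^2 - \tfrac12 \nL{(\optA)X}^2$ collapses the $\tfrac12 \optA \II \optA^\top$ terms and yields
\begin{equation*}
\nL{(\hAl - \optA)X}^2 \leq \nL{(\mA - \optA)X}^2 + 2 \tr (\mA - \hAl)^\top \IW + 2\lambda_T \big( \nO{\mA} - \nO{\hAl}\big).
\end{equation*}
The next step is to control the noise term $2 \tr(\mA - \hAl)^\top \IW = 2\sF{\mA - \hAl, \IW}$ by $2 \nInf{\IW} \nO{\mA - \hAl}$, and then to show that on a good event $\nInf{\IW} \leq \lambda_T \gamma^{-1}(1+\tau)/1$ — more precisely, that $\gamma \nInf{\IW} \leq \lambda_T$ or something of that shape, using the explicit choice \eqref{def:lambda} of $\lambda_T$. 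This is where the martingale structure of $(t\mat{\varepsilon}_t)$ with quadratic variation $t \hat\bC_t \otimes \id$ enters: each entry $\IW^{ij}$ is a continuous martingale time-changed by $\diagII$-type quantities, so a self-normalized / Bernstein-type deviation bound for stochastic integrals gives the probability bound $\epsilon_0$ after a union bound over the $d^2$ entries (explaining the $\log(2\pi^2 d^2/3\epsilon_0)$ and the $\log(2 + |\log(T\diagII)|_\infty)$ correction from the random normalization). I expect the bookkeeping here — getting exactly the constants $4e$, $\pi^2/3$, and the double-log term — to be the first technical obstacle, but it is essentially an application of a known self-normalized deviation inequality for continuous martingales.

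Once the noise is absorbed, the argument splits on the support $\cI = \supp(\mA)$ (of size at most $sd$ by (H3)). Using $\nO{\mA} - \nO{\hAl} \leq \nO{(\hAl - \mA)_{\mid \cI}} - \nO{(\hAl - \mA)_{\mid \cI^c}}$ and combining with the noise bound, I would derive that the error $\Delta := \hAl - \mA$ satisfies a cone-type constraint, roughly $\nO{\Delta_{\mid \cI^c}} \leq c_0 \nO{\Delta_{\mid \cI}}$ (this is where $c_0 = (\gamma+\tau+1)/(\gamma-\tau-1)$ and the condition $\tau < \gamma - 1$ come from, ensuring the coefficient is positive), plus a leftover term $2\tau\gamma^{-1}\lambda_T \nO{\Delta}$ that we want to keep on the left-hand side of the final bound. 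The key point — and the heart of the paper's "no restricted eigenvalue needed" observation — is that I then need to lower-bound $\nL{(\hAl - \optA)X}^2$ or rather relate $\nL{\Delta X}^2$ to $\nF{\Delta}^2$ on this cone. Instead of invoking a restricted eigenvalue assumption, the strategy is to use Assumption (H4): on the cone, $\nO{\Delta} \leq (1+c_0)\sqrt{s}\,\nF{\Delta}$ (row-sparsity of both $\mA$ and, effectively, the relevant part of $\Delta$), and (H4) with a union bound over a net controls $|\vect{u}^\top(\II - \Vinf)\vect{u}|$ uniformly over sparse directions, so that $\vect{u}^\top \II \vect{u} \geq \sigma_{\min}(\Vinf) - R \geq \kappa^2$ for $R = \kappa^2$; the radius $\kappa^2/(9(c_0+2)^2)$ and the net cardinality $\binom{d}{s}$-type bound produce exactly the stated $T_0$ threshold via $T \geq H(\cdot)^{-1}(s\log(21d \wedge 21ed/s) + \log(4/\epsilon_0))$.

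Finally I would assemble: on the intersection of the two good events (noise event of probability $\geq 1 - \epsilon_0/2$, say, and deviation event from (H4) of probability $\geq 1 - \epsilon_0/2$, combined for total $1 - \epsilon_0$), the cone constraint plus the restricted lower bound on the empirical quadratic form give $\nL{\Delta X}^2 \geq \kappa^2 \nF{\Delta}^2 / (\text{const})$, and then a standard AM-GM step ($2ab \leq a^2 + b^2$) converts the cross term $\lambda_T \nO{\Delta_{\mid\cI}} \leq \lambda_T \sqrt{sd}\,\nF{\Delta}$ into $\tfrac12 \nL{\Delta X}^2 + (\text{const})\lambda_T^2 sd / \kappa^2$, absorbing the quadratic-in-$\Delta$ piece and leaving the $( (1+\gamma+\tau)/(\gamma\kappa))^2 \lambda_T^2 d s$ term (noting $\nO{\Delta_{\mid \cI}}$ involves at most $sd$ nonzeros since $\mA$ has at most $s$ per row over $d$ rows). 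Keeping careful track so that the residual $2\tau\gamma^{-1}\lambda_T\nO{\hAl - \mA}$ survives on the left-hand side is the last bit of care needed. The main obstacle throughout is not any single deep step but the tight constant-tracking needed to land exactly on the stated inequality; conceptually the only non-routine ingredient is replacing the restricted eigenvalue assumption by the (H4) deviation bound plus a sparse net argument, which works here precisely because $\E[\II] = \Vinf$ is positive definite for the Ornstein-Uhlenbeck process.
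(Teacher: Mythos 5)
Your route is essentially the paper's: a basic inequality from the Lasso optimality, $\ell^\infty$-control of $\IW$ via a self-normalized martingale deviation bound and a union bound over the $d^2$ entries (this is exactly how the paper gets the $4e$, $\pi^2/3$ and $\log(2+|\log(\cdot)|)$ terms), a cone condition with $c_0=(\gamma+\tau+1)/(\gamma-\tau-1)$, a restricted-eigenvalue bound deduced from (H4) by a net over sparse directions (giving the stated $T_0$), and a final completion of the square. There is, however, one step that fails as written. Your basic inequality is the \emph{weak} one, obtained by comparing objective values, and it is missing the term $-\nL{(\hAl-\mA)X}^2$ on the right-hand side. The paper instead uses the first-order subgradient condition $\IW+(\hAl-\optA)\II+\lambda\mB=0$, which gives the stronger identity-based bound
\begin{equation*}
\nL{(\hAl-\optA)X}^2-\nL{(\mA-\optA)X}^2+\nL{(\hAl-\mA)X}^2\le 2\sF{\IW,\mA-\hAl}+2\lambda_T\bigl(\nO{\mA}-\nO{\hAl}\bigr).
\end{equation*}
The extra quadratic term $\nL{(\hAl-\mA)X}^2$ on the left is precisely what the last step $2ab-b^2\le a^2$, with $b=\nL{(\hAl-\mA)X}$ and $a=\gamma^{-1}(1+\gamma+\tau)\kappa^{-1}\lambda_T\sqrt{ds}$, uses to eliminate $\nL{(\hAl-\mA)X}$ and produce the remainder $\left(\frac{1+\gamma+\tau}{\gamma\kappa}\right)^2\lambda_T^2 ds$ while keeping the leading constant $1$ in front of the bias. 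With your version the cross term $2a\,\nL{(\hAl-\mA)X}$ has nothing to be absorbed into: for a general competitor $\mA$ it is not comparable to $\nL{(\hAl-\optA)X}^2$, so the ``AM--GM absorbing the quadratic piece'' step does not close (it only would in the special case $\mA=\optA$).

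A second, smaller imprecision: you run the cone argument on the global support $\cI=\supp(\mA)$ of cardinality at most $sd$, but the restricted-eigenvalue property that (H4) plus the net argument delivers concerns vectors of $\R^d$ in the cone $C(s,c_0)$, so it must be invoked row by row. A matrix whose vectorization satisfies the global $sd$-cone constraint need not have each row in the $s$-cone. The paper therefore splits the penalty terms into the per-row quantities $\Delta^i$, keeps only the rows with $\Delta^i>0$ (which are then automatically in $C(s,c_0)$), applies the RE bound to each such row, and reassembles via Cauchy--Schwarz over rows to recover $\sqrt{d}\,\nL{(\hAl-\mA)X}$. Both points are fixable within your strategy, but as stated they are gaps.
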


The proof of Theorem~\ref{th:1norm_empirical_bound} is detailed in Section~\ref{ss:master_theorem} below.
It relies on a Restricted Eigenvalue property, see Theorem~\ref{th:RE} below, which we prove using Assumptions (H1)--(H4), as well as on a deviation property, see Theorem~\ref{th:deviation} from Section~\ref{ss:deviation} below.
Theorem~\ref{th:1norm_empirical_bound} provides a sharp oracle inequality, with leading constant~1 in front of the bias term $\nL{ ( \mA  - \optA ) X }^2$.
The penalization parameter $\lambda$ is a function of the observations through $\II$. 
However, the proof of Theorem~\ref{th:1norm_empirical_bound} uses Equation~\eqref{eq:re_dev} which states that in the same set of events of probability at least $1- \epsilon_0$, we have $\kappa^2 \leq \diagII^{i} \leq \Vinf^{ii} + \kappa^2$ for any $i=1, \ldots, n$.
We can therefore safely bound $\diagII$ from below and above by deterministic constants in the statement of Theorem~\ref{th:1norm_empirical_bound}.

The convergence rate obtained in Theorem~\ref{th:1norm_empirical_bound}  almost matches the minimax lower bound provided in Theorem~\ref{th:sparse_lower_bound} below. 
Indeed, the rate is $\lambda^2 d s$, up to numerical constants, and using the upper bound for $\diagII$ given above, we end up with a convergence rate of order
\begin{equation*}
	\frac{d s(\log d + \log \log T)}{T}.
\end{equation*}
Let us recall that $d s$ is the sparsity of $\optA$, under the row-sparsity~(H3).
The minimax lower bound from Theorem~\ref{th:sparse_lower_bound} is of order $ds \log(d / s) / T$. The only main difference is between the terms $d$ and $d / s$ within the logarithm, and the negligible poly-logarithmic term $\log \log T$.
We conjecture that an exact match (up to constants) between the upper and the minimax lower bound is possible, by considering ordered-$\ell_1$ penalization, also called SLOPE, see~\cite{su2016slope, bellec:16}, where such results are provided for linear regression only.
However, such a development is way beyond the current focus of this paper: the choice of the weights involved in SLOPE is a difficult task in the setting considered here.

The next corollary provides errors bounds on the parameter $\optA$ using different norms.

\begin{corollary}
\label{c:applied_inequalities}
With the same assumptions and notation as in Theorem \ref{th:1norm_empirical_bound}, the following holds with a probability larger than $1-\epsilon_0$:

\begin{enumerate}
\item for the empirical norm:

\beq{eq:empirical_bound}{
\nL{ ( \hAl - \optA ) X } \leq \frac{1 + \gamma}{\gamma \kappa} \lambda_T \sqrt{ds}
}

\item for the $\ell^1$ norm, with $\tau >0$:

\beq{eq:norm1_err}{
\nO{\hAl - \optA} \leq \frac{(1 + \tau + \gamma)^2 }{2 \gamma \tau \kappa^{2}} \lambda_T d s
}

\item for the Frobenius norm:

\beq{eq:norm2_err}{
\nF{\hAl - \optA}  \leq \frac{1 + \gamma}{\gamma \kappa^2} \lambda_T \sqrt{ds}
}

\item for the $\ell^q$ norm, with $q \in [1,2]$ and $\tau>0$:

\beq{eq:normq_err}{
|\hAl - \optA|_q \leq (1 + \tau + \gamma)^{4/q - 2} (1 + \gamma)^{2-2/q} (2\tau)^{1 - 2/q} \gamma^{-1} \kappa^{- 2} \lambda_T (d s)^{1/q}.
}
\end{enumerate}
\end{corollary}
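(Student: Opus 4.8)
The plan is to deduce all four inequalities from the oracle inequality \eqref{eq:norm1_L2_err} by the single specialization $\mA=\optA$. This is admissible since, by Assumption (H3), $\optA$ is itself row-$s$-sparse, and then the bias term $\nL{(\mA-\optA)X}^2$ vanishes. Hence, on the event of probability at least $1-\epsilon_0$ produced by Theorem~\ref{th:1norm_empirical_bound}, one has
\[
2\tau\gamma^{-1}\lambda_T\nO{\hAl-\optA}+\nL{(\hAl-\optA)X}^2\le\Big(\tfrac{1+\gamma+\tau}{\gamma\kappa}\Big)^2\lambda_T^2 ds,
\]
with both summands on the left-hand side nonnegative. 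Every item of the corollary will come from massaging this one line.

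For item~1 I would run Theorem~\ref{th:1norm_empirical_bound} with the parameter $\tau$ set to $0$: the $\ell^1$ term disappears, the right-hand side becomes $\big(\tfrac{1+\gamma}{\gamma\kappa}\big)^2\lambda_T^2 ds$, and taking square roots yields \eqref{eq:empirical_bound}. For item~2 I keep $\tau>0$, discard the (nonnegative) empirical term, and solve for $\nO{\hAl-\optA}$, giving \eqref{eq:norm1_err}. The one bookkeeping point to check is that these two applications — with $\tau=0$ and with the prescribed $\tau>0$ — may be performed on the same event: the event of probability $\ge 1-\epsilon_0$ is assembled from the deviation control on $\IW$ and the concentration of $\II-\Vinf$, neither of which involves $\tau$, and $T\ge T_0$ remains valid since $T_0$ is non-decreasing in $\tau$ through $c_0$.

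For item~3 I would combine item~1 with the restricted-eigenvalue-type lower bound $\nL{\mU X}^2\ge\kappa^2\nF{\mU}^2$, valid on the same event for $\mU=\hAl-\optA$ because the rows of $\hAl-\optA$ satisfy the relevant cone condition; this is precisely the ingredient established inside the proof of Theorem~\ref{th:1norm_empirical_bound} (and isolated as Theorem~\ref{th:RE}). It gives $\nF{\hAl-\optA}\le\tfrac{1+\gamma}{\gamma\kappa^2}\lambda_T\sqrt{ds}$, i.e.\ \eqref{eq:norm2_err}. Item~4 then follows by interpolation: for any matrix $x$ and $q\in[1,2]$ one has $|x|_q\le\nO{x}^{2/q-1}\nF{x}^{2-2/q}$ (log-convexity of $p\mapsto\log|x|_{1/p}$, with $\nF{x}=|x|_2$ entrywise); substituting $x=\hAl-\optA$ together with the bounds of items~2 and~3 and collecting exponents — the powers of $\gamma^{-1}$, of $\kappa^{-2}$ and of $\lambda_T$ each sum to $1$, those of $ds$ to $1/q$, and the numerical prefactor assembles into $(1+\tau+\gamma)^{4/q-2}(1+\gamma)^{2-2/q}(2\tau)^{1-2/q}$ — produces \eqref{eq:normq_err}.

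I do not expect a serious obstacle here: the algebra is routine once $\mA=\optA$ is plugged in. The only load-bearing nontrivial inputs are (a) that $\optA$ is an admissible comparison point, which is Assumption~(H3), and (b) the eigenvalue lower bound $\nL{\mU X}^2\ge\kappa^2\nF{\mU}^2$ on the error matrix, which has to be extracted from the proof of Theorem~\ref{th:1norm_empirical_bound} rather than re-derived. The mild subtlety is obtaining the sharp constant $1+\gamma$ in items~1 and~3 (instead of $1+\gamma+\tau$), which is exactly why one re-invokes the oracle inequality with $\tau=0$ and must note that the underlying high-probability event is independent of $\tau$.
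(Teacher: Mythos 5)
Your treatment of items 1, 2 and 4 is exactly the paper's: take $\mA=\optA$ (admissible by (H3)) in \eqref{eq:norm1_L2_err}, with $\tau=0$ for the empirical bound and $\tau>0$ for the $\ell^1$ bound, then interpolate $\nQ{\mU}^q\le\nO{\mU}^{2-q}\nF{\mU}^{2q-2}$ between \eqref{eq:norm1_err} and \eqref{eq:norm2_err}; your exponent bookkeeping for item 4 checks out, and your remark that the high-probability event does not depend on $\tau$ is a correct (and worth making) observation.

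The gap is in item 3, precisely at the step you wave through. To get $\nF{\hAl-\optA}\le\kappa^{-1}\nL{(\hAl-\optA)X}$ you need \emph{every} row of $\hAl-\optA$ to lie in the cone $C(s,c_0)$, and this is \emph{not} what the proof of Theorem~\ref{th:1norm_empirical_bound} establishes. That proof only shows $\mU^{i,\bullet}\in C(s,c_0)$ for the indices $i$ with $\Delta^i>0$; for an index with $\Delta^i\le 0$ one has, on the contrary, $\nO{\mU^{i,\bullet}_{\mid\bar\cA^{i,\bullet}}}\ge c_0\nO{\mU^{i,\bullet}_{\mid\cA^{i,\bullet}}}$, i.e.\ the row is a priori \emph{outside} the cone, and Theorem~\ref{th:RE} gives no lower bound on $\nL{(\mU^{i,\bullet})^\top X}/\nT{\mU^{i,\bullet}}$ there. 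The paper closes this by a separate argument: for each fixed $i$ it re-runs the basic inequality with the comparison matrix equal to $\hAl$ on all rows except the $i$-th, where it is set to $\optA$; this yields $2\lambda\Delta^i\ge 2\nL{(\mU^{i,\bullet})^\top X}^2\ge 0$, hence $\Delta^i\ge 0$ for every $i$, and only then does every row land in $C(s,c_0)$. Your proof needs this (or an equivalent) argument; as written, the assertion that the cone condition for all rows is "precisely the ingredient established inside the proof of Theorem~\ref{th:1norm_empirical_bound}" is not correct, and item 3 (hence also the Frobenius factor in item 4) does not yet follow.
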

All these inequalities are consequences of~Equation~\eqref{eq:norm1_L2_err}, and are proved in Section~\ref{ss:master_theorem}.
The next Theorem is a minimax lower bound over row-sparse matrices, for the considered model.


\begin{theo}
\label{th:sparse_lower_bound}
For some constants $c>0$ and $c'>0$, we have:
\beq{eq:th_lower_bound}{
\inf_{\hat{\mA}} \sup_{\mA \in \Gamma_s} \Espq{\mA}{ \nF{\widehat \mA - \mA}^2} \geq \frac{c'ds \log (cd/s)}{T},
}
where $\Gamma_s$ is the set of row-$s$-sparse matrices and the infimum is taken over all possible estimators.
\end{theo}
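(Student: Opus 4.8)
The plan is to invoke a standard Fano-type minimax lower bound, whose two ingredients are a Kullback--Leibler bound between the path laws under two different drifts, and the construction of a sufficiently rich packing of $\Gamma_s$.

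\textbf{KL control.} Fix the reference drift $\id$; then $\Sp(\id)=\{1\}$ has positive real parts and $\Vinf(\id)=\tfrac12\id$. For $\mA=\id+\mB$ with $\mB\in\Gamma_s$ and $\nF{\mB}$ below a suitable absolute constant, Assumption~(H1) still holds, and since $\Vinf(\mA)$ solves the Lyapunov equation $\mA\Vinf(\mA)+\Vinf(\mA)\mA^\top=\id$, a perturbation/compactness argument gives $\Vinf(\mA)\preceq\id$. For two such drifts $\mA,\widetilde\mA$, Girsanov's theorem writes $\log\frac{\dd\dP_{\mA}}{\dd\dP_{\widetilde\mA}}$ on $\cF_T$ as a stochastic integral, plus a finite-variation term, plus the log-ratio of the stationary initial Gaussian densities; taking the expectation under $\dP_{\mA}$, where the stochastic integral is a martingale, and using the stationarity identity $\Espq{\mA}{X_tX_t^\top}=\Vinf(\mA)$, yields
\[
\mathrm{KL}\bigl(\dP_{\mA}\,\|\,\dP_{\widetilde\mA}\bigr)=\frac{T}{2}\,\sF{(\mA-\widetilde\mA)\Vinf(\mA),\,\mA-\widetilde\mA}+\OO{\nF{\mA-\widetilde\mA}^2}\le T\,\nF{\mA-\widetilde\mA}^2 ,
\]
the last inequality holding for $T$ large enough, the $\OO{\cdot}$ term (from the two stationary initial laws) being of lower order. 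So only the Frobenius geometry of the perturbations matters; this is the same computation that underlies the log-likelihood~\eqref{eq:logL}.

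\textbf{Packing.} We need $\mB_0,\dots,\mB_M\in\Gamma_s$ with entries in $\{0,\delta\}$ such that $\log M\ge c_1\,ds\log(cd/s)$ and $c_2\,ds\,\delta^2\le\nF{\mB_j-\mB_k}^2\le 2\,ds\,\delta^2$ for all $j\ne k$, with $\nF{\mB_j}$ below the threshold of the previous step. The delicate point is the lower bound on $\nF{\mB_j-\mB_k}^2$: a plain product over the $d$ rows of a Varshamov--Gilbert packing of $s$-sparse vectors in $\R^d$ (each such packing of cardinality $\exp(\Theta(s\log(d/s)))$ with pairwise Hamming distance $\gtrsim s$) attains the right log-cardinality, but two matrices differing in a single row are only $\Theta(s\delta^2)$ apart, which loses the factor $d$. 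The fix is to compose the per-row packing, viewed as an alphabet of size $q=\exp(\Theta(s\log(d/s)))$, with an outer code over that alphabet, of length $d$, constant rate and constant relative distance (Gilbert--Varshamov for $q$-ary codes); any two resulting matrices then differ in $\gtrsim d$ rows, hence in $\gtrsim ds$ of their at most $2ds$ active entries. (One may equivalently cite a ready-made packing lemma for row/block-sparse matrices.) Choosing $\delta^2:=\theta\log(cd/s)/T$ for a small absolute $\theta$ keeps $\nF{\mB_j}$ small in the regime $T\gtrsim ds\log(cd/s)$ and forces, by the KL bound, $\max_{j,k}\mathrm{KL}(\dP_{\id+\mB_j}\|\dP_{\id+\mB_k})\le 2T\,ds\,\delta^2 = 2\theta\,ds\log(cd/s)\le\tfrac{1}{16}\log M$.

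\textbf{Conclusion and obstacle.} A standard Fano-type bound (see, e.g., the monograph of Tsybakov) applied to $\{\id+\mB_j\}_{j=0}^{M}\subseteq\Gamma_s$, with separation $\rho^2\asymp ds\,\delta^2$, gives $\inf_{\widehat\mA}\sup_{\mA\in\Gamma_s}\Espq{\mA}{\nF{\widehat\mA-\mA}^2}\gtrsim ds\,\delta^2\asymp ds\log(cd/s)/T$. When $T$ lies below the threshold used above, the right-hand side is of constant order and the conclusion follows from a crude two-point comparison of two well-separated stationary OU laws. The main obstacle is the packing step: producing a family of row-$s$-sparse matrices that is simultaneously of log-size $\asymp ds\log(d/s)$ \emph{and} has every pairwise Frobenius distance of the full order $ds\,\delta^2$ — the naive tensorised construction falls short by a factor $d$, so one genuinely needs the two-level (concatenated-code) construction, or an appropriate block-sparse packing lemma. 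The Girsanov/KL computation and the uniform control of $\Vinf(\mA)$ over the family are routine by comparison.
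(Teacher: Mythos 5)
Your overall strategy (Tsybakov's method: a combinatorial packing of row-sparse matrices, a KL computation for the path laws, then Fano) is the same as the paper's, but both key ingredients are realized differently, and one of the two differences creates a real gap. On the packing, your diagnosis of the naive row-by-row product (log-cardinality right, pairwise separation short by a factor $d$) is exactly the difficulty the paper also has to overcome; your fix via a concatenated code (inner Varshamov--Gilbert packing of $s$-sparse rows, outer $q$-ary Gilbert--Varshamov code of length $d$ with constant rate and relative distance) is sound and standard for block-sparse lower bounds. The paper instead builds the packing from adjacency matrices of $r$-regular graphs (counted via McKay's asymptotics) followed by a single Gilbert--Varshamov extraction; the point of that more rigid construction is not combinatorial but analytic, as explained next.

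The substantive difference is the choice of perturbation family. You perturb $\id$ by generic nonnegative row-sparse matrices and control $\Vinf(\mA)$ by a perturbation argument, which forces $\nF{\mB_j}$ to be below an absolute constant and hence requires $T\gtrsim ds\log(cd/s)$. The paper takes $\mA=\tfrac12\id+w\mB$ with $\mB$ \emph{antisymmetric} (this is why regular graphs are used: orienting the edges gives antisymmetry), so that $\Vinf(\mA)=\id$ \emph{exactly} for every $w$, the spectrum always has real part $\tfrac12$ (so (H1) holds with no smallness condition), all hypotheses share the same stationary initial law (so the initial-density term in the KL vanishes identically), and $KL(\dP_{\mA},\dP_{\mA'})=\tfrac{T}{2}\nF{\mA-\mA'}^2$ exactly. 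This makes the paper's bound uniform in $T$. Your patch for the complementary regime $T\lesssim ds\log(cd/s)$ does not work: there the claimed lower bound $ds\log(cd/s)/T$ can be arbitrarily large, not merely "of constant order", and a two-point argument with bounded KL can only certify a separation of order $1/T$, far below what is needed. To close this you would either have to restrict the theorem to large $T$, or adopt something like the paper's antisymmetric construction, which removes the smallness constraint on the perturbation altogether. Apart from this regime, your argument is correct, and the concatenated-code packing is a legitimate alternative to the regular-graph count.
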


The proof of Theorem \ref{th:sparse_lower_bound} above is in Section \ref{ss:lower_bound}. It uses the approach from \cite{tsybakov:2008}, where we construct a set of matrices that are separated enough in Frobenius norm but close enough in terms of the resulting probability densities. 
For this, we need a set of row-$s$-sparse matrices that are invertible, that we create using regular graph adjacency matrices. 
The complexity of this set is controlled thanks to precise combinatorial results, such as the ones from~\cite{mckay:1991}.


Finally, we present an interesting by-product of the proof of 
Theorem~\ref{th:1norm_empirical_bound}. 
Theorem~\ref{th:RE} below expresses that a Restricted Eigenvalue condition, see~\cite{bickel2009}, is, quite surprisingly, satisfied in the case of the Ornstein-Uhlenbeck drift estimation, while it is well-known to be a mandatory assumption for the linear regression model, see~\cite{zhang2014lower}, when one wants to prove optimal convergence rates for polynomial-time sparsity inducing algorithms, such as $\ell_1$ penalization.

\begin{theo}
\label{th:RE}
Assume (H1) -- (H4). Set $s \leq d$ and $c_0 >0$. Define $C(s,c_0) :=  \{ \vect{u} \in \R^d: \nO{\vect{u}} \leq (1 + c_0) \nO{\vect{u}_{\mid \mathcal{I}_s(\vect{u})}} \}$ where $\mathcal{I}_s(u)$ stands for the set of indices of the $s$ largest entries of $|u|$. 
Consider $\epsilon_0 \in (0,1)$ and $T_0$ given in Theorem~\ref{th:1norm_empirical_bound}. 
Then, for any $T \geq T_0$, we have
\beq{eq:RE}{
\Prob{\inf_{\vect{u} \in C(s,c_0)} \frac{\nL{\vect{u}^\top X}}{\nT{\vect{u}}} \geq \kappa } \geq 1 - \frac{\epsilon_0}{2}.
}
\end{theo}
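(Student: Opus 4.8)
The plan is to reduce the statement to a covering/concentration argument on the cone $C(s,c_0)$, exploiting that for $\mA_0$ symmetric the matrix $\Vinf = \Vinf(\mA_0)$ has $\min\Sp(\Vinf) = 2\kappa^2$, so that $\vect{u}^\top \Vinf \vect{u} \geq 2\kappa^2 \nT{\vect{u}}^2$ for every $\vect{u}$. Since $\E[\II] = \Vinf$ and $\nL{\vect{u}^\top X}^2 = \vect{u}^\top \II \vect{u}$, the quantity $\nL{\vect{u}^\top X}^2 / \nT{\vect{u}}^2$ concentrates around something at least $2\kappa^2$, and we want to show it stays above $\kappa^2$ uniformly over the cone with high probability. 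So it suffices to control $\sup_{\vect{u} \in C(s,c_0),\, \nT{\vect{u}}=1} \vect{u}^\top (\Vinf - \II) \vect{u}$ and show it is at most $\kappa^2$ with probability $\geq 1 - \epsilon_0/2$.

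First I would reduce the cone to a finite problem: vectors $\vect{u}$ in $C(s,c_0)$ with $\nT{\vect{u}}=1$ satisfy $\nO{\vect{u}} \leq (1+c_0)\nO{\vect{u}_{\mid \mathcal I_s(\vect{u})}} \leq (1+c_0)\sqrt{s}$, so $C(s,c_0) \cap \{\nT{\vect{u}}=1\}$ is contained in the convex hull of a manageable set; more precisely, by a standard argument (e.g.\ the $\ell_1$-vs-$\ell_2$ comparison used in Lasso proofs, or Maurey-type sparsification), it is contained in, say, $3\,\mathrm{conv}(\{\vect{v} : \nT{\vect{v}}=1,\ \nZ{\vect{v}} \leq s'\})$ for $s' \asymp (1+c_0)^2 s$. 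Since $\vect{u} \mapsto \vect{u}^\top(\Vinf - \II)\vect{u}$ is a quadratic form, its supremum over the cone is bounded (up to the constant $3^2=9$, which matches the $9(c_0+2)^2$ appearing in $T_0$) by its supremum over $s'$-sparse unit vectors. On each fixed $s'$-subset of coordinates I would place a $\delta$-net of the unit sphere in $\R^{s'}$ of cardinality $(1 + 2/\delta)^{s'}$, and use that for a quadratic form a $\delta$-net controls the full sphere at the price of a factor $(1-2\delta)^{-1}$ or so; taking $\delta$ a small absolute constant converts "sup over sphere" into "max over a net" of total size $\binom{d}{s'}(C/\delta)^{s'} \leq (C e d / s')^{s'}$, whose logarithm is $\lesssim s \log(d/s) + \text{const}$, exactly the combinatorial term in $T_0$.

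Then I would apply Assumption (H4): for each fixed unit $\vect{u}$ in the net, $\Prob{|\vect{u}^\top(\II - \Vinf)\vect{u}| \geq R} \leq 2\exp(-T H(R))$ with $R = \kappa^2/(9(c_0+2)^2)$ (the value chosen so that after the factor-$9$ cone blow-up we recover the needed $\kappa^2$). A union bound over the net gives failure probability at most $2 (Ced/s')^{s'} \exp(-T H(R))$, and the hypothesis $T \geq T_0 = H(R)^{-1}(s\log(21d \wedge 21ed/s) + \log(4/\epsilon_0))$ is precisely what makes this $\leq \epsilon_0/2$. Unwinding: on this event, $\vect{u}^\top \II \vect{u} \geq \vect{u}^\top \Vinf \vect{u} - \kappa^2 \nT{\vect{u}}^2 \geq 2\kappa^2\nT{\vect{u}}^2 - \kappa^2\nT{\vect{u}}^2 = \kappa^2 \nT{\vect{u}}^2$ for all $\vect{u} \in C(s,c_0)$, i.e.\ $\nL{\vect{u}^\top X} \geq \kappa \nT{\vect{u}}$, which is \eqref{eq:RE}.

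The main obstacle is the covering step: getting the cone-to-sparse-vectors reduction with the \emph{right} constant (the $9(c_0+2)^2$ in $T_0$ strongly suggests a specific, slightly delicate "peeling" of $C(s,c_0)$ into an $s$-sparse head plus a tail that is redistributed, rather than the crude convex-hull bound), and simultaneously controlling the net cardinality so that its log is genuinely $O(s\log(d/s))$ and not $O(s\log d)$ — matching the $21d \wedge 21ed/s$ inside the logarithm requires the sharper $\binom{d}{s} \leq (ed/s)^s$ accounting together with a careful choice of net radius $\delta$. Everything else (the symmetric-case identity $\min\Sp\Vinf = 2\kappa^2$, the quadratic-form net comparison, the union bound, and plugging in (H4)) is routine once that reduction is set up.
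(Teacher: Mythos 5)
Your proposal follows essentially the same route as the paper: reduce $\nL{\vect{u}^\top X}^2$ to the quadratic form $\vect{u}^\top \II \vect{u}$, pass from the cone $C(s,c_0)$ to sparse unit vectors at the cost of the factor $3(c_0+2)^2$, discretize each sparse sphere with a net of cardinality $21^s\bigl(d^s \wedge (ed/s)^s\bigr)$, union-bound using (H4) at level $R=\kappa^2/(9(c_0+2)^2)$, and conclude from $\min \Sp(\Vinf)=2\kappa^2$. The paper delegates the cone-reduction and covering steps to Lemmas F.1--F.3 of the supplement to Basu and Michailidis rather than reproving them, but the structure, constants, and use of (H4) match yours exactly.
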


The proof of Theorem~\ref{th:RE} is given in Section~\ref{ss:RE} and uses explicitly Assumption~(H4), which is proved in Theorem~\ref{th:H4}, see Section~\ref{ss:deviation_hypothesis}, for a symmetric $\optA$. 
We can interpret it equivalently as a lower bound on $\tr(\mA \II \mA^\top)$ (see Lemma~\ref{l:L2_quadraticform} from Section~\ref{ss:RE}), hence as a RE property for $\II$ over row-$s$-sparse matrices $\mA$. 
Observe that the values of $\kappa$ and $\epsilon_0$ are independent on $s$ and $c_0$ and the validity of Equation \eqref{eq:RE} depends on $s,c_0$ solely through the condition $T \geq T_0(s,c_0)$. In other words, the validity of a Restricted Eigenvalue property in our model is achieved as long as $T$ is large enough.

\section{Asymptotic oracle properties for Adaptive Lasso}
\label{s:asymptotic}

The MLE is asymptotically optimal, as observed with the asymptotic normality property from Equation~\eqref{eq:asymptotic_normality}. 
In this Section we derive similar properties for the $\ell^1$-penalized estimator.
Furthermore, another desirable property from a sparsity-inducing estimator is consistency in variable selection~\cite{buhlmann2011statistics}. 
We define it by the property that the support of a $\supp(\widehat \mA)$ converges to the support of the true parameter $\supp(\optA)$.
It is known in the context of Gaussian linear regression that the Lasso cannot satisfy both properties with the same parameter
 $\lambda$, see~\cite{zou:06} while the Adaptive Lasso does. 
The Adaptive Lasso in our context is defined as 
\beq{def:adalasso}{
\hAla = \argmin_{\mA  \in \R^{d \times d}} \cL_T(\mA) + \lambda \nO{\mA \circ |\hA|^{-\gamma}},
}
for fixed positive parameters $\lambda$ and $\gamma$, where $\circ$ stands for the Hadamard product, and $|\hA|^{-\gamma}$ stands for the matrix obtained by computing entrywise the absolute value, and exponentiation by $-\gamma$ of the MLE estimator~\eqref{eq:mle_estimator}.
The idea of the Adaptive Lasso, involving a penalization level proportional to the entries of $|\hA|^{-\gamma}$ (any $\sqrt T$-consistent estimator can be used theoretically), is to penalize more the entries expected to be actually zeros (trusting the MLE) and to penalize less those expected to be non-zero. 
Note that the MLE entries are non-zero almost surely. 

Note that many quantities, such as $\lambda$ and estimators $\hA$ and $\hAla$, implicitly depend on $T$, and that we consider in this section asymptotics $T \rightarrow +\infty$.


\begin{theo}
\label{th:adalasso_oracle}
Assume (H1) -- (H2). Fix $\gamma >0$ and assume that $\lambda$ verifies $\lambda T^{1/2} \to 0$ and $\lambda T^{(\gamma+1)/2} \to +\infty$ when $T \to +\infty$. Then, we have the following properties.

\begin{enumerate}
\item Consistency of the variable selection: as $T \to +\infty$, we have
\beq{eq:oracle_selection}{
\Prob{\supp (\hAla) = \supp(\optcA)} \rightarrow 1.
}
\item Asymptotic normality: as $T \to +\infty$, we have
\beq{eq:oracle_normality}{
\sqrt{T} \left( (\hAla)_{\mid \optcA} - (\optA)_{\mid \optcA} \right) \xrightarrow{\cL} \cNN{0,\left(( \Vinf \otimes \id )_{\mid \optcA \times \optcA}\right)^{-1}},
}
where $\optcA = \supp(\optA)$ is the support of the parameter $\optA$.
\end{enumerate}
\end{theo}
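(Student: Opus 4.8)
The plan is to follow the classical two-stage argument for the Adaptive Lasso of Zou~\cite{zou:06}, adapted to the continuous-time Ornstein-Uhlenbeck setting where the role of the Gram matrix is played by $\II$ and the score term by $\IW$. Throughout we work coordinate-wise in $\vc$ notation, writing the penalized objective as $\cL_T(\mA) + \lambda \nO{\mA \circ |\hA|^{-\gamma}}$, and we use two key facts established earlier: the consistency $\II \to \Vinf$ almost surely (a consequence of ergodicity under (H1)--(H2)), and the asymptotic normality of the MLE, namely $\sqrt T\,(\vc\,\hA - \vc\,\optA) \xrightarrow{\cL} \cNN{0, \Vinf^{-1}\otimes\id}$ from~\eqref{eq:asymptotic_normality}. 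In particular $\hA \to \optA$ in probability, so $|\hA^{ij}|^{-\gamma} \to |\optA^{ij}|^{-\gamma}$ (a finite positive constant) for $(i,j) \in \optcA$, while $|\hA^{ij}|^{-\gamma}$ blows up at rate $T^{\gamma/2}$ for $(i,j) \notin \optcA$ since then $\hA^{ij} = O_{\dP}(T^{-1/2})$.

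First I would prove asymptotic normality on the support together with a $\sqrt T$-rate. Introduce the rescaled local parameter $\mat U = \sqrt T(\mA - \optA)$ and, using the quadratic structure~\eqref{eq:logL_matrix}, write the objective as a function of $\mat U$: the smooth part becomes $\tfrac12 \tr(\mat U \II \mat U^\top) - \sqrt T\, \tr(\mat U^\top \IW) + \text{const}$, where $\sqrt T\,\IW \xrightarrow{\cL} \cNN{0,\Vinf\otimes\id}$ by the martingale CLT (the quadratic variation of $t\,\mat\varepsilon_t$ being $t\,\II\otimes\id \to t\,\Vinf\otimes\id$). The penalty difference $\lambda(\nO{(\optA + \mat U/\sqrt T)\circ|\hA|^{-\gamma}} - \nO{\optA\circ|\hA|^{-\gamma}})$ is handled entry by entry: for $(i,j)\in\optcA$ it contributes $\lambda \sqrt T \cdot T^{-1/2} \cdot |\hA^{ij}|^{-\gamma}\,\mathrm{sgn}(\optA^{ij})\,\mat U^{ij}(1+o(1))$, which is $O_\dP(\lambda\sqrt T) = o_\dP(1)$ since $\lambda T^{1/2}\to 0$; for $(i,j)\notin\optcA$ it contributes $\lambda |\hA^{ij}|^{-\gamma}|\mat U^{ij}| = \lambda T^{\gamma/2}|\mat U^{ij}|\cdot O_\dP(1)$ which, since $\lambda T^{(\gamma+1)/2}\to\infty$, i.e.\ $\lambda T^{\gamma/2}\to\infty$ (after dividing by $\sqrt T \to \infty$... more precisely $\lambda T^{\gamma/2} = \lambda T^{(\gamma+1)/2}/\sqrt T$, and one checks this indeed diverges under the stated hypotheses once combined with the rate argument), forces the minimizing $\mat U^{ij}$ to be $0$ in the limit. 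Thus by the standard epi-convergence / argmin-convergence lemma for convex functions (Geyer, or Knight--Fu~\cite{}—but since that reference may not be available I would instead invoke basic convexity: the limiting objective is $V(\mat U) = \tfrac12\tr(\mat U_{\mid\optcA}(\Vinf\otimes\id)_{\mid\optcA\times\optcA}\mat U_{\mid\optcA}^\top) - \langle \mat U_{\mid\optcA}, \mat Z\rangle + \infty\cdot\one{\mat U_{\mid\optcA^c}\neq 0}$ with $\mat Z \sim \cNN{0,(\Vinf\otimes\id)_{\mid\optcA\times\optcA}}$), the argmin converges in distribution to $(\Vinf\otimes\id)_{\mid\optcA\times\optcA}^{-1}\mat Z \sim \cNN{0,(\Vinf\otimes\id)_{\mid\optcA\times\optcA}^{-1}}$, yielding~\eqref{eq:oracle_normality}.

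Second, for the selection consistency~\eqref{eq:oracle_selection}, the normality result already gives $\Prob{\optcA \subseteq \supp(\hAla)}\to 1$ would be false in general (a consistent estimator need not keep all true entries nonzero), so instead: from the argmin-convergence I get $\sqrt T((\hAla)_{\mid\optcA} - \optA_{\mid\optcA}) = O_\dP(1)$ and $(\hAla)^{ij} \to 0$ for $(i,j)\notin\optcA$. For the ``no false positives'' direction I would argue via the KKT (subdifferential) conditions of~\eqref{def:adalasso}: if $(i,j)\notin\optcA$ and $(\hAla)^{ij}\neq 0$, then the stationarity condition reads $[\IW + (\hAla - \optA)\II]^{ij} = -\lambda|\hA^{ij}|^{-\gamma}\mathrm{sgn}((\hAla)^{ij})$; the left side is $O_\dP(T^{-1/2})$ by the rate on the support and the CLT for $\IW$ and $\II\to\Vinf$, while the right side has magnitude $\lambda T^{\gamma/2}\cdot O_\dP(1)$ which dominates $T^{-1/2}$ precisely when $\lambda T^{(\gamma+1)/2}\to\infty$; this contradiction shows $\Prob{(i,j)\in\supp(\hAla)}\to 0$, and a union bound over the finitely many such $(i,j)$ gives $\Prob{\supp(\hAla)\subseteq\optcA}\to 1$. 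For the ``no false negatives'' direction, since $(\hAla)^{ij} = \optA^{ij} + O_\dP(T^{-1/2})$ and $\optA^{ij}\neq 0$ is a fixed constant for $(i,j)\in\optcA$, we have $\Prob{(\hAla)^{ij}\neq 0}\to 1$; another finite union bound combines the two inclusions.

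The main obstacle I anticipate is making the argmin-convergence rigorous in the continuous-time setting: one must verify the joint convergence of $(\sqrt T\,\IW, \II)$ to $(\mat Z, \Vinf)$ (the martingale CLT with the quadratic-variation stabilization $\II\otimes\id\to\Vinf\otimes\id$ doing the work, together with Slutsky), and then invoke a convex-argmin continuity result carefully, since the limiting objective is a random convex function taking the value $+\infty$ off the support subspace. A clean way to sidestep delicate epi-convergence machinery is: (i) first restrict attention to the event $\{(\hAla)_{\mid\optcA^c} = 0\}$—show it has probability $\to 1$ by the KKT argument above applied to a \emph{candidate} oracle estimator and a standard ``the oracle estimator is a solution with probability tending to one'' argument; (ii) on that event $\hAla$ solves a low-dimensional smooth strictly convex problem on $\R^{\optcA}$ whose unique minimizer has an explicit closed form perturbing the MLE, to which the ordinary delta-method/Slutsky applies directly. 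This reduces everything to finite-dimensional asymptotics plus one probabilistic containment lemma, which is the technically cleanest route.
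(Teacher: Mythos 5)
Your proposal follows essentially the same route as the paper: rescale by $\mU = \sqrt{T}(\mA-\optA)$, obtain the limit of the smooth part via the ergodic theorem plus the martingale CLT with quadratic-variation stabilization, show the penalty increment vanishes on $\optcA$ and diverges off $\optcA$, pass to the argmin of the limiting convex objective (which is $+\infty$ off the support subspace), and handle the ``no false positives'' direction through the KKT stationarity condition with the divergence of $\lambda |\hA^{ij}|^{-\gamma}\sqrt{T}$.

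One bookkeeping slip is worth fixing: for $(i,j)\notin\optcA$ you state the penalty contribution as $\lambda|\hA^{ij}|^{-\gamma}|\mU^{ij}| = \lambda T^{\gamma/2}\cdot O_{\dP}(1)\cdot|\mU^{ij}|$, and the factor $\lambda T^{\gamma/2} = \lambda T^{(\gamma+1)/2}/\sqrt{T}$ is \emph{not} guaranteed to diverge under the hypotheses (take $\lambda = T^{-(\gamma+1)/2}\log T$: then $\lambda T^{1/2}\to 0$ and $\lambda T^{(\gamma+1)/2}\to\infty$, yet $\lambda T^{\gamma/2}\to 0$). The resolution is that, once the objective is multiplied by $T$ to match your normalization of the smooth part, the off-support increment is $\lambda T\cdot|\hA^{ij}|^{-\gamma}\cdot|\mU^{ij}|/\sqrt{T} = \lambda T^{(\gamma+1)/2}\,|\sqrt{T}\hA^{ij}|^{-\gamma}\,|\mU^{ij}|$, whose coefficient diverges directly by the assumption $\lambda T^{(\gamma+1)/2}\to\infty$ together with the tightness of $\sqrt{T}\hA^{ij}$; this is exactly how the paper closes the step, and with that correction your argument is complete.
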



The proof of Theorem~\ref{th:adalasso_oracle} is in Section \ref{ss:asymptotic}.
It expresses two crucial asymptotic behaviors of the Adaptive Lasso for the Ornstein-Uhlenbeck drift estimation. 
The first point shows that Adaptive Lasso can be reasonably used for 
support recovery of the drift parameter, whenever $T$ is large enough.
The second point proves that the Adaptive Lasso shares the property of asymptotic efficiency with the MLE, over the support of the true parameter.

\section{Numerical results}
\label{s:numerical}

This Section proposes numerical experiments, both on simulated and real datasets, that confirm our theoretical findings.
We start in Figure~\ref{f:comparison} with an illustration of estimation results using MLE, Lasso and Adaptive Lasso, where 
the advantage of penalized methods can be seen at a first glance.
The penalization level $\lambda$ of all estimators are tuned using a cross-validation procedure described in Section~\ref{ss:lambda_asymptotics} below.
\begin{figure}[htbp]
\centering
\includegraphics[trim={6.6cm 1cm 2cm 0.5cm},clip,width=.9\textwidth]{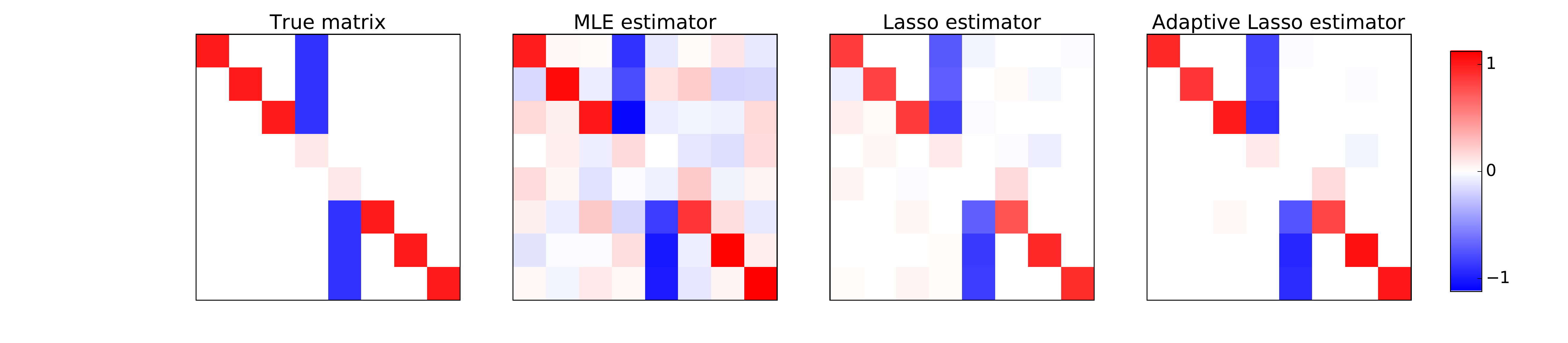}
\caption{MLE, Lasso and Adaptive Lasso estimates compared to the ground truth matrix. The Lasso shows a significant improvement over the MLE, and the Adaptive Lasso further improves on the Lasso, especially in terms of support recovery. All penalization parameters are obtained through cross-validation.}
\label{f:comparison}
\end{figure}

In the next sections we verify this observation using repeated experiments in different settings, in order to see the impact on estimation performance of the observation length $T$ and the dimension of the process $d$ (Section~\ref{ss:T_impact}).
The support recovery ability of Lasso and Adaptive Lasso are illustrated in Section~\ref{ss:support_recovery} and a brief analysis of the issue of trajectory discretization is discussed in Section~\ref{ss:ts_impact}. 
An application to real-world financial data is proposed in Section~\ref{ss:data}.
In all our experiments, we use a time-step equal to $10^{-2}$ for the discretization of the continuous trajectories, 
see Section~\ref{ss:ts_impact} for details.

\subsection{Cross-validation for selection of $\lambda$}
\label{ss:lambda_asymptotics}

The Lasso and the Adaptive Lasso use a parameter $\lambda$ that must be fixed by the user.
Our theoretical results suggest a value for $\lambda$ for the Lasso, see Equation~\eqref{def:lambda}. 
However, theoretical penalization parameters are known to be very pessimistic, in the sense that they are typically too large in most situations, see for instance~\cite{buhlmann2011statistics}. 
We propose instead to tune $\lambda$ through cross-validation.

In our setting, we implement cross-validation by using the first $80\%$ of the trajectory as the training set and the remaining $20\%$ as the validation set, in the following way.
\begin{align}
\hAl_\lambda &= \argmin_{\mA \in \R^{d \times d}} \cL_{.8 T}(\mA) + \lambda \nO{\mA}, \\
\hat \lambda &= \argmin_{\lambda \geq 0} \cL_{[.8 T,T]}(\hAl_\lambda),
\label{eq:cv_lambda}
\end{align}
where $\cL_{[.8 T,T]}$ is the negative log-likelihood constructed from the interval $[.8 T, T]$. 
The cross-validated Lasso is then $\ha_{\hat \lambda}$, and we define similarly the cross-validated adaptive Lasso.
In the next sections, referring to Lasso and Adaptive Lasso will always correspond to the Lasso and Adaptive Lasso with cross-validated $\lambda$.
Note that the selection of $\lambda$ is performed in a grid on a logarithmic scale between $10^{-2}$ and $10^3$, in all our experiments.
 


\subsection{Influence of the observation length $T$ and of the dimension $d$}
\label{ss:T_impact}

In Figures~\ref{f:d_impact_s} and~\ref{f:T_impact}, we plot the average Frobenius norm estimation error of MLE, Lasso and Adaptive Lasso, respectively as a function of the dimension~$d$, and as a function of the sample size $T$.
The bold lines and the shaded areas correspond respectively to the means and standard deviations of the errors obtained over 100 independent simulated trajectories.
The ground truth parameter $\optA$ is chosen as a random matrix with sparsity equal to $0.2d$, with non-zero entries equal to $\pm 1$.
Such a matrix is displayed for $d = 80$ on the left-hand size of Figure~\ref{f:d_impact_s}.
Note that all $y$-axis are on a logarithmic scale.

In Figure~\ref{f:d_impact_s}, we observe the deterioration of the estimation error with an increasing dimension~$d$. 
We observe also that penalized procedures perform much better than the MLE, 
but that slopes are very close: this comes from the fact that the row sparsity is fixed to $0.2 \times d$ leading to a $0.2 \times d^2$ overall sparsity, which is not much smaller than the dense case $d^2$ for the range of values of $d$ considered in the experiment.

In Figure~\ref{f:T_impact}, we observe the improvement of the estimation error with an increasing sample size $T$.
We observe that the curves are consistent with a common convergence rate of order $\sqrt{T}$, but that penalized estimates constantly outperform the MLE.

\begin{figure}[htbp]
\centering
\begin{subfigure}{0.26\textwidth}
\centering
\includegraphics[trim={1cm 0.3cm 1cm 0.3cm},clip,height=.96\textwidth]{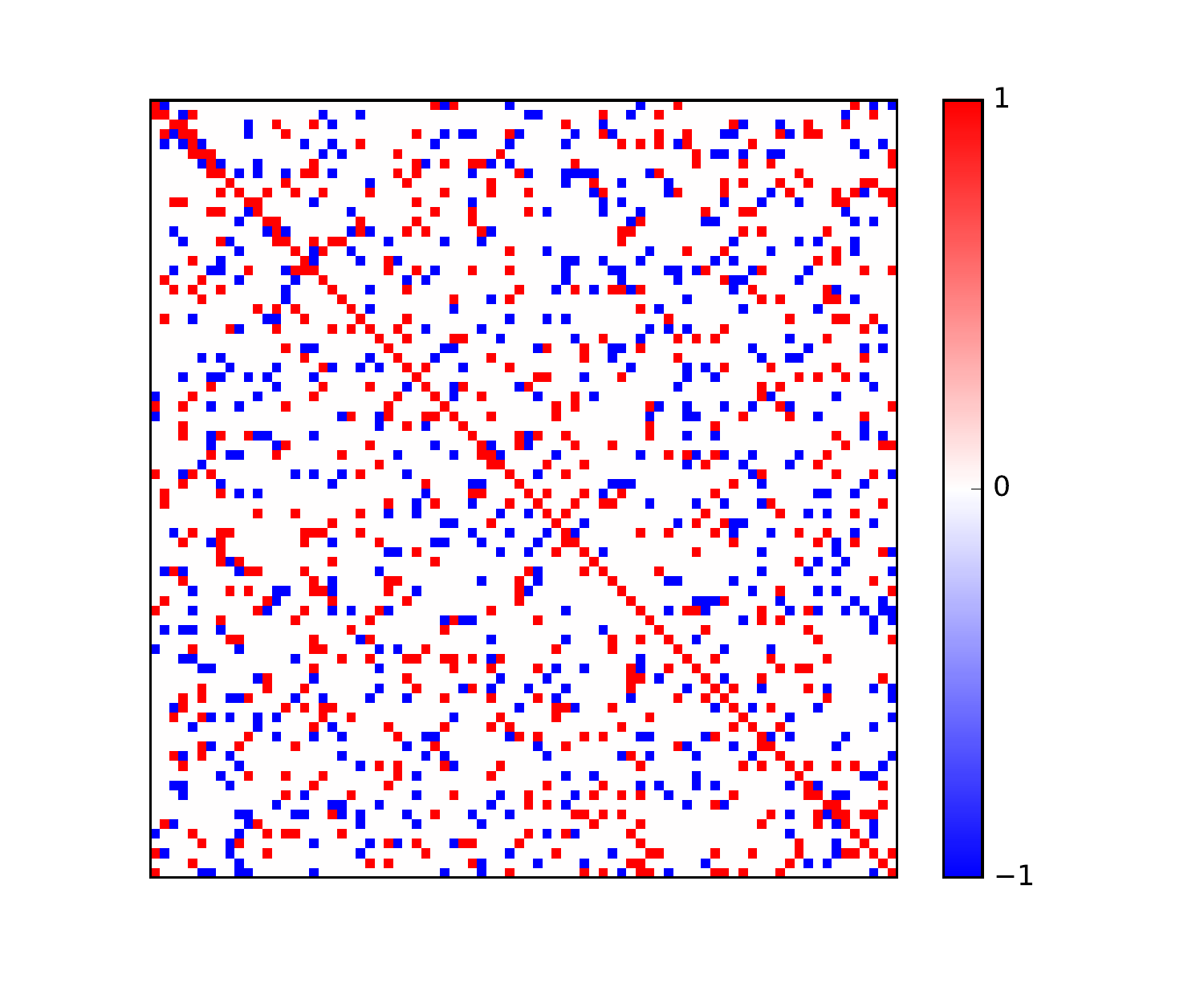}
\caption{Ground truth $\optA$}
\label{f:BWR_A80}
\end{subfigure}
\hfill{}
\begin{subfigure}{0.3\textwidth}
\centering
\includegraphics[trim={1cm 0.3cm 1cm 0.3cm},clip,height=.8\textwidth]{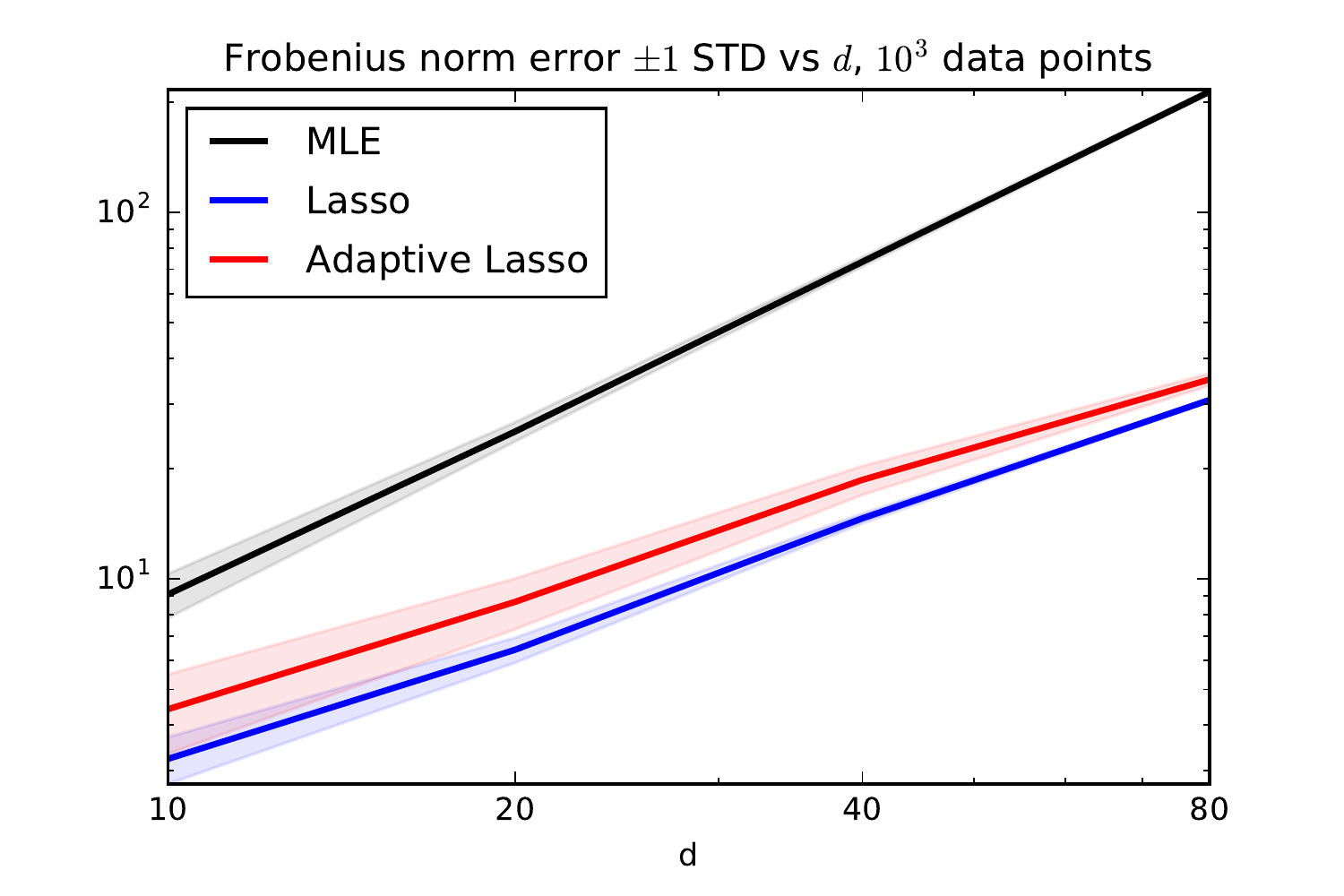}
\caption{Errors with $T=10$}
\label{f:matrix}
\end{subfigure}
\hfill{}
\begin{subfigure}{0.3\textwidth}
\centering
\includegraphics[trim={1cm 0.3cm 1cm 0.3cm},clip,height=.8\textwidth]{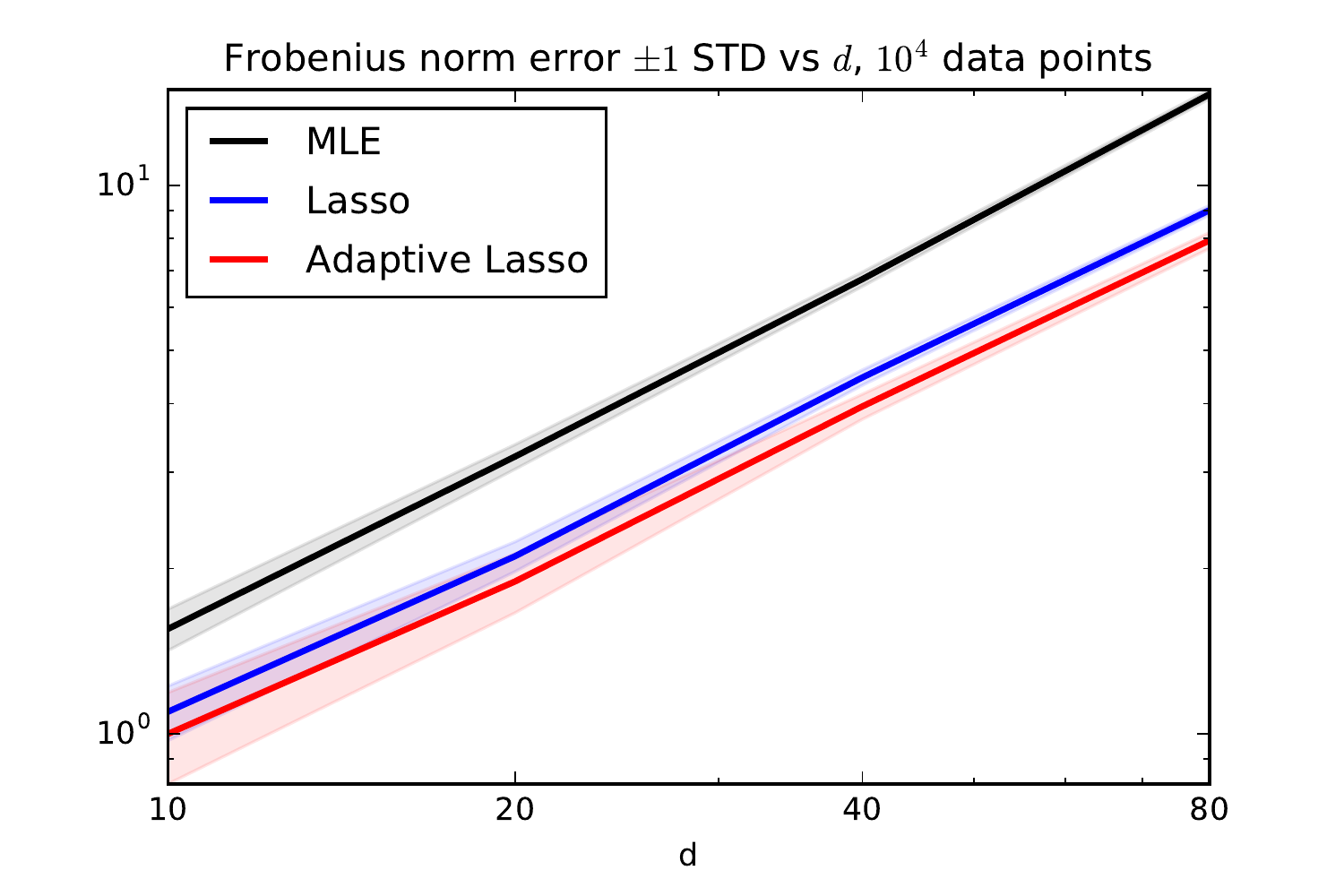}
\caption{Errors with $T=100$}
\label{f:matrix}
\end{subfigure}
\caption{(a): Example of a simulated ground truth matrix, with row sparsity equal to $0.2 d$; (b): estimation errors measured by the Frobenius norm for Lasso, Adaptive Lasso and MLE, as a function of $d$, with a sample
 size $T = 10$; (c): same as (b) with $T = 100$.
Bold lines and shaded areas correspond respectively to the means and standard deviations of the error obtained over 100 independent simulated trajectories. 
}
\label{f:d_impact_s}
\end{figure}

\begin{figure}[htbp]
\centering
\begin{subfigure}{0.45\textwidth}
\centering
\includegraphics[width=\textwidth]{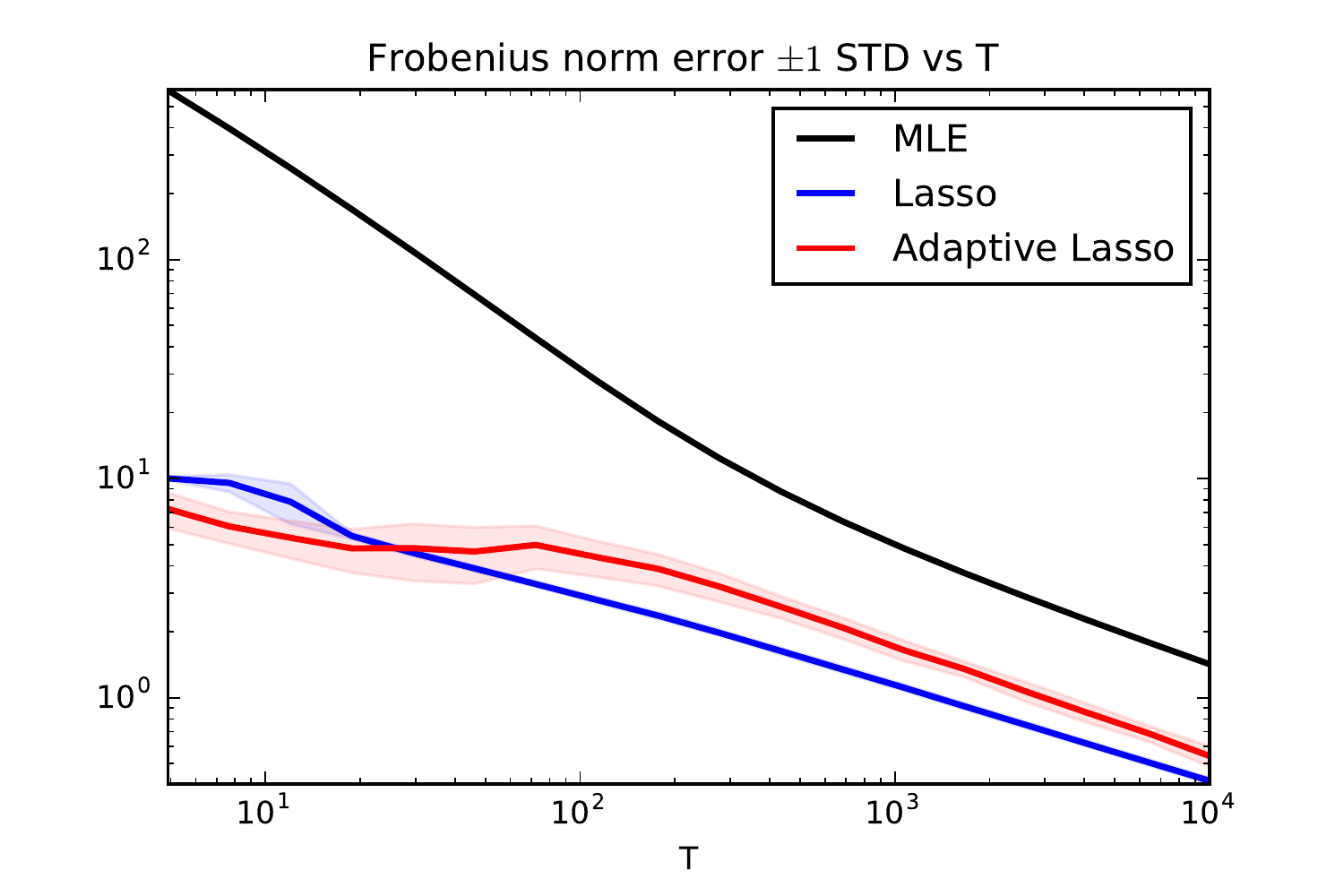}
\caption{Estimation error for $d=10$}
\label{f:l1_3}
\end{subfigure}
\begin{subfigure}{0.45\textwidth}
\centering
\includegraphics[width=\textwidth]{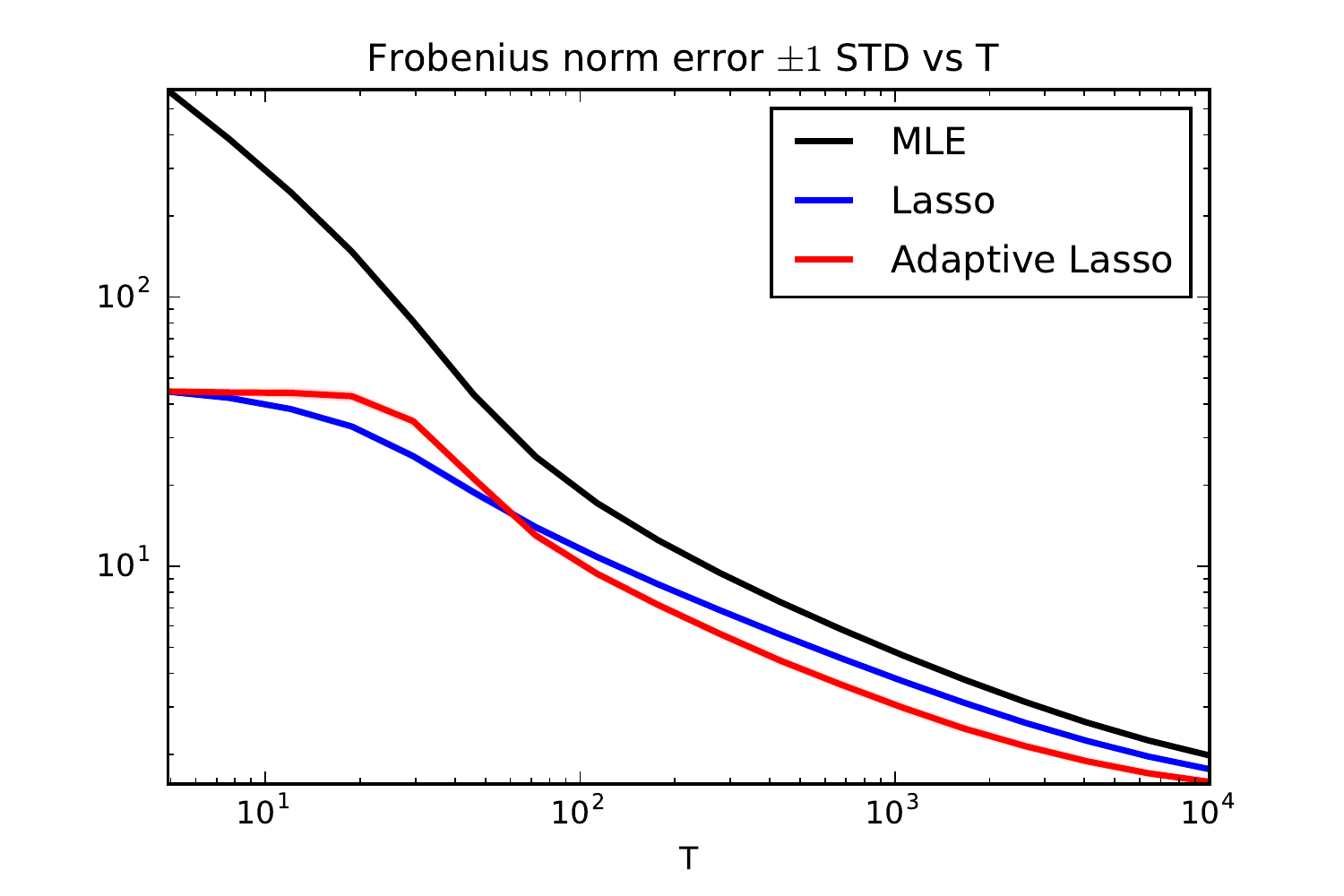}
\caption{Estimation error for $d = 100$}
\label{f:l1_4}
\end{subfigure}
\caption{Estimation errors measured by the Frobenius norm for Lasso, Adaptive Lasso and MLE, as a function of $T$ for (a) $d=10$ and (b) $d=100$.
Bold lines and shaded areas correspond respectively to the means and standard deviations of the error obtained over 100 independent simulated trajectories.}
\label{f:T_impact}
\end{figure}

\subsection{Support recovery}
\label{ss:support_recovery}

Penalization methods such as Lasso and Adaptive Lasso can be used for variable selection, because of their sparsity-inducing property. 
Indeed, we proved in Theorem~\ref{th:adalasso_oracle} from Section~\ref{s:asymptotic} that the Adaptive Lasso is consistent for variable selection of the drift parameter $\optA$.
In Figure~\ref{f:fscores}, we consider the estimation problem of a $80 \times 80$ matrix $\optA$ with sparsity $0.1 \times d$, and with a 
sample size $T=100$.
We simulate 100 trajectories, and compute the $F_1$-score obtained for support selection achieved by the MLE, Lasso and Adaptive Lasso.
Figure~\ref{f:fscores} then displays the box-plots of these $F_1$-scores.
The $F_1$-score obtained by each estimator is computed as follows:
first, we binarize the entries of the estimators and of the ground-truth matrix: zero entries are kept equal to zero, while non-zero entries are replaced by ones.
Then, we count the true positives, false positives and false negatives in order to compute the precision and recall values.

The MLE does not lead to a sparse solution, so that its $F_1$-score is constant and is computed from the average row-sparsity $s$ using the fromula $2s/(1+s)$, which is around $0.2$ in our case as observed in Figure~\ref{f:fscores}.
Indeed, the MLE always classifies all entries as non-zero, hence the corresponding true positive, false positive and false negative values are always equal respectively to $sd$, $d^2$ and $0$.
The strong improvement of the Adaptive Lasso over Lasso is clearly illustrated on this example: its $F_1$-score is almost equal to $1$, while the Lasso achieves an $F_1$-score slightly below $0.5$.

\begin{figure}[htbp]
\centering
\includegraphics[width=.5\textwidth]{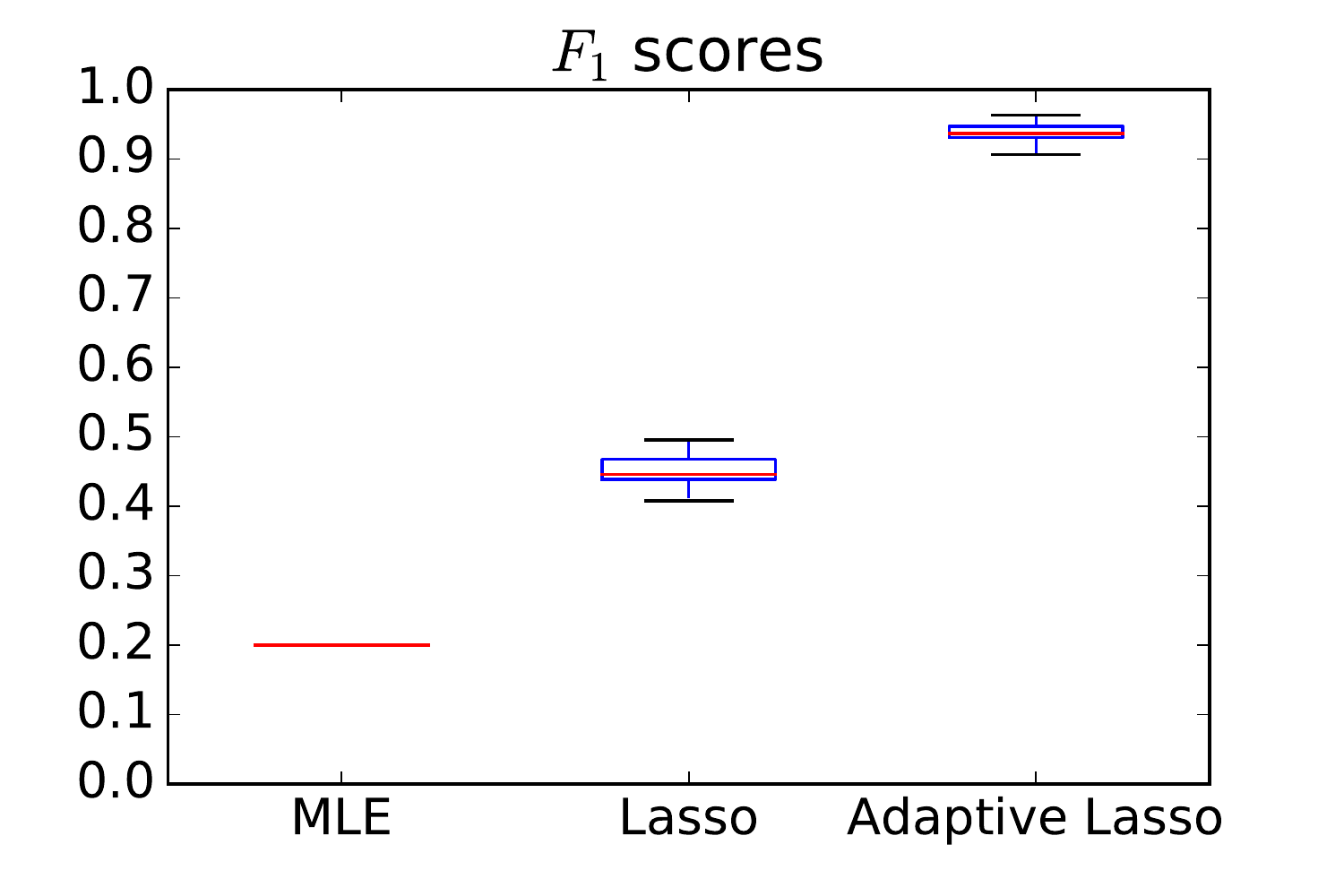}
\caption{Accuracy scores for different estimation methods, using a $80 \times 80$ matrix $\optA$ and an observation length $T=100$. We classify as positive detection all non-zero entries of the estimators. The plot illustrates a clear advantage of Adaptive Lasso over Lasso in terms of support recovery. 
The MLE accuracy is provided here for convenience, as a lower-bound for any procedure, that corresponds to the sparsity of $\optA$.}
\label{f:fscores}
\end{figure}

\subsection{Influence of the time-step}
\label{ss:ts_impact}

The theoretical results proposed in this paper assumes a continuous observation of the trajectory of the data.
However, in practice, any simulation or real-data analysis will have to use some discretization method. 
In our simulations we use a constant time-step $\delta t = 10^{-2}$. 
This value has been decided in hindsight, after a study of the impact of the time-step on the quality of the estimators. 
This study is illustrated in Figure~\ref{f:l1_ts}, where we display box-plots of the estimation errors obtained with varying discretization time steps $\delta t \in \{1,10^{-1},10^{-2},10^{-3},10^{-4}\}$.
We observe in~Figure~\ref{f:l1_ts} that results improved with a decreasing $\delta t$, which is to be expected, since a smaller time step means more data points, but we observed  no improvement for $\delta t \leq 10^{-2}$.

\begin{figure}[htbp]
\centering

\begin{subfigure}[b]{0.45\textwidth}
\label{f:l1_ts_adalasso_errors_ran50}
\centering
\includegraphics[width=\textwidth]{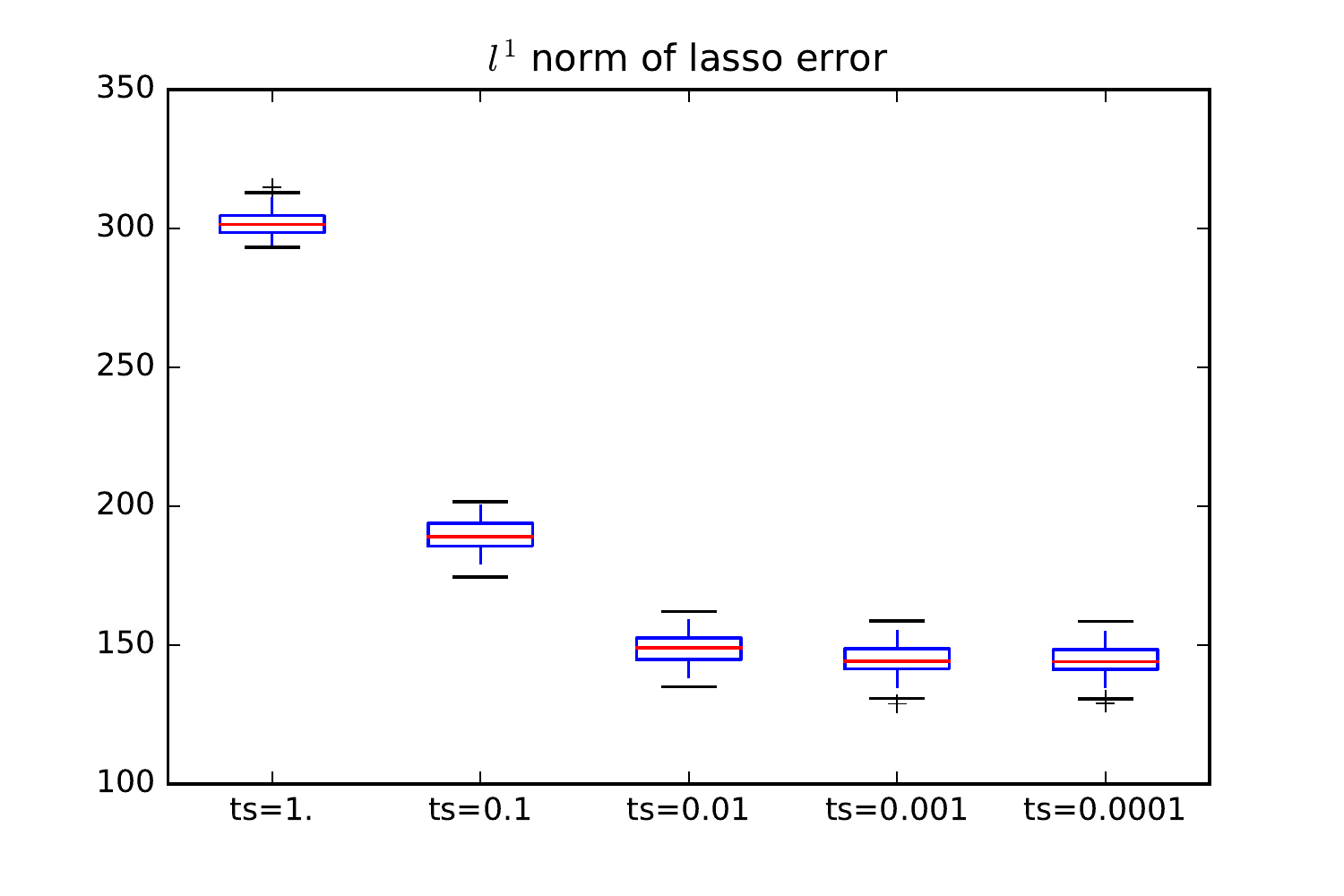}
\caption{$\ell^1$ error for the Lasso estimator.}
\end{subfigure}
~
\begin{subfigure}[b]{0.45\textwidth}
\centering
\label{f:F_ts_adalasso_errors_ran50}
\includegraphics[width=\textwidth]{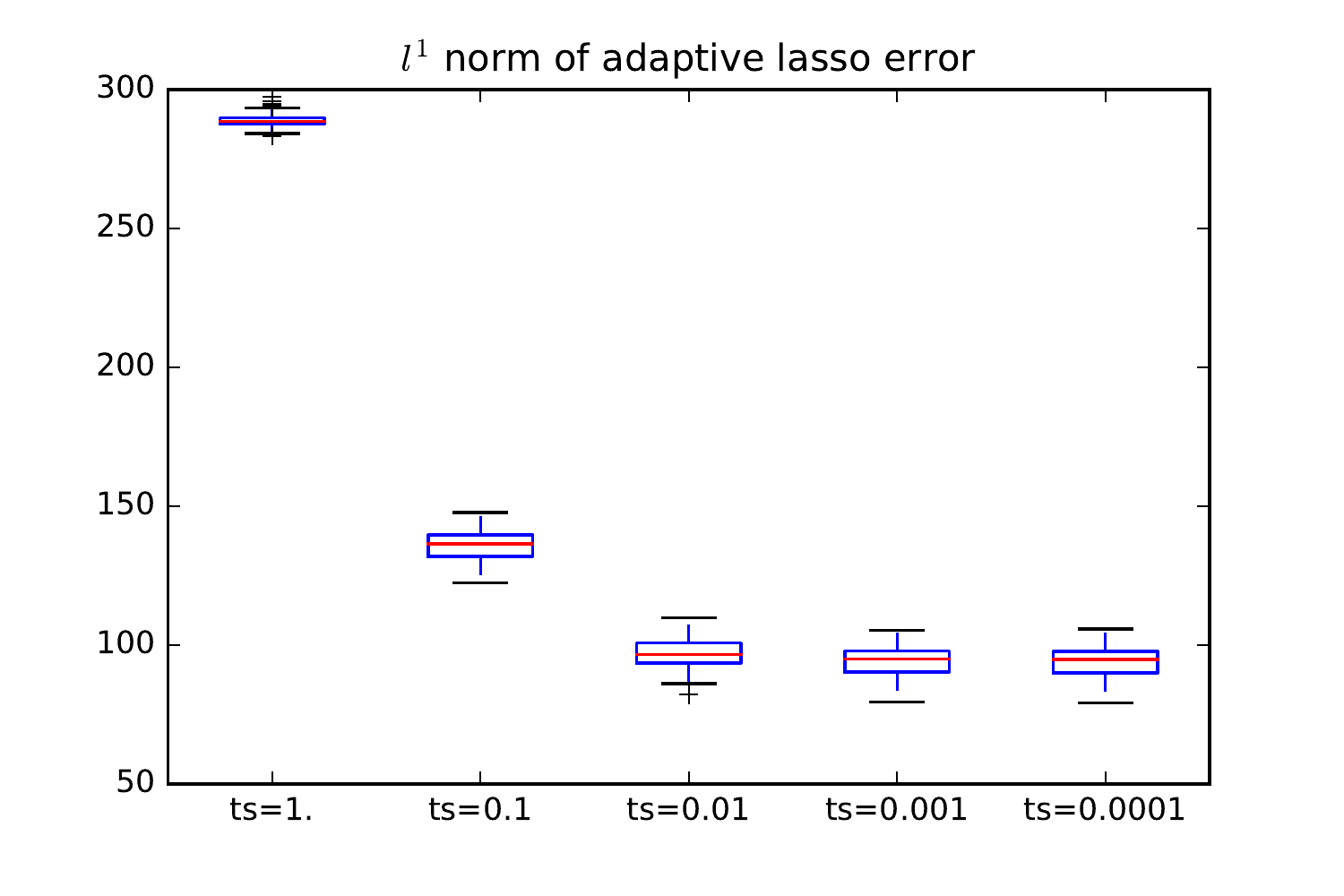}
\caption{$\ell^1$ error for the Adaptive Lasso estimator.}
\end{subfigure}

\caption{Estimation errors for Lasso and Adaptive Lasso as a function of the discretization step $\delta t$. Box-plots are computed from the estimation errors obtained from $100$ simulations of the same process, with a decreasing time step for discretization.}
\label{f:l1_ts}
\end{figure}

\subsection{Application to financial data}
\label{ss:data}

The Ornstein-Uhlenbeck model is a popular method in finance to model mean-reverting processes, for instance for pairs trading~\cite{hull:2009,carmona:2013,fouque:2013}.
In a typical setting, one chooses two related financial assets (for example the stocks of two companies in the same sector). Upon verification in the data, it is assumed that some linear combination of the stocks reverts to some "normal" value, often chosen as $0$. 
This combination, denoted $X$, can be modeled by a one-dimensional Ornstein-Uhlenbeck process.

However, this method does not address two issues. First, one needs a separate method to find the relevant pairs. Second, instead of pairs, one could be interested in more general linear combinations or in situations where the evolution of one price impacts another, when the pair does not fit a mean-reverting process. The multi-dimensional Ornstein-Uhlenbeck process is a way to address both problems, as it allows to involve an unrestricted number of assets. Moreover, thanks to the sparsity-inducing penalization considered in this paper, a sparse estimator might help in finding relevant combinations.

To illustrate this, we take daily close data of SP500 stocks, for companies in the financial and IT sectors with long enough history in the SP500 index. Our choice of sectors is arbitrary and is motivated by simplicity. We take the log-returns, then compute the exponential moving average (EMA), which will be the data we want to model using an Ornstein-Uhlenbeck process. 
By design, the EMA has a mean-reverting property and hence is a good candidate for fitting an Ornstein-Uhlenbeck process. 
We denote that process $R$ and assume the model:
\beq{eq:log_returns}{
\dd R_t = - \mA (R_t - m) \dt + \mSigma \dW_t
}
where $\mSigma$ is typically not the identity because of high correlations between certain stocks. We estimate $m$ and $\mSigma$ using the mean and the squared variations. Because of $\mSigma$, in order to estimate $\mA$, we need to maximize a slightly modified log-likelihood which takes $\mSigma$ into account, which is done easily using a proximal gradient descent algorithm for instance. 
The resulting MLE and cross-validated Adaptive Lasso give the matrices in Figure~\ref{f:sp500_a_estimation}. The heavy diagonals are explained by the fact that the data is an exponential moving average. 
However, the non-zero values away from the diagonal are non-trivial, since they can't be explained by covariances, that were already captured by the estimation of $\mSigma$. 
\begin{figure}
\centering
\begin{subfigure}[b]{0.3\textwidth}
\label{f:sp500_mle}
\centering
\includegraphics[trim={1cm 0.3cm 1cm 0.3cm},clip,width=\textwidth]{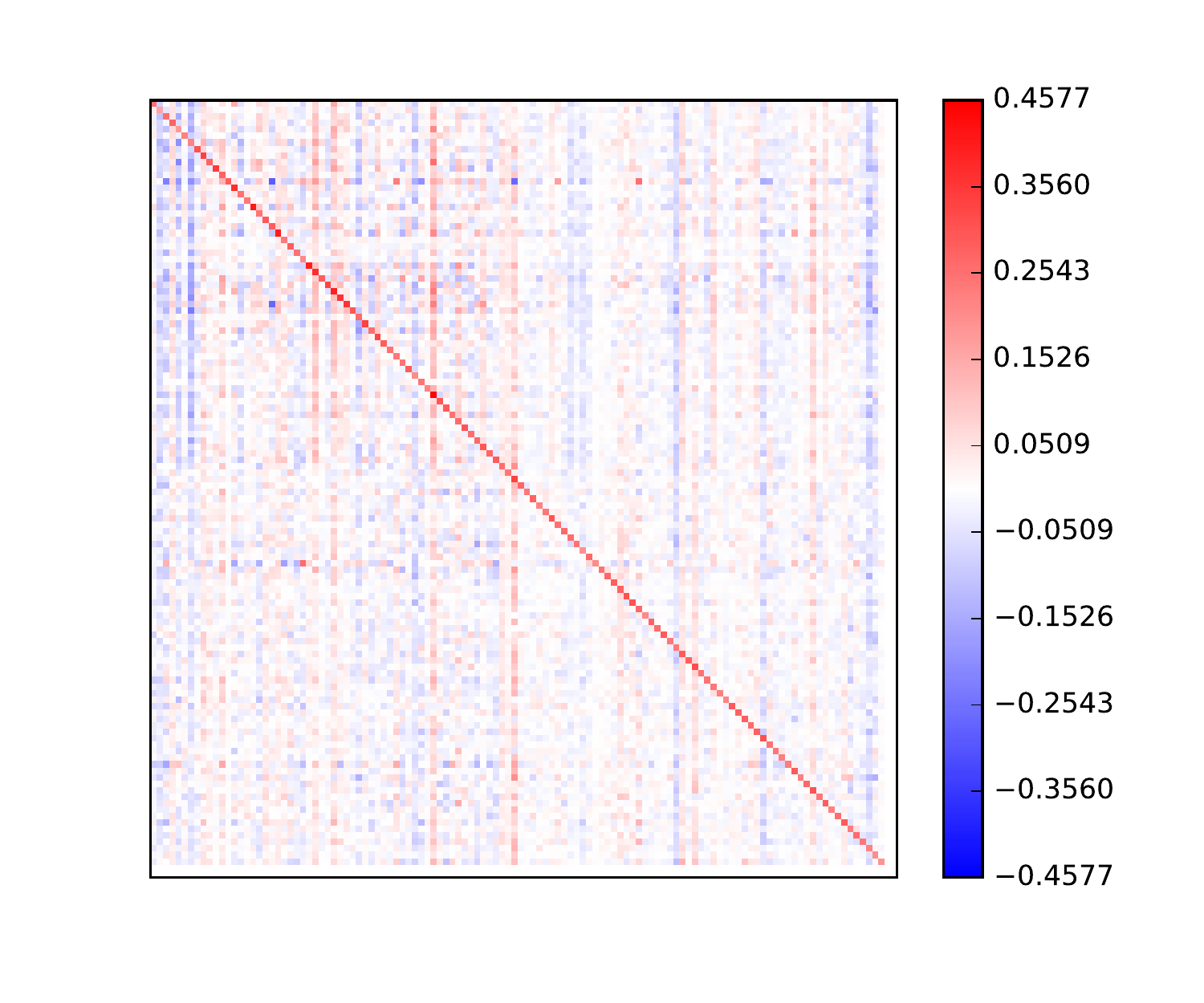}
\caption{Estimation using MLE}
\end{subfigure}
\hfill
\begin{subfigure}[b]{0.3\textwidth}
\centering
\label{f:sp500_adalasso}
\includegraphics[trim={1cm 0.3cm 1cm 0.3cm},clip,width=\textwidth]{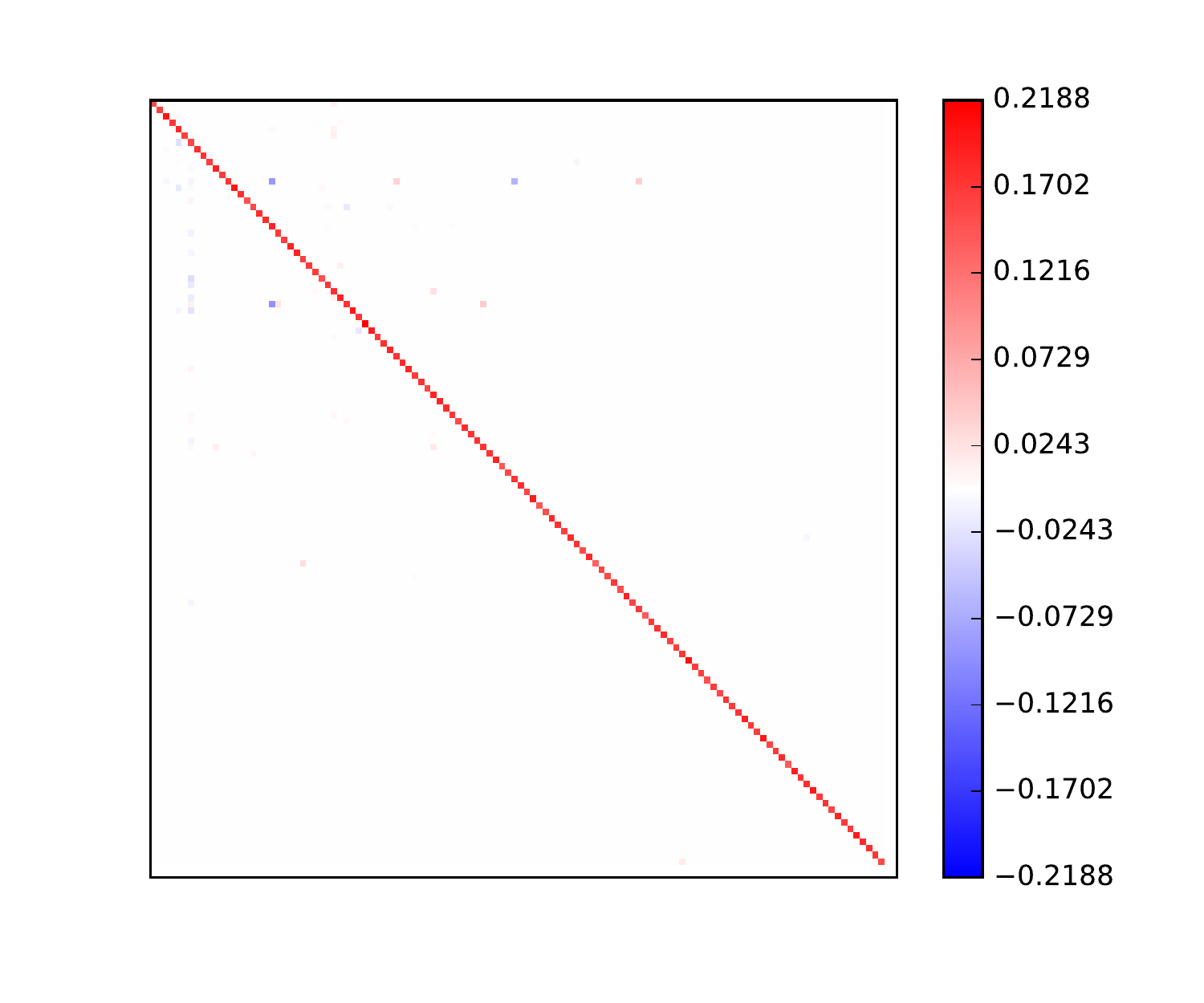}
\caption{Estimation using Ad. Lasso}
\end{subfigure}
\hfill
\begin{subfigure}[b]{0.3\textwidth}
\centering
\label{f:sp500_adalasso_zoom}
\includegraphics[trim={1cm 0.3cm 1cm 0.3cm},clip,width=\textwidth]{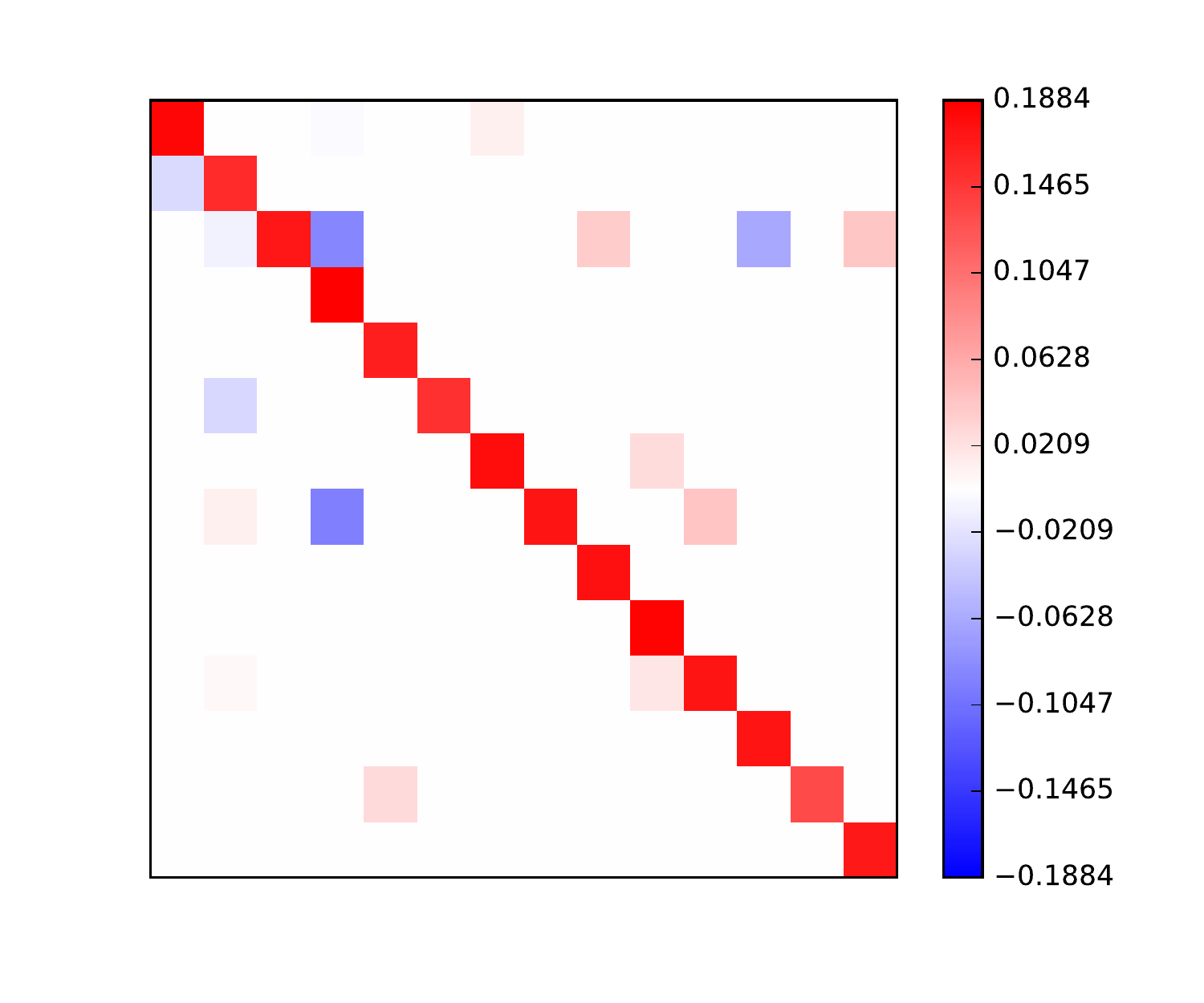}
\caption{Zoom of Ad. Lasso}
\end{subfigure}

\caption{(a) MLE estimator; (b) Adaptive Lasso estimator; (c) Zoom of (b) for stocks with the highest non-diagonal values; all for the estimation of $\mA$ in the model of Equation~\eqref{eq:log_returns}. The diagonal values are expected from the design of the EMA. The MLE gives a very noisy estimate, while the Adaptive Lasso is highly sparse.}
\label{f:sp500_a_estimation}
\end{figure}

The sparse estimation selects the most significant stock prices that influence other stock prices. 
This can be an indication to find interesting stock pairs. 
The highest value, in absolute value, that we find outside of the diagonal is at tick-coordinates ('PRU','FITB'), and takes a value roughly equal to $-0.1$. 
This means in practice that given an above-average value for the exponential moving average of log-returns of FITB, the model predicts an increase of the exponential moving average of log-returns of PRU, all else being controlled: by controlling the correlation between the two stocks, we get the plot from Figure \ref{f:stocks}. 
In layman terms, recent above-average returns of FITB predict above-average returns of PRU.
As a disclaimer, we should point out that this study has been conducted with a very simple approach, and shouldn't be considered as trading advice.

\begin{figure}
\centering
\includegraphics[width=.5\textwidth]{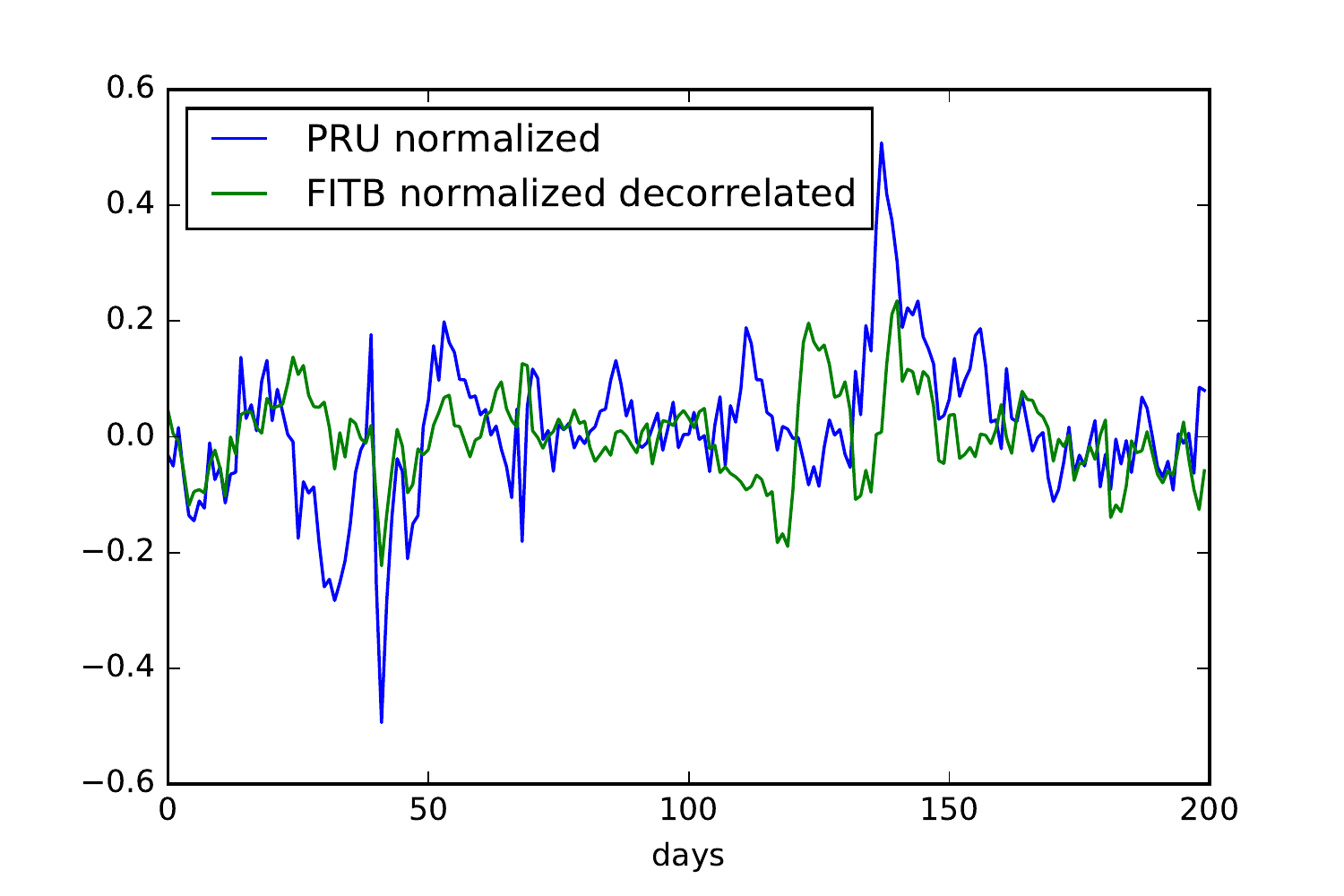}
\caption{Plot of normalized EMA for PRU and FITB, the latter subtracted a fraction of the PRU data for decorrelation.}
\label{f:stocks}
\end{figure}

\section{Conclusion}

This paper provides a complete theory for the estimation of the drift parameter of a Ornstein-Uhlenbeck processes under a row-sparsity assumption.
This is, to the best of our knowledge, the first paper to provide such results, either in a non-asymptotic or asymptotic framework, for Lasso and Adaptive Lasso.
This paper is therefore a first attempt towards the use of sparsity-inducing penalization, widely used in the context of generalized linear models, to high-dimensional diffusion processes.

A natural extension of our work consists in assuming a correlated Brownian noise, modeled by a non-diagonal parameter $\mSigma$ in front of $\dW_t$ in Equation~\eqref{def:OU}.
This parameter is exactly computable in the continuous observation setting.
However, in a high-dimensional setting, $\mSigma$ should be considered sparse as well, and one could therefore consider a joint estimation procedure for $\mA$ and $\mSigma$, with dedicated sparsity-inducing penalizations.
However, it turns out to be a much more difficult task, since the negative log-likelihood is not jointly convex with respect to $\mA$ and $\mSigma$.
Such a development is therefore way beyond the scope of the present paper, and might actually involve a very different approach than the one considered here.

Another natural extension is to consider matrices $\optA$ with non-positive spectra.
A very interesting property is zero eigenvalues, which leads to a reduced rank and hence to co-integrated processes.
A method to reduce rank is to penalize it, see~\cite{bunea:2011,bunea:2012} for application to Gaussian regression, or to use the so-called trace norm or nuclear norm penalization, which corresponds to a convex relaxation of the rank.


\section{Proofs}
\label{s:proofs}

In this Section, we provide proofs of the theorems and other statements from Sections \ref{s:nonasymptotic} and \ref{s:asymptotic}.

\subsection{Proof of Assumption (H4) in the reversible case}
\label{ss:deviation_hypothesis}

Theorem \ref{th:H4} below expresses that assumption (H4) is true when $\optA$ is symmetric. This condition is equivalent in our case to the reversibility of the process, see \cite{gobet:2016b}.

\begin{theo}
\label{th:H4}
Assume that $\optA$ is symmetric. Then there exists a non-decreasing, non-negative function $H$ such that for any vector $\vect{u}$, $\nT{\vect{u}} \leq 1$, we have:

\beq{eq:h4_app}{
\Prob{| \vect{u}^\top (\II - \Vinf) \vect{u} | \geq R} \leq 2 \exp (-T H(R)).
}
\end{theo}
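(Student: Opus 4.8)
The goal is a Bernstein-type concentration inequality for the quadratic functional $\vect{u}^\top (\II - \Vinf) \vect{u} = \frac1T\int_0^T (\vect{u}^\top X_t)^2 \dt - \vect{u}^\top \Vinf \vect{u}$ of the stationary Ornstein-Uhlenbeck process. The plan is to invoke a general deviation inequality for additive functionals of a reversible (symmetric generator) diffusion, the one from \cite{cattiaux:2007}, applied to the scalar functional $g(x) := (\vect{u}^\top x)^2 - \vect{u}^\top \Vinf \vect{u}$, which is centered under the stationary law $\muinf = \cNN{0,\Vinf}$ by construction. For this to apply one needs $X$ to be reversible (equivalently $\optA$ symmetric, see \cite{gobet:2016b}), which is exactly the hypothesis, and one needs the process to satisfy a functional inequality (a Poincaré inequality suffices for the exponential regime near $0$, and a log-Sobolev inequality for the Gaussian regime) — both hold for the stationary OU process because its invariant measure is Gaussian with covariance $\Vinf$, whose spectral gap is controlled by $\sigma_{\min}(\optA)$ under (H1).

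First I would set up the stationary OU process with its Dirichlet form: under (H2) and symmetry of $\optA$, $X$ is reversible w.r.t. $\muinf$, the generator is $L = \frac12 \Delta - (\optA x)\cdot\nabla$, and $\muinf$ satisfies a Poincaré (and in fact log-Sobolev) inequality with constants depending only on $\Sp(\optA)$. Second, I would bound the relevant norms of $g$: the oscillation/Lipschitz-type quantities and the "carré du champ" $\Gamma(g)(x) = |\nabla g(x)|^2 = 4 (\vect{u}^\top x)^2 \,|\vect{u}|^2$, whose supremum/tail behaviour and $L^p(\muinf)$ norms are all finite and explicitly controllable since $g$ is a quadratic form in a Gaussian vector with $\nT{\vect{u}}\le 1$. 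Third, I would feed these bounds into the concentration statement of \cite{cattiaux:2007} (their deviation bound for $\frac1T\int_0^T g(X_s)\dd s$ around $\muinf(g)=0$), which yields $\Prob{|\frac1T\int_0^T g(X_s)\dd s| \ge R} \le 2\exp(-T H(R))$ for an explicit non-decreasing $H$ with $H>0$ on $\R^+$ — typically $H(R)$ behaving like $c R^2$ for small $R$ and like $c' R$ for large $R$, i.e. a Bernstein-type rate. Finally I would note that, since $g$ and hence all the constants entering $H$ can be bounded uniformly over $\{\vect{u}: \nT{\vect{u}}\le 1\}$ (they depend on $\vect{u}$ only through $|\vect{u}|^2 \le 1$ and through $\vect{u}^\top\Vinf\vect{u} \le \nOp{\Vinf}$), the same $H$ works for every such $\vect{u}$, which is exactly the form required in (H4).

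The main obstacle is the second step: matching the hypotheses of the abstract concentration inequality of \cite{cattiaux:2007} to our unbounded functional $g$. Their results are often stated for Lipschitz or bounded observables, whereas $g$ is quadratic and hence neither; one must instead use the version of their inequality phrased in terms of $\|\Gamma(g)\|_\infty$ or in terms of exponential integrability of $g$ under $\muinf$, and verify the precise integrability condition they require (e.g. $\muinf(e^{\theta g})<\infty$ for $\theta$ in a neighbourhood of $0$, which holds since a quadratic form of a Gaussian has sub-exponential tails). Managing this unboundedness — and tracking how the resulting constants in $H$ depend on $\Sp(\optA)$ and on $\nOp{\Vinf}$ so that the bound is genuinely uniform in $\vect{u}$ — is the delicate part; everything else is bookkeeping with Gaussian moments.
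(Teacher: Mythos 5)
Your plan is essentially the paper's own proof: the paper likewise invokes the deviation inequality of \cite{cattiaux:2007} for additive functionals under a log-Sobolev inequality (valid here by reversibility of the stationary OU process when $\optA$ is symmetric, with $c=1/4$), applies it to the quadratic observable $Q(x)=(\vect{u}^\top x)^2$ and to $-Q$ to get both tails, and handles the unboundedness of $Q$ exactly as you anticipate, via the exponential-integrability (log-Laplace/Legendre transform) form of the bound, computing the Gaussian exponential moment explicitly to obtain $H=H_1\wedge H_2$. Your remark on uniformity in $\vect{u}$ through $\vect{u}^\top\Vinf\vect{u}\leq\nOp{\Vinf}$ is correct and in fact slightly more careful than the paper's statement.
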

Theorem \ref{th:H4} above follows from Lemmas \ref{l:geqR} and \ref{l:leqminusR} below, after taking $H(R) = H_1(R) \wedge H_2(R)$. The proof of the Lemmas is based on Theorem \ref{th:log_sobolev_bound} below, which shows a deviation inequality for the integral of a functional of an ergodic process from its long-time limit.



\begin{theo}[\cite{cattiaux:2007}, Theorem 2.1]
\label{th:log_sobolev_bound}
Let $L$ be the infinitesimal generator associated to an ergodic diffusion $X$ with stationary distribution $\mu$. If $\mu$ satisfies the $\log$-Sobolev inequality:
\beq{eq:log-Sobolev}{
c \int f^2 \log f^2 \dd\mu \leq - < Lf, f>_\mu
}
for some $c > 0$ and for all functions $f$ in the domain of definition of $L$ such that $\int_{\R^d} f^2 \dd \mu = 1$, then for all $Q \in \dL^1(\mu)$ and $R >0$:
\beq{eq:deviation_bound}{
\Prob{\frac{1}{T} \int_0^T Q(X_t) \dt - \int Q \dd\mu \geq R} \leq \exp \left( -t H^* \left( R \right) \right)
}
where
\beq{eq:exponential_bound}{
H^*(R) := \sup_{0 \leq \rho <\rho_{max}} \left \{ \rho R - c \log \int \exp \left( \frac{\rho}{c} (Q - \int Q \dd \mu) \right) \dd \mu \right \} 
}
and $\rho_{max}$ is such that the integral above is finite for any $0 \leq \rho < \rho_{max}$.
\end{theo}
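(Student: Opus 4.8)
The plan is to control the Laplace transform of the additive functional and then optimize a Chernoff bound, so throughout I write $\bar{Q} := \int Q \dd\mu$ and $V := Q - \bar{Q}$. The first step is the exponential Markov inequality: for any $\rho \in [0,\rho_{max})$, starting the process from the stationary law $\mu$,
\[
\Prob{\frac{1}{T}\int_0^T Q(X_t)\,\dt - \bar{Q} \geq R} \leq e^{-\rho T R}\,\Espq{\mu}{\exp\Big(\rho \int_0^T V(X_t)\,\dt\Big)}.
\]
Everything then reduces to estimating the exponential moment on the right and optimizing over $\rho$.

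The central step is to relate this exponential moment to the top of the spectrum of the perturbed generator $L + \rho V$. By the Feynman--Kac formula, the semigroup $P_t^{\rho V}f(x) := \E_x\big[f(X_t)\exp(\rho\int_0^t V(X_s)\,\ds)\big]$ has generator $L + \rho V$, hence
\[
\Espq{\mu}{\exp\Big(\rho \int_0^T V(X_t)\,\dt\Big)} = \langle P_T^{\rho V}\mathbf{1}, \mathbf{1}\rangle_\mu .
\]
In the reversible case ($\optA$ symmetric, which is exactly the situation of Theorem~\ref{th:H4}), $L + \rho V$ is self-adjoint in $L^2(\mu)$, and spectral calculus gives $\langle P_T^{\rho V}\mathbf{1},\mathbf{1}\rangle_\mu \leq e^{T\Lambda(\rho)}\|\mathbf{1}\|_{L^2(\mu)}^2 = e^{T\Lambda(\rho)}$, where $\Lambda(\rho) := \sup \Sp(L + \rho V)$. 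For a general non-reversible $L$ one keeps only its symmetric part in the Dirichlet form and controls the non-symmetric semigroup by a Gronwall estimate on $t \mapsto \langle P_t^{\rho V}\mathbf{1}, P_t^{\rho V}\mathbf{1}\rangle_\mu$.

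It then remains to bound $\Lambda(\rho)$ using the log-Sobolev inequality \eqref{eq:log-Sobolev}. For any $f$ with $\|f\|_{L^2(\mu)} = 1$, the variational characterization together with \eqref{eq:log-Sobolev} gives $\langle (L+\rho V)f, f\rangle_\mu \leq -c\int f^2\log f^2\,\dd\mu + \rho\int V f^2\,\dd\mu$. Setting $g = f^2 \geq 0$ with $\int g\,\dd\mu = 1$ and taking the supremum,
\[
\Lambda(\rho) \leq \sup_{g\geq 0,\ \int g\,\dd\mu = 1}\Big\{\rho\int V g\,\dd\mu - c\int g\log g\,\dd\mu\Big\},
\]
which is solved exactly by the Gibbs (Donsker--Varadhan) variational principle $\sup_{g}\{\int U g\,\dd\mu - \int g\log g\,\dd\mu\} = \log\int e^U\,\dd\mu$; applying it with $U = (\rho/c)V$ and factoring out $c$ yields $\Lambda(\rho) \leq c\log\int \exp\big(\tfrac{\rho}{c}(Q-\bar{Q})\big)\,\dd\mu$, finite precisely for $\rho < \rho_{max}$. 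Combining the three steps gives
\[
\Prob{\frac{1}{T}\int_0^T Q(X_t)\,\dt - \bar{Q} \geq R} \leq \exp\Big(-T\Big[\rho R - c\log\int e^{\frac{\rho}{c}(Q-\bar{Q})}\,\dd\mu\Big]\Big),
\]
and optimizing over $\rho \in [0,\rho_{max})$ produces the exponent $-T H^*(R)$ with $H^*$ as in \eqref{eq:exponential_bound}, which is exactly \eqref{eq:deviation_bound}.

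The main obstacle is the rigorous execution of the spectral step for \emph{unbounded} observables $Q$ (in the application $Q$ is quadratic): the perturbation $\rho V$ is then unbounded, so one must justify the self-adjointness/form-boundedness of $L + \rho V$ and the inequality $\langle P_T^{\rho V}\mathbf{1},\mathbf{1}\rangle_\mu \leq e^{T\Lambda(\rho)}$. This is precisely where the log-Sobolev inequality is indispensable — through the associated hypercontractivity it tames the unbounded perturbation — and it constitutes the genuine content of the Cattiaux--Guillin result; by comparison the Chernoff reduction and the Gibbs variational identification of $\Lambda(\rho)$ are routine.
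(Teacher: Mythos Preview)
The paper does not actually prove this statement: Theorem~\ref{th:log_sobolev_bound} is quoted verbatim from \cite{cattiaux:2007} (their Theorem~2.1) and is used as a black box to derive Lemmas~\ref{l:geqR} and~\ref{l:leqminusR}. There is therefore no ``paper's own proof'' to compare your proposal against; the only thing the paper adds is Remark~\ref{r:log_sob}, checking that the log--Sobolev hypothesis holds for the Gaussian stationary law when $\optA$ is symmetric.

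That said, your sketch is a faithful rendition of the Cattiaux--Guillin argument: Chernoff/Markov reduction, Feynman--Kac identification of the exponential moment with the perturbed semigroup $P_T^{\rho V}$, the spectral bound $\langle P_T^{\rho V}\mathbf 1,\mathbf 1\rangle_\mu \le e^{T\Lambda(\rho)}$, the domination of $\Lambda(\rho)$ by the entropy functional via \eqref{eq:log-Sobolev}, and finally the Gibbs variational identity. You also correctly flag the one genuinely delicate point --- making the spectral step rigorous when $V$ is unbounded --- which is precisely where \cite{cattiaux:2007} does the real work. So your proposal is correct as an outline; just be aware that in the paper this theorem is a citation, not a contribution, and a referee would expect you to attribute the argument rather than present it as your own.
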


\begin{remark}
\label{r:log_sob}
When $\optA$ is symmetric, a simple integration by parts shows that $< Lf, f>_\mu := \int_{\R^d} Lf \cdot f \dd \mu = - \frac{1}{2} \int_{\R^d} \nF{\nabla f}^2 \dd \mu$ and hence that Equation \eqref{eq:log-Sobolev} holds due to the classical log-Sobolev inequality \cite{gross:1975}, with $c = 1/4$. 
\end{remark}

Observe that Equation \eqref{eq:deviation_bound} applies to a one-sided inequality. Therefore, in order to get Theorem \ref{th:H4}, we will have to work with two inequalities. We deal with the first one in Lemma \ref{l:geqR} below.

\begin{lemma}
\label{l:geqR}
Assume $\optA$ is symmetric. Then for any vectors $\vect{u}$ such that $\nT{\vect{u}} \leq 1$:

\[ \Prob{\vect{u}^\top (\II - \Vinf) \vect{u} > R} \leq \exp( -T H_1(R))\]

where $H_1(R) = \frac{1}{8} \left(  \frac{R}{\vect{u}^\top \Vinf \vect{u}} - \log \det \left(\id  + R \frac{ \Vinf \vect{u} \vect{u}^\top}{(\vect{u}^\top \Vinf \vect{u})^2 } \right) \right)$.

\end{lemma}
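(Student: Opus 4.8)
The plan is to apply Theorem~\ref{th:log_sobolev_bound} with the specific choice $Q(x) = (\vect{u}^\top x)^2$, which is a non-negative quadratic functional whose stationary mean is exactly $\int Q \dd\muinf = \vect{u}^\top \Vinf \vect{u}$, since $X_t \sim \cNN{0,\Vinf}$. With this choice, $\frac{1}{T}\int_0^T Q(X_t)\dt = \vect{u}^\top \II \vect{u}$, so the left-hand side of~\eqref{eq:deviation_bound} is precisely $\Prob{\vect{u}^\top(\II - \Vinf)\vect{u} > R}$, matching the statement of the Lemma. By Remark~\ref{r:log_sob}, the log-Sobolev inequality~\eqref{eq:log-Sobolev} holds with $c = 1/4$ in the symmetric case, so Theorem~\ref{th:log_sobolev_bound} gives the bound $\exp(-T H^*(R))$ with $H^*(R) = \sup_{0 \leq \rho < \rho_{max}} \{ \rho R - \frac14 \log \int \exp(4\rho (Q - \vect{u}^\top\Vinf\vect{u}))\dd\muinf \}$.

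The computational heart of the proof is evaluating the Gaussian integral $\int \exp(4\rho\, (\vect{u}^\top x)^2)\,\muinf(\dx)$ where $\muinf = \cNN{0,\Vinf}$. The random variable $\vect{u}^\top X$ is centred Gaussian with variance $\sigma^2 := \vect{u}^\top\Vinf\vect{u}$, so this reduces to the standard one-dimensional moment generating function of a $\chi^2$-type variable: $\E[\exp(\theta Z^2)] = (1 - 2\theta\sigma^2)^{-1/2}$ for $\theta < 1/(2\sigma^2)$, with $Z \sim \cNN{0,\sigma^2}$. Taking $\theta = 4\rho$ this is finite precisely for $\rho < \rho_{max} := 1/(8\sigma^2)$. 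Substituting into $H^*$, and writing the determinant form, one gets $H^*(R) = \sup_{0 \le \rho < 1/(8\sigma^2)} \{ \rho(R + \sigma^2) + \tfrac18 \log(1 - 8\rho\sigma^2) \}$; differentiating in $\rho$ gives the optimal $\rho^* = \tfrac{1}{8\sigma^2} - \tfrac{1}{8(R+\sigma^2)}$, which lies in the admissible range for $R > 0$. Plugging back yields a closed form; a short rearrangement matches it to $\tfrac18(R/\sigma^2 - \log(1 + R/\sigma^2))$, and recognizing that for the rank-one perturbation $\det(\id + R\,\Vinf\vect{u}\vect{u}^\top/\sigma^4) = 1 + R\,\vect{u}^\top\Vinf\vect{u}/\sigma^4 \cdot \|\cdot\| = 1 + R/\sigma^2$ (since the only nonzero eigenvalue of $\Vinf\vect{u}\vect{u}^\top$ acting in the relevant direction contributes $\vect{u}^\top\Vinf\vect{u}/\sigma^4 \cdot \sigma^2$... more precisely the matrix $\Vinf \vect{u}\vect{u}^\top$ has trace $\sigma^2$ and rank one, so $\det(\id + R \Vinf\vect{u}\vect{u}^\top/\sigma^4) = 1 + R\sigma^2/\sigma^4 = 1 + R/\sigma^2$), we recover exactly the stated $H_1(R)$.

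One should also verify monotonicity and non-negativity claims implicitly needed downstream: $H_1(R) \geq 0$ for $R \geq 0$ because $x - \log(1+x) \geq 0$ for $x \geq -1$, and $H_1$ is non-decreasing since its derivative in $R$ equals $\tfrac{1}{8\sigma^2}(1 - \tfrac{1}{1+R/\sigma^2}) \geq 0$. Note that the dependence of $H_1$ on $\vect{u}$ is harmless: the final function $H$ in Theorem~\ref{th:H4} will be obtained by bounding $\sigma^2 = \vect{u}^\top\Vinf\vect{u} \leq \nOp{\Vinf}$ uniformly over $\nT{\vect{u}} \leq 1$ and using monotonicity of the single-variable map $R \mapsto R/v - \log(1 + R/v)$ in $v$, but that reduction happens at the level of Theorem~\ref{th:H4}, not here.

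The only genuine obstacle is bookkeeping: making sure the constant $c = 1/4$ from Remark~\ref{r:log_sob} propagates correctly through the factor $\rho/c = 4\rho$ inside the exponent of~\eqref{eq:exponential_bound}, and that the Legendre-transform optimization is carried out with the correct admissible range $\rho_{max} = 1/(8 \vect{u}^\top\Vinf\vect{u})$ so that the $\chi^2$ moment generating function stays finite. Everything else is a routine Gaussian computation and an elementary one-variable supremum.
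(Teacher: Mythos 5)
Your proposal is correct and follows the same skeleton as the paper's proof: apply Theorem~\ref{th:log_sobolev_bound} with $c=1/4$ (Remark~\ref{r:log_sob}) to $Q(x)=(\vect{u}^\top x)^2$, then evaluate the Legendre-type supremum $H^*(R)$. The one genuine difference is how the integral $I_\rho=\int\exp(4\rho(\vect{u}^\top x)^2)\dd\muinf$ is computed. The paper treats it as a $d$-dimensional Gaussian integral with perturbed precision matrix $\mSigma_\rho^{-1}=\Vinf^{-1}-8\rho\vect{u}\vect{u}^\top$, which forces it to track $\det\mSigma_\rho$, differentiate it via the adjugate, and sum a matrix geometric series to extract $\vect{u}^\top\mSigma_\rho\vect{u}$; you instead observe that $\vect{u}^\top X\sim\cNN{0,\sigma^2}$ with $\sigma^2=\vect{u}^\top\Vinf\vect{u}$ and use the scalar $\chi^2$ moment generating function $(1-8\rho\sigma^2)^{-1/2}$, then convert the resulting $1+R/\sigma^2$ back to the stated determinant via $\det(\id+ab^\top)=1+b^\top a$ for the rank-one matrix. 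The two computations give identical optimizers $\rho^*=\frac{R}{8\sigma^2(R+\sigma^2)}$ and the same $H_1(R)$. Your route buys one small but real improvement: your admissible range $\rho<1/(8\sigma^2)$ is exactly the set where $I_\rho<\infty$ and always contains $\rho^*$, whereas the paper's more conservative choice $\rho_{max}=\frac{1}{8}(\max\Sp(\Vinf))^{-1}$ can exclude $\rho^*$ when $\vect{u}^\top\Vinf\vect{u}<\max\Sp(\Vinf)$ and $R$ is large, so the paper's claimed supremum value would not strictly be attained inside its own range (the bound remains valid with the correct $\rho_{max}$, as your argument shows). Your side remarks on non-negativity and monotonicity of $H_1$ and the deferral of the uniformization over $\vect{u}$ to Theorem~\ref{th:H4} are also consistent with how the paper assembles $H=H_1\wedge H_2$.
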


\begin{proof}
Observe first that it suffices to prove the Lemma for $\nT{\vect{u}} = 1$. In the following, $\vect{u} \in \R^d$ verifies that condition.
We apply Theorem \ref{th:log_sobolev_bound}, which applies with $c=1/4$ as explained in Remark \ref{r:log_sob}, to the function $Q(X) = \vect{u}^\top X X^\top \vect{u} = (\vect{u}^\top X)^2$. Then $\int Q \dd \mu = \vect{u}^\top \Vinf \vect{u}$. It remains to write explicitly $H^*(R)$. We have $H^*(R) = \sup_{0 \leq \rho < \rho_{max}} \rho (R + \vect{u}^\top \Vinf \vect{u}) - \frac{1}{4} \log I_\rho$ where:

\begin{align}
I_\rho &:= \int \exp \left( 4\rho \vect{u}^\top X X^\top \vect{u} \right) \dd \mu \\
&= (2\pi)^{-d/2} (\det \Vinf)^{-1/2} \int \exp \left(- \frac{1}{2} X^\top \left( \Vinf^{-1} - 8 \rho \vect{u} \vect{u}^\top \right) X \right) \dX \\
&= (\det \Vinf)^{-1/2} (\det \mSigma_\rho)^{1/2} \label{eq:res_Ilambda}
\end{align}
and $\mSigma_\rho^{-1} := \Vinf^{-1} - 8 \rho \vect{u} \vect{u}^\top$. The product $\vect{u} \vect{u}^\top$ is a symmetric matrix of rank $1$ and its only non-zero eigenvalue is $1$, as $\vect{u} \vect{u}^\top \vect{u} = \vect{u}$. 

The integral $I_\rho$ is defined and Equation \eqref{eq:res_Ilambda} is valid if and only if $\mSigma_\rho^{-1}$ is indeed a matrix with positive spectrum. We have 
\[\min \Sp(\mSigma_{\rho}^{-1}) \geq \min \Sp(\Vinf^{-1}) - 8 \rho \max \Sp(\vect{u} \vect{u}^\top) = (\max \Sp(\Vinf))^{-1} - 8\rho.\]
Hence we choose $\rho_{max} := \frac{1}{8}(\max \Sp(\Vinf))^{-1}$ and $I_\rho$ is well defined for $\rho < \rho_{max}$. Note also for later that $\nOp{8 \rho \Vinf \vect{u} \vect{u}^\top} \leq 8\rho \max \Sp(\Vinf) < 1$.

To find $H^*(R)$, we differentiate the argument of the supremum. For this, we need $\frac{\dd \det \mSigma_\rho}{\dd \rho}$. We have $\frac{\dd \mSigma_\rho}{\dd \rho} = - \mSigma_\rho \frac{\dd \mSigma_\rho^{-1}}{\dd \rho}\mSigma_\rho = 8 \mSigma_\rho \vect{u} \vect{u}^\top\mSigma_\rho$. Hence 
\[\frac{\dd \det \mSigma_\rho}{\dd \rho} = \tr \left(\adj (\mSigma_\rho) \frac{\dd \mSigma_\rho}{\dd \rho} \right) = 8 \tr (\adj (\mSigma_\rho ) \mSigma_\rho \vect{u} \vect{u}^\top\mSigma_\rho ) = 8 (\det \mSigma_\rho) \vect{u}^\top \mSigma_\rho \vect{u}.\]
Therefore, to find $H^*(R)$, we solve in $\rho$ the equation $ R + \vect{u}^\top \Vinf \vect{u} - \vect{u}^\top \mSigma_\rho \vect{u} = 0$. We can actually compute $\vect{u}^\top \mSigma_\rho \vect{u}$ using a geometric series, recalling that $\nOp{8 \rho \Vinf \vect{u} \vect{u}^\top} < 1$:

\begin{align}
\mSigma_\rho &= \left( \id - 8\rho \Vinf \vect{u} \vect{u}^\top\right)^{-1} \Vinf = \sum_{k \geq 0} \left( 8 \rho \Vinf \vect{u} \vect{u}^\top \right)^k \Vinf \\
&= \Vinf + 8 \rho \Vinf \vect{u} \sum_{k \geq 0} \left( 8 \rho \vect{u}^\top \Vinf \vect{u} \right)^k \vect{u}^\top \Vinf \\
&= \Vinf + 8 \rho \frac{\Vinf \vect{u} \vect{u}^\top \Vinf }{1 - 8 \rho \vect{u}^\top \Vinf \vect{u} } \\
\vect{u}^\top \mSigma_\rho \vect{u} &= \frac{ \vect{u}^\top \Vinf \vect{u} }{1 - 8 \rho \vect{u}^\top \Vinf \vect{u} }.
\end{align}
We get $H^*(R)$ for $ \rho = \rho^* := \frac{1}{8} \frac{R}{\vect{u}^\top \Vinf \vect{u} (R + \vect{u}^\top \Vinf \vect{u})}$. Then:
\begin{align}
\Sigma_{\rho^*} 
&= \Vinf \left(\id  + R \frac{ \vect{u} \vect{u}^\top \Vinf}{(\vect{u}^\top \Vinf \vect{u})^2 } \right) \\
H^*(R) &= \rho^* (R + \vect{u}^\top \Vinf \vect{u}) + \frac{1}{8} \log (\det \Vinf) - \frac{1}{8} \log (\det \mSigma_{\rho^*}) \\
&= \frac{1}{8} \left(  \frac{R}{\vect{u}^\top \Vinf \vect{u}} - \log \det \left(\id  + R \frac{ \vect{u} \vect{u}^\top \Vinf}{(\vect{u}^\top \Vinf \vect{u})^2 } \right) \right). \qedhere
\end{align}
\end{proof}

We deal with the second inequality in Lemma \ref{l:leqminusR} below. 

\begin{lemma}
\label{l:leqminusR}
Assume $\optA$ is symmetric. Then for any vectors $\vect{u}$ such that $\nT{\vect{u}} \leq 1$:

\beq{eq:leqminusR}{
\Prob{ \vect{u}^\top (\II - \Vinf) \vect{u} \leq -R} \leq \exp( -T H_2(R))
}


where $H_2(R) = \begin{cases} -\frac{1}{8} \left( \frac{R}{\vect{u}^\top \Vinf \vect{u}} + \log \det \left(\id - R \frac{ \Vinf \vect{u} \vect{u}^\top}{(\vect{u}^\top \Vinf \vect{u})^2 } \right) \right) & \text{if } R < \vect{u}^\top \Vinf \vect{u}, \\ + \infty & \text{else.} \end{cases} $.

\end{lemma}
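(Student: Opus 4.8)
The plan is to mirror the proof of Lemma~\ref{l:geqR}, but this time applying Theorem~\ref{th:log_sobolev_bound} to the function $Q(X) = -(\vect{u}^\top X)^2 = -\vect{u}^\top X X^\top \vect{u}$, so that $\int Q \dd\mu = -\vect{u}^\top \Vinf \vect{u}$ and the one-sided deviation bound $\Prob{\frac{1}{T}\int_0^T Q(X_t)\dt - \int Q \dd\mu \geq R}$ becomes exactly $\Prob{\vect{u}^\top(\II - \Vinf)\vect{u} \leq -R}$. As before, it suffices to treat $\nT{\vect{u}} = 1$. With $c = 1/4$ (Remark~\ref{r:log_sob}), the rate function is $H_2(R) = H^*(R) = \sup_{0 \leq \rho < \rho_{max}} \rho(R - \vect{u}^\top\Vinf\vect{u} + \vect{u}^\top\Vinf\vect{u}) - \tfrac14 \log I_\rho$ — wait, more carefully: $H^*(R) = \sup_\rho \{ \rho R - \tfrac14 \log \int \exp(4\rho(Q - \int Q\dd\mu))\dd\mu \}$, and the Gaussian integral to control is $I_\rho := \int \exp(-4\rho \vect{u}^\top X X^\top \vect{u}) \dd\mu = (\det\Vinf)^{-1/2}(\det\mSigma_\rho)^{1/2}$ with $\mSigma_\rho^{-1} := \Vinf^{-1} + 8\rho\,\vect{u}\vect{u}^\top$.

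The key difference from Lemma~\ref{l:geqR} is the sign: here $\mSigma_\rho^{-1}$ is obtained by \emph{adding} a positive-semidefinite rank-one term to $\Vinf^{-1}$, so $\mSigma_\rho^{-1}$ is automatically positive definite for \emph{all} $\rho \geq 0$, hence $\rho_{max} = +\infty$ and $I_\rho$ is finite for every $\rho \geq 0$. Using the same geometric-series computation (now with $-8\rho$ in place of $8\rho$, and noting $\nOp{8\rho\,\Vinf\vect{u}\vect{u}^\top/(1+8\rho\vect{u}^\top\Vinf\vect{u})} < 1$ trivially), one gets $\mSigma_\rho = \Vinf - 8\rho\,\frac{\Vinf\vect{u}\vect{u}^\top\Vinf}{1 + 8\rho\,\vect{u}^\top\Vinf\vect{u}}$ and therefore $\vect{u}^\top\mSigma_\rho\vect{u} = \frac{\vect{u}^\top\Vinf\vect{u}}{1 + 8\rho\,\vect{u}^\top\Vinf\vect{u}}$. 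Differentiating the supremand and using $\frac{\dd}{\dd\rho}\log\det\mSigma_\rho = -8\,\vect{u}^\top\mSigma_\rho\vect{u}$, the first-order condition becomes $R = \vect{u}^\top\Vinf\vect{u} - \vect{u}^\top\mSigma_\rho\vect{u}$, which solves to $\rho^* = \tfrac18\,\frac{R}{\vect{u}^\top\Vinf\vect{u}\,(\vect{u}^\top\Vinf\vect{u} - R)}$. This root is admissible (i.e. $\rho^* \geq 0$) precisely when $R < \vect{u}^\top\Vinf\vect{u}$, which is exactly the case distinction appearing in the statement.

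In the regime $R < \vect{u}^\top\Vinf\vect{u}$, substituting $\rho^*$ back gives $\mSigma_{\rho^*} = \Vinf(\id - R\,\vect{u}\vect{u}^\top\Vinf/(\vect{u}^\top\Vinf\vect{u})^2)$ and, after the same simplification as in Lemma~\ref{l:geqR} with signs flipped, $H^*(R) = -\tfrac18\big(\frac{R}{\vect{u}^\top\Vinf\vect{u}} + \log\det(\id - R\,\Vinf\vect{u}\vect{u}^\top/(\vect{u}^\top\Vinf\vect{u})^2)\big)$, as claimed (using $\det(\id + ab^\top) = 1 + b^\top a$ to move $\Vinf\vect{u}\vect{u}^\top$ to $\vect{u}\vect{u}^\top\Vinf$ inside the determinant). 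In the complementary regime $R \geq \vect{u}^\top\Vinf\vect{u}$, the supremand $\rho \mapsto \rho R - \tfrac14\log I_\rho$ is increasing on $[0,\infty)$ without interior critical point — its derivative $R - \vect{u}^\top\Vinf\vect{u} + \vect{u}^\top\mSigma_\rho\vect{u} - \vect{u}^\top\Vinf\vect{u}$; more simply, one checks $\frac{\dd}{\dd\rho}(\rho R - \tfrac14\log I_\rho) = R - (\vect{u}^\top\Vinf\vect{u} - \vect{u}^\top\mSigma_\rho\vect{u}) > 0$ for all $\rho$ since $\vect{u}^\top\mSigma_\rho\vect{u} \leq \vect{u}^\top\Vinf\vect{u}$, so $\vect{u}^\top\Vinf\vect{u} - \vect{u}^\top\mSigma_\rho\vect{u} \in [0, \vect{u}^\top\Vinf\vect{u}) \leq R$ — hence the supremum is $+\infty$, which both yields the trivial bound $\Prob{\cdot} \leq e^{-\infty} = 0$ and matches the stated $H_2(R) = +\infty$. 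I expect the only mildly delicate point to be bookkeeping the sign conventions carefully through the geometric series and the determinant identity; the argument that $\rho_{max} = +\infty$ here (unlike in Lemma~\ref{l:geqR}) is what makes the second regime clean rather than an obstacle.
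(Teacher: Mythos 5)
Your proof is correct and follows essentially the same route as the paper: apply Theorem~\ref{th:log_sobolev_bound} to $Q(X)=-(\vect{u}^\top X)^2$, compute the Gaussian moment generating function via $\mSigma_\rho^{-1}=\Vinf^{-1}+8\rho\,\vect{u}\vect{u}^\top$, optimize at $\rho^*=\tfrac18 R/\big(\vect{u}^\top\Vinf\vect{u}\,(\vect{u}^\top\Vinf\vect{u}-R)\big)$, and simplify. The only cosmetic differences are that the paper dispatches the case $R\geq\vect{u}^\top\Vinf\vect{u}$ immediately by noting $\II$ is a.s.\ positive definite (rather than showing the supremum diverges) and keeps the restriction $\rho\leq\rho_{max}$ for the geometric-series step, whereas you correctly observe that the integral is finite for all $\rho\geq 0$ (just note that the closed form for $\mSigma_\rho$ for large $\rho$ is cleanest via Sherman--Morrison, since the series itself only converges for $8\rho\,\vect{u}^\top\Vinf\vect{u}<1$).
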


\begin{proof}
Observe first that if $R \geq \vect{u}^\top \Vinf \vect{u}$, the probability is zero, as $\II$ is a.s. a positive definite matrix. We assume henceforth that $R < \vect{u}^\top \Vinf \vect{u}$. The same reasoning as in the proof of Theorem \ref{l:geqR} to $Q(X) = - \vect{u}^\top X X^\top \vect{u}$ gives:
\begin{align}
H^*(R) &:= \sup_{\rho \geq 0} \rho R - \rho \vect{u}^\top \Vinf \vect{u} + \frac{1}{8} \log \det \Vinf - \frac{1}{8} \log \det \mSigma_\rho \\
\mSigma_\rho^{-1} &:= \Vinf^{-1} + 8 \rho \vect{u} \vect{u}^\top.
\end{align}

We restrict the supremum to $\rho \leq \rho_{max} = \frac{1}{8}(\max \Sp(\Vinf))^{-1}$ as in the proof of Lemma \ref{l:geqR}, as it is sufficient to get the upper bound \eqref{eq:leqminusR}. This enables the geometric series calculation as in the proof of Lemma \ref{l:geqR}. We get:
\begin{align}
\vect{u}^\top \mSigma_\rho \vect{u} &= \frac{ \vect{u}^\top \Vinf \vect{u} }{1 + 8 \rho \vect{u}^\top \Vinf \vect{u} } \\
\rho^* &:= \frac{1}{8} \frac{R}{\vect{u}^\top \Vinf \vect{u} (\vect{u}^\top \Vinf \vect{u} - R)} \\
\Sigma_{\rho^*} &= \Vinf \left(\id - R \frac{ S S^\top \Vinf}{(S^\top \Vinf S)^2 } \right) \\
H^*(R) &= - \frac{1}{8} \left( \frac{R}{S^\top \Vinf S } + \log \det \left(\id - R \frac{ S S^\top \Vinf}{(S^\top \Vinf S)^2 } \right) \right). \qedhere
\end{align}
\end{proof}

For completeness, we state an interesting corollary.

\begin{corollary}
\label{l:nonsymmetric_bound}
For any vectors $S_1,S_2$ such that $\nT{\vect{u}_1} \leq 1$,$\nT{\vect{u}_2} \leq 1$, and for any $i,j \leq d$:

\[ \Prob{ |\vect{u}_1^\top (\II - \Vinf) \vect{u}_2| > 3 R} \leq 6 \exp( -T H(R))\]
\[ \Prob{ |\II^{ij} - \Vinf^{ij}| > 3 R} \leq 6 \exp( -T H(R)).\]

\end{corollary}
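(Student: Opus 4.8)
The plan is to reduce the bilinear deviation to three scalar (quadratic) deviations of the form controlled by Lemmas \ref{l:geqR} and \ref{l:leqminusR}, via a polarization identity. Write, for unit-norm vectors $\vect{u}_1,\vect{u}_2$,
\beq{eq:polarization}{
\vect{u}_1^\top (\II - \Vinf) \vect{u}_2 = \frac12 \Big( \vect{w}_+^\top (\II - \Vinf) \vect{w}_+ - \vect{u}_1^\top (\II - \Vinf) \vect{u}_1 - \vect{u}_2^\top (\II - \Vinf) \vect{u}_2 \Big),
}
where $\vect{w}_+ := \vect{u}_1 + \vect{u}_2$. Since $\II - \Vinf$ is symmetric this identity is exact. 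The one subtlety is that $\nT{\vect{w}_+} \leq 2$, not $\leq 1$, so before invoking the Lemmas I would rescale: set $\tilde{\vect{w}} := \vect{w}_+ / \nT{\vect{w}_+}$, which has unit norm, and note $\vect{w}_+^\top (\II - \Vinf)\vect{w}_+ = \nT{\vect{w}_+}^2 \, \tilde{\vect{w}}^\top (\II - \Vinf) \tilde{\vect{w}}$ with $\nT{\vect{w}_+}^2 \leq 4$. Then on the event that each of the three quadratic forms $\tilde{\vect{w}}^\top(\II-\Vinf)\tilde{\vect{w}}$, $\vect{u}_1^\top(\II-\Vinf)\vect{u}_1$, $\vect{u}_2^\top(\II-\Vinf)\vect{u}_2$ is bounded in absolute value by $R$, the right-hand side of \eqref{eq:polarization} is bounded by $\frac12(4R + R + R) = 3R$. (If one prefers cleaner constants, using both $\vect{u}_1+\vect{u}_2$ and $\vect{u}_1-\vect{u}_2$ in a symmetrized polarization and bounding each rescaled form by $R$ also yields a bound of the stated order; the factor $3$ in the statement already has margin.)

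Next I would assemble the probability bound by a union bound over these three (two-sided) events. By Assumption (H4), equivalently Theorem \ref{th:H4} with $H = H_1 \wedge H_2$, each two-sided event $\{|\vect{v}^\top(\II-\Vinf)\vect{v}| \geq R\}$ for a unit vector $\vect{v}$ has probability at most $2\exp(-TH(R))$. Hence the complement of the intersection of the three good events has probability at most $3 \cdot 2\exp(-TH(R)) = 6\exp(-TH(R))$, which gives the first displayed inequality. The second inequality is the special case $\vect{u}_1 = e_i$, $\vect{u}_2 = e_j$ (standard basis vectors, which have unit norm), since then $\vect{u}_1^\top(\II-\Vinf)\vect{u}_2 = \II^{ij} - \Vinf^{ij}$.

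I do not anticipate a genuine obstacle here: the result is a routine polarization-plus-union-bound consequence of the two preceding Lemmas, and the only care needed is the norm bookkeeping when passing from $\vect{u}_1+\vect{u}_2$ to a unit vector, which I have accounted for above through the $\nT{\vect{w}_+}^2 \leq 4$ rescaling. One should also note that the $3R$ on the left and the $H(R)$ on the right are not independently optimizable with this crude argument — tightening the numerical constant would require a direct bilinear version of Theorem \ref{th:log_sobolev_bound} (applying it to the functional $Q(X) = \vect{u}_1^\top X X^\top \vect{u}_2$, whose exponential moment is a Gaussian integral over a non-symmetric rank-two perturbation of $\Vinf^{-1}$), but since the corollary is only stated "for completeness" the polarization argument is entirely adequate.
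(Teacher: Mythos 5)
Your proposal is correct and follows essentially the same route as the paper: polarization of the bilinear form into three quadratic forms, rescaling $\vect{u}_1+\vect{u}_2$ to unit norm (absorbing the factor $\nT{\vect{u}_1+\vect{u}_2}^2 \leq 4$ to get the $3R$ threshold), a union bound over the three two-sided events each of probability at most $2\exp(-TH(R))$, and specialization to canonical basis vectors for the entrywise bound. Your write-up is in fact slightly more careful than the paper's, which leaves the rescaling of the sum vector implicit and has a sign typo in its displayed polarization identity.
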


\begin{proof}

Denote $\Delta \mat{C} := \II - \Vinf$. We have 
\[|\vect{u}_1^\top \Delta \mat{C} \vect{u}_2| \leq \frac{1}{2} \left| (\vect{u}_1 + \vect{u}_2)^\top \Delta \mat{C} (\vect{u}_1 + \vect{u}_2) + \vect{u}_1^\top \Delta \mat{C} \vect{u}_1 + \vect{u}_2^\top \Delta \mat{C} \vect{u}_2 \right|.\]
Each time with probability at least $1 - \exp( -T H^*(R))$, we have $|\vect{u}_1^\top \Delta \mat{C} \vect{u}_1 | \leq R$, $|\vect{u}_2^\top \Delta \mat{C} \vect{u}_2 | \leq R$ and $|(\vect{u}_1+\vect{u}_2)^\top \Delta \mat{C} (\vect{u}_1+\vect{u}_2) | \leq 4 R$.

For the second inequality, apply the first with $\vect{u}_1$ and $\vect{u}_2$ set respectively as the $i$-th and $j$-th vectors of the canonical basis.

\end{proof}


\subsection{Proof of Theorem \ref{th:1norm_empirical_bound} (Lasso error bound)}
\label{ss:master_theorem}

\begin{lemma}
\label{l:al-kashi}
For any matrix $\mA $ and any $\lambda >0$, we have:

\beq{eq:alkashi}{ 
\nL{( \hAl - \optA ) X }^2 - \nL{ ( \mA  - \optA ) X }^2 \leq 2 \sF{ \IW, \mA  - \hAl } - \nL{ ( \mA - \hAl) X }^2 + 2 \lambda ( \nO{\mA } - \nO{ \hAl } ).
}
\end{lemma}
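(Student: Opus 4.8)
The plan is to combine the first-order optimality condition for the convex program~\eqref{def:lasso} with the law of cosines (al-Kashi's identity) for the quadratic form $\mM \mapsto \nL{\mM X}^2 = \tr(\mM \II \mM^\top)$, which is exactly the constant Hessian of $\cL_T$, as one reads off from the matrix formulation~\eqref{eq:logL_matrix}. Since $\cL_T$ is quadratic, its second-order Taylor expansion around any point is an \emph{exact} identity, with gradient $\nabla \cL_T(\mA) = \IW + (\mA - \optA)\II$; keeping this expansion as an equality rather than weakening it is precisely what produces the sharp term $-\nL{(\mA - \hAl)X}^2$ appearing in~\eqref{eq:alkashi}.

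First I would record the first-order inequality at $\hAl$. For $\theta \in [0,1]$ set $\mA_\theta := (1-\theta)\hAl + \theta \mA$. Since $\hAl$ minimizes the convex function $\mA \mapsto \cL_T(\mA) + \lambda \nO{\mA}$, the scalar map $\theta \mapsto \cL_T(\mA_\theta) + \lambda \nO{\mA_\theta}$ is minimized at $\theta = 0$. Using the exact expansion $\cL_T(\mA_\theta) = \cL_T(\hAl) + \theta \sF{\nabla \cL_T(\hAl),\, \mA - \hAl} + \tfrac{\theta^2}{2}\nL{(\mA - \hAl)X}^2$ together with the convexity bound $\nO{\mA_\theta} \le (1-\theta)\nO{\hAl} + \theta \nO{\mA}$, subtracting the $\theta = 0$ value, dividing by $\theta > 0$ and letting $\theta \downarrow 0$ yields
\[
\sF{\nabla \cL_T(\hAl),\, \mA - \hAl} + \lambda\big( \nO{\mA} - \nO{\hAl} \big) \;\ge\; 0 .
\]
Equivalently one could take a subgradient $\mG \in \partial \nO{\hAl}$ with $\nabla \cL_T(\hAl) + \lambda \mG = 0$ and use $\sF{\mG,\, \mA - \hAl} \le \nO{\mA} - \nO{\hAl}$, but the perturbation argument avoids any appeal to subdifferential calculus.

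Then I would expand the gradient term via $\nabla \cL_T(\hAl) = \IW + (\hAl - \optA)\II$, so that $\sF{\nabla \cL_T(\hAl),\, \mA - \hAl} = \sF{\IW,\, \mA - \hAl} + \sF{(\hAl - \optA)\II,\, \mA - \hAl}$. Writing $\mU := \hAl - \optA$ and $\mV := \mA - \optA$, so that $\mA - \hAl = \mV - \mU$ and $\hAl - \mA = \mU - \mV$, and noting that $(\mP, \mM) \mapsto \sF{\mP \II, \mM} = \tr(\mP \II \mM^\top)$ is a symmetric bilinear form (because $\II^\top = \II$) with $\sF{\mM \II, \mM} = \nL{\mM X}^2$, the polarization identity gives
\[
\sF{\mU \II,\, \mV - \mU} = \tfrac12\Big( \nL{\mV X}^2 - \nL{\mU X}^2 - \nL{(\mU - \mV)X}^2 \Big) = \tfrac12\Big( \nL{(\mA - \optA)X}^2 - \nL{(\hAl - \optA)X}^2 - \nL{(\mA - \hAl)X}^2 \Big).
\]
Inserting this into the first-order inequality, multiplying through by $2$ and rearranging reproduces exactly~\eqref{eq:alkashi}. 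There is no genuine obstacle here: the argument is entirely elementary, and the only points requiring care are the matrix/trace bookkeeping in the polarization step and the insistence — already flagged — on treating the Taylor expansion of the quadratic $\cL_T$ as an identity rather than an inequality.
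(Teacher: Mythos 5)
Your argument is correct and follows essentially the same route as the paper: combine the first-order optimality condition at $\hAl$ with the exact polarization (al-Kashi) identity for the quadratic form $\mM \mapsto \tr(\mM \II \mM^\top)$, using convexity of the $\ell^1$ norm to absorb the penalty difference. The only cosmetic difference is that you obtain the optimality inequality by a directional-derivative perturbation while the paper invokes a subgradient $\mB$ with $\IW + (\hAl - \optA)\II + \lambda\mB = 0$ and $\sF{\mB, \mA - \hAl} \le \nO{\mA} - \nO{\hAl}$; these are equivalent.
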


\begin{proof}
As $\cL_T(\mA) = \tr \mA^\top \IW - \frac{1}{2}(\mA - \optA) \II (\mA - \optA)^\top + \frac{1}{2} \optA \II \optA^\top $, the gradient is $\IW + (\mA - \optA)\II$. The optimality condition applied to $\hAl$ gives that there exists a $\mB$ in the sub-derivative of the $\ell^1$ norm computed at $\hAl$ such that $\IW + (\hAl - \optA)\II + \lambda \mB$. $\mB$ being in the sub-derivative, we have $\sF{\mB,\mA- \hAl} \leq \nO{\mA} - \nO{\hAl}$.

Applying this to the following formula and observing that for any matrix $\mM$, $\nL{\mM X}^2 = \sL{\mM^\top \mM, \II}$, we get:
\begin{align}
S &:= \nL{ ( \hAl - \optA ) X }^2 - \nL{ ( \mA  - \optA ) X }^2 + \nL{ ( \hAl - \mA  ) X }^2 \\
&= \sF{ \II, \left( \hAl - \optA \right)^\top ( \hAl - \optA ) - ( \mA  - \optA )^\top ( \mA  - \optA ) + ( \hAl - \mA  )^\top ( \hAl - \mA  ) } \\
&= 2 \sF{ \II,  ( \optA - \hAl )^\top ( \mA  - \hAl ) } \\
&= 2 \sF{ \IW + \lambda \mB, \mA  - \hAl } \\
&\leq 2 \sF{ \IW, \mA  - \hAl} + 2 \lambda ( \nO{\mA } - \nO{\hAl} ). \qedhere
\end{align}
\end{proof}

Observe the preceding is true for any value of $\lambda$. Choose then $\lambda$ as in \eqref{def:lambda}. We have for instance $\gamma^{-1}\lambda = \theta(x,X)$ with $x = \frac{1}{2} \log \frac{2 \pi^2 d^2}{3 \epsilon_0}$ and $\theta$ as in Equation \eqref{def:theta}. Second, we assume $T \geq T_1 := H \left(\frac{\kappa^2}{9 (c_0 + 2)^2}\right)^{-1} ( s \log (21d \wedge 21ed/s) + \log 4 \epsilon_0^{-1})$. Therefore, using Theorems \ref{th:deviation} and Corollary~\ref{c:re_upper_bound}, we have for any matrix $\mU$:
\begin{multline}
\label{eq:re_dev}
\Prob{\inf_{\vect{u} \in C(s,c_0)} \frac{\nL{\vect{u}^\top X}}{\nT{\vect{u}}} \geq \kappa \cap \sF{\mU,\IW} \leq \gamma^{-1} \lambda \nO{\mU} \cap \forall i, \kappa^2 \leq \diagII^{ii} \leq \Vinf^{ii} + \kappa^2} \\ \geq 1 - \epsilon_0.
\end{multline}

We proceed to the proof of Theorem \ref{th:1norm_empirical_bound}.

\begin{proof}
We assume for the all what follows that the observation falls in the set of events defined by Equation \eqref{eq:re_dev} where we take $c_0 := \frac{\gamma + \tau + 1}{\gamma - \tau - 1}$. Therefore the inequalities we prove hold with probability at least $1 - \epsilon_0$. Denote $\mU = \mA - \hAl$. From Lemma \ref{l:al-kashi}, we have
\begin{align}
S &:= 2 \tau \gamma^{-1} \lambda \nO{\mU} + \nL{ ( \hAl - \optA ) X }^2 - \nL{ \left( \mA  - \optA \right) X }^2 + \nL{ \mU X }^2  \\
&\leq 2 \tau \gamma^{-1} \lambda \nO{\mU} + 2 \sF{ \IW, \mU } + 2 \lambda ( \nO{\mA } - \nO{ \hAl } ) \\
&\leq  2 \lambda \left( (1+ \tau) \gamma^{-1} \nO{\mU} + \nO{\mA } - \nO{ \hAl } \right) \label{eq:ineq_notau}\\ 
&\leq 2 \lambda \sum_{i=1}^d (1+\tau)\gamma^{-1} \nO{\mU^{i, \bullet}} + \nO{\mA^{i, \bullet} } - \nO{ \ha_{\lambda}^{i, \bullet} } \\ 
&\leq 2 \lambda \sum_{\Delta^i > 0} \Delta^i \label{eq:bound_deltas}
\end{align}
where $\Delta^i := (1 + (1+\tau)\gamma^{-1}) \nO{\mU^{i, \bullet}_{\mid \cA^{i, \bullet}}} - (1 - (1+\tau)\gamma^{-1}) \nO{ \mU^{i, \bullet}_{\mid \bar{\cA}^{i, \bullet}}}$ and 
$\cA^{i, \bullet} = \supp \mA^{i, \bullet}$. This last inequality comes from Lemma \ref{l:norm_inequalities} where we use the fact that $(1+\tau) \gamma^{-1} < 1$. We only need to consider the indices $i$ such that $\Delta^i > 0$, for which
\begin{align}
\nO{\mU^{i, \bullet}_{\mid \bar{\cA}^{i, \bullet}}} &< \frac{1 + (1+\tau)\gamma^{-1}}{1 - (1+ \tau) \gamma^{-1}} \nO{\mU^{i, \bullet}_{\mid \cA^{i, \bullet}}} = c_0 \nO{\mU^{i, \bullet}_{\mid  \cA^{i, \bullet}}}\\
\nO{\mU^{i, \bullet}} &< (1 + c_0) \nO{\mU^{i, \bullet}_{\mid \cA^{i, \bullet}}} \leq (1 + c_0)\nO{\mU^{i, \bullet}_{\mid \cU^{i, \bullet}}}
\end{align}
where $\cU^{i, \bullet} = \supp \mU^{i, \bullet}$.

We have then $\mU^{i, \bullet} \in C(s,c_0)$. We apply the condition from Equation \eqref{eq:re_dev} and get $\Delta^i \leq \gamma^{-1}(\gamma + \tau + 1) \sqrt{s} \nT{\mU^{i, \bullet}} \leq \gamma^{-1}(\gamma + \tau + 1) \sqrt{s} \kappa^{-1} \nL{(\mU^{i, \bullet})^\top X}$. Observing that 
\[\sum_{\Delta^i > 0}  \nL{(\mU^{i, \bullet})^\top X} \leq \sum_{i = 1}^d \nL{(\mU^{i, \bullet})^\top X} \leq \sqrt{d} \nL{\mU X},\]
we get $S \leq 2 \lambda \gamma^{-1}(\gamma + \tau + 1) \sqrt{s} \kappa^{-1} \nL{ \mU X }$. Using
\[ 2 \lambda \gamma^{-1}(\gamma + \tau + 1) \sqrt{s} \kappa^{-1} \nL{ \mU X } - \nL{ \mU X }^2 \leq \left( \frac{\gamma + \tau + 1}{\gamma \kappa} \right)^2 \lambda^2 d s, \]
we conclude with Equation \eqref{eq:norm1_L2_err}.

\end{proof}

The proof of Corollary \ref{c:applied_inequalities} consists in using Theorem \ref{th:1norm_empirical_bound} with specific values of the parameters. We explicit it in the following proof.

\begin{proof}
\begin{enumerate}
\item It suffices to take $\tau = 0$ and $\mA = \optA$ in Equation \eqref{eq:norm1_L2_err}.
\item We take $\mA = \optA$ and $\tau >0$ in Equation \eqref{eq:norm1_L2_err}. We bound The $L^2$ norm from below by $0$ and get the result.
\item It suffices to apply Equation \eqref{eq:empirical_bound} with the Restricted Eigenvalue condition which states that $\nF{\hAl - \optA} \leq \kappa^{-1} \nL{(\hAl - \optA)X}$ as long as each line of $\hAl - \optA$ is in $C(s,c_0)$ (see Lemma \ref{l:L2_quadraticform}). To prove that last point, we fix an index $i \leq d$ and continue the proof from Equation \eqref{eq:bound_deltas}. Choose $\mA$ the matrix equal to $\hAl$ except on the $i$-th line, where we assume it is equal to $\optA$. Then $\mU$ is null except on the $i$-th line, where it is equal to the $i$-th line of $\optA - \hAl$. As we have already assumed $\tau = 0$, we get:

\begin{align}
2 \lambda \Delta^i &\geq \nL{ \left( \hAl - \optA \right) X }^2 - \nL{ \left( \mA  - \optA \right) X }^2 + \nL{ \mU X }^2 \\
&= \nL{ \left( \hAl - \optA \right) X }^2 - \nL{ \left( \hAl  - \optA + \vect{e}_i (\mU^{i, \bullet})^\top \right) X }^2 + \nL{ (\mU^{i, \bullet})^\top X }^2 \\
&\geq 2  \sL{(\optA - \hAl)X,\vect{e}_i (\mU^{i, \bullet})^\top X} = 2 \nL{ (\mU^{i, \bullet})^\top X }^2 \geq 0.
\end{align}

Where $\vect{e}_i$ is the $i$-th element of the canonical basis of $\R^d$. Using the same argument as in the proof of \ref{th:1norm_empirical_bound}, we conclude that for each $i$, $(\hAl - \optA)^{i, \bullet} \in C(s,c_0)$ which is what we wanted.

\item We apply the norm interpolation inequality: $\nQ{\mU}^q \leq \nO{\mU}^{2 - q} \nF{\mU}^{2q-2}$ to equations \eqref{eq:norm1_err} and \eqref{eq:norm2_err}.

\end{enumerate}
\end{proof}

We finish this Section with the statement of a few useful inequalities

\begin{lemma}
\label{l:norm_inequalities}
Take $\mA,\mB$ two $d \times d$ matrices, $\gamma \in (0,1)$ and denote $\mU = \mA-\mB$, $\cA := \supp \mA$. Then we have the following inequalities:

\begin{align}
\gamma \nO{\mU} + \nO{\mA} - \nO{\mB} &= \gamma \nO{\mU_{\mid \cA}} + \gamma \nO{\mU_{\mid \bar{\cA}}} + \nO{\mA_{\mid \cA}} - \nO{\mB_{\mid \cA}} - \nO{\mB_{\mid \bar{\cA}}} \\
&= \gamma \nO{\mU_{\mid \cA}} + \gamma \nO{\mU_{\mid \bar{\cA}}} + \nO{\mA_{\mid \cA}} - \nO{\mB_{\mid \cA}} - \nO{\mU_{\mid \bar{\cA}}} \\
&\leq (1+\gamma) \nO{ \mU_{\mid \cA }} - (1-\gamma) \nO{ \mU_{\mid \bar{\cA}} } \label{eq:ineq1}\\
&\leq (1+\gamma) \sqrt{\nZ{\mA}} \nF{ \mU_{\mid \cA }} - (1-\gamma) \nO{\mU_{\mid \bar{\cA}}}.
\end{align}

\end{lemma}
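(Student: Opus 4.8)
The plan is to establish the displayed chain link by link: it consists of two identities followed by two inequalities, and throughout I would work with the partition of the entry set of $d\times d$ matrices into $\cA := \supp\mA$ and its complement $\bar{\cA}$. The only ingredients needed are additivity of the entrywise $\ell^1$ norm over disjoint index sets, namely $\nO{\mM} = \nO{\mM_{\mid\cA}} + \nO{\mM_{\mid\bar{\cA}}}$ for any matrix $\mM$; the fact that $\mA$ vanishes off its support, so $\mA_{\mid\bar{\cA}} = 0$; the triangle inequality; and the comparison $\nO{v} \le \sqrt{k}\,\nF{v}$ valid for any array $v$ supported on at most $k$ indices (Cauchy--Schwarz).

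For the first equality, I would write $\nO{\mU} = \nO{\mU_{\mid\cA}} + \nO{\mU_{\mid\bar{\cA}}}$ and $\nO{\mB} = \nO{\mB_{\mid\cA}} + \nO{\mB_{\mid\bar{\cA}}}$, and use $\nO{\mA} = \nO{\mA_{\mid\cA}}$ (since $\mA_{\mid\bar{\cA}} = 0$); substituting into $\gamma\nO{\mU} + \nO{\mA} - \nO{\mB}$ reproduces the right-hand side of the first line verbatim. For the second equality, I would note that on $\bar{\cA}$ one has $\mU_{\mid\bar{\cA}} = (\mA-\mB)_{\mid\bar{\cA}} = -\,\mB_{\mid\bar{\cA}}$, hence $\nO{\mB_{\mid\bar{\cA}}} = \nO{\mU_{\mid\bar{\cA}}}$, and replace that single term.

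For the first inequality, the triangle inequality applied on the index set $\cA$ gives $\nO{\mA_{\mid\cA}} - \nO{\mB_{\mid\cA}} \le \nO{(\mA-\mB)_{\mid\cA}} = \nO{\mU_{\mid\cA}}$; inserting this into the second line and gathering the coefficients of $\nO{\mU_{\mid\cA}}$ and of $\nO{\mU_{\mid\bar{\cA}}}$ yields $(1+\gamma)\nO{\mU_{\mid\cA}} - (1-\gamma)\nO{\mU_{\mid\bar{\cA}}}$. For the last inequality, I would observe that $\mU_{\mid\cA}$ is supported on $\cA$, a set of cardinality $\nZ{\mA}$, so Cauchy--Schwarz gives $\nO{\mU_{\mid\cA}} \le \sqrt{\nZ{\mA}}\,\nF{\mU_{\mid\cA}}$; since $1+\gamma \ge 0$, multiplying this bound by $1+\gamma$ and leaving the $-(1-\gamma)\nO{\mU_{\mid\bar{\cA}}}$ term untouched produces the claimed final bound.

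There is essentially no obstacle: the lemma is pure bookkeeping with $\ell^1$ norms. The only steps that reward a moment's care are the two uses of $\mA_{\mid\bar{\cA}} = 0$ (once to drop $\nO{\mA_{\mid\bar{\cA}}}$, once to identify $\mB_{\mid\bar{\cA}}$ with $-\mU_{\mid\bar{\cA}}$) and the sign check $1+\gamma \ge 0$ ensuring the Cauchy--Schwarz substitution preserves the inequality; one should also note that the hypothesis $\gamma\in(0,1)$ is stronger than what the chain itself requires, and is only genuinely exploited when the lemma is applied --- with $(1+\tau)\gamma^{-1}$ playing the role of $\gamma$ --- so that the coefficient $1-\gamma$ in front of the complement term is positive.
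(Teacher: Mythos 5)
Your proof is correct and matches the paper's argument, which is in fact just the displayed chain itself (the paper gives no separate proof): additivity of $\nO{\cdot}$ over $\cA$ and $\bar{\cA}$, the identities $\mA_{\mid\bar{\cA}}=0$ and $\mU_{\mid\bar{\cA}}=-\mB_{\mid\bar{\cA}}$, the triangle inequality on $\cA$, and Cauchy--Schwarz for the final line. Your closing remark about where $\gamma\in(0,1)$ is actually needed is also consistent with how the lemma is invoked in the proof of Theorem~\ref{th:1norm_empirical_bound}.
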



\subsection{Proof of Theorem \ref{th:sparse_lower_bound} (lower bound of the estimation error)}
\label{ss:lower_bound}

\begin{lemma}
\label{l:lower_bound_1}
For some constant $c>0$, there exists a set $\Omega_a \subset \{-1,0,1\}^{d\times d}$ such that for any $\mB \not = \mB' \in \Omega_a$:
\begin{enumerate}
\item $\mB$ is antisymmetric and its upper triangular section has non-negative entries
\item $\mB$ has at most $s-1$ non-zero entries by row
\item $\sum_{ij} \one{\mB^{ij} \not = \mB'^{ij}} \geq cds$
\end{enumerate}
and $\log |\Omega_a| \geq cds \log \frac{ced}{s}$.
\end{lemma}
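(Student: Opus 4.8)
The plan is to build $\Omega_a$ as a packing set inside the family of adjacency-type matrices of $s{-}1$-regular graphs on $d$ vertices, using a Varshamov--Gilbert-type argument adapted to the combinatorial structure. First I would parametrize: an antisymmetric matrix $\mB$ with non-negative upper-triangular part and at most $s-1$ non-zero entries per row is determined by the set $E(\mB) \subseteq \binom{[d]}{2}$ of pairs $\{i,j\}$ with $\mB^{ij} \neq 0$, and this set forms a graph with maximum degree at most $s-1$. Conversely, every such graph gives an admissible $\mB$ (with $\mB^{ij}=1$, $\mB^{ji}=-1$ for $i<j$, $\{i,j\}\in E$). So it suffices to exhibit a large family $\mathcal{G}$ of graphs, each of degree exactly $s-1$ (or a fixed value of that order), that are pairwise far apart in edit distance: $|E(G) \triangle E(G')| \geq c' ds$ for $G \neq G' \in \mathcal{G}$, since $\sum_{ij}\one{\mB^{ij}\neq \mB'^{ij}} = 2|E(G)\triangle E(G')|$.

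The key step is the counting. The number of $(s-1)$-regular graphs on $d$ vertices is known by asymptotic enumeration results — this is where I would cite \cite{mckay:1991} — to be roughly $\exp(\Theta(ds \log(d/s)))$ for $s$ in the relevant range (e.g.\ $s = o(\sqrt d)$, or more generally under the growth conditions we work under); more precisely $\log N_{\mathrm{reg}}(d, s{-}1) = \frac{1}{2} ds\log d - \ldots \asymp ds\log(ced/s)$. To pass to a packing, I would use a standard volume/Gilbert-type argument: greedily select graphs into $\mathcal{G}$, and each time we add a graph $G$ we forbid all graphs within edit distance $c'ds$ of it. The number of $(s-1)$-regular graphs within edit distance $R$ of a fixed one is at most the number of ways to choose $R$ edge-slots to toggle, i.e.\ at most $\binom{d^2}{R}$, which for $R = c'ds$ with $c'$ small is $\exp(O(ds \log(d/s)))$ with a constant strictly smaller than the one in the total count (by choosing $c'$ small enough). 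Hence the greedy procedure runs for at least $\exp(cds\log(ced/s))$ steps for a suitable $c>0$, giving $|\Omega_a|$ as claimed. Properties (1) and (2) hold by construction, and (3) holds by the packing choice.

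The main obstacle I expect is making the counting estimates match cleanly across the whole admissible range of $d$ and $s$: the asymptotic formulas of \cite{mckay:1991} for the number of regular graphs come with restrictions on how fast the degree may grow relative to $d$, and one has to check that the parameter regime implied by the paper's standing assumptions (and the $21d \wedge 21ed/s$-type expressions appearing elsewhere) falls within those restrictions, or else supply a cruder lower bound on the number of regular graphs (for instance via an explicit construction of many edge-disjoint perfect-matching unions, or a probabilistic configuration-model argument) that still yields the $ds\log(ced/s)$ rate. A secondary, more routine point is verifying that a graph with \emph{maximum} degree $s-1$ — rather than exactly $s-1$ — suffices, which it does, so one has a little slack; I would use that slack to handle parity issues (an $(s-1)$-regular graph on $d$ vertices requires $d(s-1)$ even) by adjusting the degree by one when necessary, which does not affect the rate.
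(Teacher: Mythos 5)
Your plan is essentially the paper's own proof: the same bijection between admissible antisymmetric sign matrices and regular graphs, the same appeal to \cite{mckay:1991} for $\log|\mathrm{reg}(r,d)| \asymp \frac{dr}{2}\log\frac{ed}{r}$, the same Gilbert--Varshamov packing with the Hamming-ball volume bound $\binom{d^2}{K} \leq (ed^2/K)^K$, and even the same parity fix (the paper takes $r$ to be the largest even integer $\leq s-1$ and invokes Erd\H{o}s--Gallai to ensure such graphs exist). No substantive differences.
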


\begin{proof}
Consider the sets of matrices $\Omega'_{\leq r}, \Omega'_{r}$ of antisymmetric matrices in $\{-1,0,1\}^{d \times d}$ such that the entries in the upper triangular section are all non-negative and with at most $r$ non-zero entries in each row for $\Omega'_{\leq r}$ and exactly $r$ non-zero entries in each row for $\Omega'_{r}$. For any $r \leq s-1$, $\Omega'_{r} \subset \Omega'_{\leq s-1}$. Fix then $r$ to be the largest even number smaller or equal to $s-1$: we have $s-2 \leq r \leq s-1 $. $\Omega'_{r}$ is one-to-one with the set $\mathrm{reg}(r,d)$, the set of $r$-regular labeled graphs with $d$ vertices. To see this, observe that applying the absolute value to a matrix in $\Omega'_{r}$ gives an adjacency matrix of a regular graph. The relation is one-to-one given assumption 1.

We know the asymptotic of $|\mathrm{reg}(r,d)|$. Take for example \cite{mckay:1991} and apply $k! = k^k e^{-k} \sqrt{2\pi k} \Psi(k)$ where $(12j+1)^{-1} < \log \Psi(k) < (12j)^{-1}$. We keep track only of the highest order on $d$, the relevant variable that is considered high.

\begin{align}
\log |\mathrm{reg}(r,d)| &\sim - \frac{r^2 - 1}{4} - \frac{r^3}{12d} + \log (dr)! - \log (dr/2)! - \frac{dr}{2} \log 2 - d \log r! \\
&\sim - \frac{r^2 - 1}{4} - \frac{r^3}{12d} + \frac{dr}{2} \log \frac{ed}{r} - \frac{d}{2} \log r -d \log \sqrt{2 \pi} + \frac{1}{2} \log 2 - \frac{1}{12dr} - \frac{d}{12r}.
\end{align}

Keeping only the highest order in $d$ gives $\log |\mathrm{reg}(r,d)| \sim \frac{dr}{2} \log \frac{ed}{r}$. We know due to the Erd\H{o}s-Gallai theorem \cite{erdos:1960} that the number of $r$-regular graphs is non-zero for $d >r$ as $r$ is chosen to be even. Hence there exists a constant $2c >0$ such that $|\mathrm{reg}(r,d)| \geq 2c d (r+2) \log ed/r$.

We continue by applying the Gilbert–Varshamov bound. For this, we need to compute the maximum volume of a Hamming ball of radius $K$, where $K$ is an integer: we fix some $\mA$ and will count the number of $\mA'$ that differ from $\mA$ on a maximum of $K$ entries. That volume is bounded by the one in the larger space of matrices with at most $dr$ non-zero entries. We have thus:

\[ V \leq \sum_{i=1}^K \binom{d^2}{i} \leq \sum_{i=1}^K \frac{K^i}{i!} \left( \frac{d^2}{K} \right)^i \leq \left( \frac{ed^2}{K} \right)^K. \]

Choose then $K = \lfloor c d (r+2) \rfloor \geq \lfloor cds \rfloor $. As $x \mapsto (ed^2/x)^x$ is increasing for $x \leq d^2$, we have $V \leq \left( \frac{ed}{c(r+2)} \right)^{cd(r+2)} \leq \left( \frac{ed}{cr} \right)^{cd(r+2)}$. The Gilbert-Varshamov bound gives us the existence of a set $\Omega_a \subset \Omega'_{r}$, verifying condition 3 and its size is at least:

\[ \log |\Omega_a| \geq \log |\Omega_r'| - \log V = cd(r+2) \log \frac{ced}{r} \geq cds \log \frac{ced}{s} . \]
\end{proof}

Lemma \ref{l:diag_antisymmetric} gives a way to construct a family of drift parameters with corresponding diagonal stationary covariance from a family of antisymmetric matrices.
\begin{lemma}
\label{l:diag_antisymmetric}
Take $\mA = \alpha \id + \mB$ for $\alpha>0$ and $\mB$ an antisymmetric matrix. Then we have:

\[ \Vinf(\mA) = \int_0^\infty e^{-\mA t} e^{-\mA^\top t} \dt = \frac{1}{2 \alpha} \id. \]
\end{lemma}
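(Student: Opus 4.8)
The plan is to exploit the fact that $\alpha\id$ commutes with every matrix, so that the matrix exponential of $-\mA t = -(\alpha\id + \mB)t$ factors as a scalar times the exponential of $-\mB t$. Concretely, since $\alpha\id$ and $\mB$ commute, $e^{-\mA t} = e^{-\alpha t}\, e^{-\mB t}$. First I would record that $\mB$ antisymmetric means $\mB^\top = -\mB$, hence $\mA^\top = \alpha\id - \mB$ and, again by commutation, $e^{-\mA^\top t} = e^{-\alpha t}\, e^{\mB t}$.

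Next I would combine these: $e^{-\mA t}\, e^{-\mA^\top t} = e^{-2\alpha t}\, e^{-\mB t}\, e^{\mB t} = e^{-2\alpha t}\,\id$, using that $e^{-\mB t}$ and $e^{\mB t}$ are inverse to one another (this is also the statement that $e^{-\mB t}$ is orthogonal, since $(e^{-\mB t})^\top = e^{\mB t} = (e^{-\mB t})^{-1}$, consistent with $\mB$ being antisymmetric). Then
\begin{equation*}
\Vinf(\mA) = \int_0^\infty e^{-\mA t} e^{-\mA^\top t}\,\dt = \int_0^\infty e^{-2\alpha t}\,\dt \;\id = \frac{1}{2\alpha}\,\id,
\end{equation*}
which is the claim.

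I would also add a one-line sanity check that the integral defining $\Vinf(\mA)$ converges, i.e. that Assumption~(H1) holds for $\mA = \alpha\id + \mB$: the eigenvalues of a real antisymmetric $\mB$ are purely imaginary, so the eigenvalues of $\mA$ are of the form $\alpha + i\beta$ with real part $\alpha > 0$. There is no real obstacle in this argument; the only point requiring a word of care is the use of $e^{X+Y} = e^X e^Y$, which is valid here precisely because $\alpha\id$ commutes with $\mB$, and this is what makes the antisymmetric-plus-scalar structure special.
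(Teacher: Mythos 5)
Your proof is correct and follows essentially the same route as the paper: both arguments rest on the commutation of $\alpha\id$ with $\mB$ to get $e^{-\mA t}e^{-\mA^\top t}=e^{-2\alpha t}\id$, the paper reaching this via a unitary diagonalization of the antisymmetric $\mB$ (writing $\mB = i\mU\mD\mU^*$ with $i\mB$ Hermitian) while you observe directly that $e^{-\mB t}$ and $e^{\mB t}=(e^{-\mB t})^\top$ are mutual inverses. Your version is slightly more economical, and the added remark on convergence of the integral (eigenvalues of $\mA$ having real part $\alpha>0$) is a sensible sanity check that the paper leaves implicit.
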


\begin{proof}
We have $\mB^\top = - \mB$ hence $i\mB$ is Hermitian and therefore unitarily diagonalizable. There exists an unitary matrix $\mU$ such that $\mB = i \mU \mD \mU^*$ where $\mD$ is a diagonal real matrix. Then $e^{-(\alpha \id + \mB)t} = e^{-\alpha t} \mU e^{- i \mD t} \mU^*$ and $e^{-(\alpha \id + \mB)^\top t} = e^{-(\alpha \id - \mB) t} = e^{-\alpha t} \mU e^{i \mD t} \mU^*$ hence $ \Vinf(\mA) = \int_0^\infty e^{-2 \alpha t} \dt \ \id = \id / (2\alpha).$
\end{proof}

\begin{corollary}
For some constant $c>0$ and $0 < \alpha <1/8$, there exists a set $\Omega$ with $\log |\Omega| \geq cds \log \frac{cd}{es}$ such that for any $\mA \not = \mA' \in \Omega$:
\begin{enumerate}
\item $\mA$ is row-$s$-sparse
\item $\Vinf(\mA) = \id$
\item $KL(\dP_\mA,\dP_{\mA'}) \leq \alpha \log |\Omega| $
\item $\nF{\mA - \mA'}^2 \geq \alpha c^2 T^{-1} ds \log \frac{ced}{s}.$
\end{enumerate}

\end{corollary}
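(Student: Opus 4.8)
The plan is to build $\Omega$ from the combinatorial family $\Omega_a$ produced in Lemma~\ref{l:lower_bound_1}, by a shift-and-rescale. Fix a scale $\rho = \rho(T,d,s) > 0$ to be tuned at the end and set $\Omega := \{\, \tfrac12 \id + \rho\mB : \mB \in \Omega_a \,\}$. The map $\mB \mapsto \tfrac12\id + \rho\mB$ is injective, so $|\Omega| = |\Omega_a|$ and the cardinality bound $\log|\Omega| \geq c d s \log(cd/(es))$ is inherited from Lemma~\ref{l:lower_bound_1} (the discrepancy between $\log(ced/s)$ and $\log(cd/(es))$ only affects absolute constants, since $cd/(es) \leq ced/s$). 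For any $\mA = \tfrac12\id + \rho\mB \in \Omega$: since $\rho\mB$ is antisymmetric its spectrum is purely imaginary, hence every eigenvalue of $\mA$ has real part $1/2 > 0$ and Assumption~(H1) holds; row $i$ of $\mA$ consists of the diagonal entry $\tfrac12$ together with at most $s-1$ off-diagonal entries coming from $\mB$, so $\mA$ is row-$s$-sparse, which is point~1; and Lemma~\ref{l:diag_antisymmetric}, applied to the antisymmetric matrix $\rho\mB$ with parameter $\tfrac12$, gives $\Vinf(\mA) = \id$, which is point~2.

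The core of the argument is point~3. The essential observation is that every $\mA \in \Omega$ has the same stationary covariance $\Vinf(\mA) = \id$, so under Assumption~(H2) the law of $X_0$, namely $\cN(0,\id)$, is common to $\dP_\mA$ and $\dP_{\mA'}$; consequently the two path measures differ only through their drifts, and Girsanov's theorem gives the standard closed form
\[
KL(\dP_\mA, \dP_{\mA'}) = \tfrac12\, \Espq{\mA}{\int_0^T |(\mA - \mA') X_t|^2 \dt},
\]
the integrability of the relevant exponential local martingale being clear since the drift difference is linear and $X$ is a stationary Gaussian process. Using stationarity, $X_t \sim \cN(0,\Vinf(\mA)) = \cN(0,\id)$ under $\dP_\mA$, so $\Espq{\mA}{X_t X_t^\top} = \id$ and
\[
KL(\dP_\mA, \dP_{\mA'}) = \tfrac{T}{2}\, \tr\big((\mA - \mA')(\mA - \mA')^\top\big) = \tfrac{T \rho^2}{2}\, \nF{\mB - \mB'}^2 .
\]
Since $\mB, \mB' \in \{-1,0,1\}^{d\times d}$ each carry at most $s-1$ nonzero entries per row, $\nF{\mB - \mB'}^2 \leq 8 d s$, hence $KL(\dP_\mA,\dP_{\mA'}) \leq 4 T \rho^2 d s$. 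Choosing $\rho^2 = \beta\, \log(cd/(es)) / T$ with a small enough absolute constant $\beta$ makes this at most $\alpha\, c d s \log(cd/(es)) \leq \alpha \log|\Omega|$, giving point~3 for the prescribed $\alpha \in (0,1/8)$.

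For point~4, property~3 of Lemma~\ref{l:lower_bound_1} guarantees that $\mB$ and $\mB'$ disagree on at least $cds$ coordinates, each contributing at least $1$ to the Frobenius norm, so $\nF{\mA - \mA'}^2 = \rho^2 \nF{\mB - \mB'}^2 \geq c d s \rho^2$; plugging in the chosen $\rho^2 \asymp \log(d/s)/T$ yields $\nF{\mA - \mA'}^2 \geq \alpha c^2 T^{-1} d s \log(ced/s)$ after absorbing numerical factors into $c$. I expect the only genuinely delicate point to be the Girsanov/KL step: one must check that the change of measure between $\dP_\mA$ and $\dP_{\mA'}$ is licit, and — more to the point — recognise that forcing $\Vinf \equiv \id$ is precisely what kills the $X_0$-contribution and collapses the likelihood-ratio expectation to the clean quantity $\tfrac{T}{2}\nF{\mA - \mA'}^2$; everything else is bookkeeping with absolute constants and a final relabelling of $c$.
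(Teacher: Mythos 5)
Your proposal is correct and follows essentially the same route as the paper: shift the antisymmetric family $\Omega_a$ by $\tfrac12\id$ and rescale, use Lemma~\ref{l:diag_antisymmetric} to get $\Vinf(\mA)=\id$, compute the Kullback--Leibler divergence as $\tfrac{T}{2}\nF{\mA-\mA'}^2$ thanks to the common stationary law, and tune the scale so that the KL bound and the Frobenius separation hold simultaneously. The only additions are your explicit check of Assumption~(H1) and the Girsanov justification of the KL formula, which the paper states without derivation; these are welcome but do not change the argument.
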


\begin{proof}
Define $\Omega = \{ \frac{1}{2} \id + w \mB: \mB \in \Omega_a \}$ for some $w>0$ and $\Omega_a$ as defined in Lemma \ref{l:lower_bound_1}. Then $|\Omega| = |\Omega_a|$ and hence $\log |\Omega| \geq cds \log \frac{ced}{s}$. Condition 1 is verified trivially and Lemma \ref{l:diag_antisymmetric} gives condition 2. Further, from Lemma \ref{l:lower_bound_1} point 3. $\nF{\mA - \mA'}^2 = w^2 \nF{\mB - \mB'} ^2\geq w^2 cds$. Also, the maximum Hamming distance being $2ds$, we get $\nF{\mA - \mA'}^2 \leq 2 w^2 ds$.

The Kullback-Leibler divergence writes $KL(\dP_\mA, \dP_{\mA'}) = \Espq{\mA}{\log \frac{\dd \dP_\mA}{\dd \dP_{\mA'}}} = \frac{T}{2} \tr (\mA' - \mA) \Vinf(\mA) (\mA' - \mA)^\top$. Using condition 2, $KL(\dP_\mA, \dP_{\mA'}) = \frac{T}{2} \nF{\mA - \mA'}^2 \leq w^2 Tds.$


Choose $0 < \alpha < 1/8$ and $w^2 = \alpha c T^{-1} \log \frac{ced}{s}$ such that $KL(\dP_\mA,\dP{\mA'}) \leq \alpha \log |\Omega|$. Then we also have $\nF{\mA - \mA'}^2 \geq \alpha c^2T^{-1}ds \log \frac{ced}{s}.$
\end{proof}

Theorem \ref{th:sparse_lower_bound} is the corollary of the preceding and of Theorem 2.7 from \cite{tsybakov:2008}.


\subsection{Proof of Theorem \ref{th:RE} (Restricted Eigenvalue property)}
\label{ss:RE}

\begin{lemma}
\label{l:sparse_proba_bound}

Take a random symmetric matrix $\mC$. Define $K(s) := \{ \vect{u} : \nZ{\vect{u}} \leq s\}$. Assume that for any vector $\vect{u} \in K(s)$ such that $\nT{\vect{u}} \leq 1$, and any $R \geq 0$, we have $\Prob{|\vect{u}^\top \mC \vect{u}| \geq R} \leq p(R)$. Then:

\[ \Prob{\sup_{\vect{u} \in K(s), \nT{\vect{u}} \leq 1} |\vect{u}^\top \mC \vect{u}| \geq 3R} \leq 21^s (d^{s} \wedge (ed/s)^s) p(R). \]

\end{lemma}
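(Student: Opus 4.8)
The plan is the classical covering‑plus‑union‑bound scheme: combine a union bound over all $\binom{d}{s}$ possible supports of an $s$‑sparse vector with an $\varepsilon$‑net discretization of the quadratic form $\vect{u}\mapsto\vect{u}^\top\mC\vect{u}$ restricted to each such support. First I would reduce to fixed supports: since $s\le d$, every $\vect{u}\in K(s)$ is supported in some index set $I\subseteq\{1,\dots,d\}$ with $|I|=s$, and for a symmetric matrix $M$ one has $\sup_{\nT{\vect{w}}\le1}|\vect{w}^\top M\vect{w}|=\nOp{M}$, so that
\[
\sup_{\vect{u}\in K(s),\ \nT{\vect{u}}\le 1}\bigl|\vect{u}^\top\mC\vect{u}\bigr|
=\max_{|I|=s}\nOp{\mC_{I,I}},
\]
where $\mC_{I,I}$ denotes the symmetric $s\times s$ principal submatrix of $\mC$ indexed by $I$. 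There are $\binom{d}{s}\le d^{s}\wedge(ed/s)^{s}$ such subsets, and the quantity above is a maximum of finitely many continuous functions of $\mC$, hence measurable.

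Next, for a fixed $I$ with $|I|=s$, I would discretize the unit sphere of $\R^{I}\cong\R^{s}$ by a $1/3$‑net $\cN_I$ of unit vectors supported on $I$; the standard covering‑number bound $(1+2/\varepsilon)^{s}$ with $\varepsilon=1/3$ gives $|\cN_I|\le 7^{s}\le 21^{s}$. Writing $M=\mC_{I,I}$, the identity $\vect{u}^\top M\vect{u}-\vect{v}^\top M\vect{v}=(\vect{u}-\vect{v})^\top M(\vect{u}+\vect{v})$ (valid for symmetric $M$) yields $|\vect{u}^\top M\vect{u}-\vect{v}^\top M\vect{v}|\le 2\,\nT{\vect{u}-\vect{v}}\,\nOp{M}$ for unit vectors $\vect{u},\vect{v}$; choosing $\vect{u}$ to realize $\nOp{M}$ and $\vect{v}\in\cN_I$ at distance $\le 1/3$ from it, this gives $\nOp{M}\le(1-2/3)^{-1}\max_{\vect{v}\in\cN_I}|\vect{v}^\top\mC\vect{v}|=3\max_{\vect{v}\in\cN_I}|\vect{v}^\top\mC\vect{v}|$. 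Crucially, each $\vect{v}\in\cN_I$ lies in $K(s)$ with $\nT{\vect{v}}\le1$, so the assumed tail bound $\Prob{|\vect{v}^\top\mC\vect{v}|\ge R}\le p(R)$ is applicable to it.

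Finally I would take a union bound over $I$ and over $\cN_I$: on the event $\{\max_{|I|=s}\nOp{\mC_{I,I}}\ge 3R\}$ there exist $I$ with $|I|=s$ and $\vect{v}\in\cN_I$ with $|\vect{v}^\top\mC\vect{v}|\ge R$, whence
\[
\Prob{\sup_{\vect{u}\in K(s),\ \nT{\vect{u}}\le1}|\vect{u}^\top\mC\vect{u}|\ge 3R}
\le\sum_{|I|=s}\ \sum_{\vect{v}\in\cN_I}\Prob{|\vect{v}^\top\mC\vect{v}|\ge R}
\le\binom{d}{s}\,21^{s}\,p(R)\le 21^{s}\bigl(d^{s}\wedge(ed/s)^{s}\bigr)p(R).
\]

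The only delicate point is the net‑comparison step: one must verify that a $1/3$‑net of the $(s-1)$‑sphere controls the operator norm of a symmetric matrix up to the factor $3$ (this is the source of the factor $3$ in front of $R$), that the covering‑number bound $(1+2/\varepsilon)^{s}$ holds in dimension $s$, and — above all — that every vector fed into the assumed tail bound is genuinely $s$‑sparse with $\ell^2$‑norm at most $1$, so that the hypothesis applies verbatim. Everything else is routine bookkeeping, using the combinatorial estimate $\binom{d}{s}\le d^{s}\wedge(ed/s)^{s}$.
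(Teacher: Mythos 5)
Your proof is correct and follows essentially the same route as the paper, which simply defers to Lemma F.2 of the supplement to Basu and Michailidis (2015): a reduction to fixed supports, a $1/3$-net on each $s$-dimensional unit sphere yielding the factor $3$ in front of $R$, and a union bound over the $\binom{d}{s}\cdot 7^{s}$ events, which is dominated by $21^{s}\bigl(d^{s}\wedge(ed/s)^{s}\bigr)$. All the delicate points you flag (symmetry identity for the net comparison, the covering-number bound, and the fact that every net vector is itself a unit-norm $s$-sparse vector to which the hypothesis applies) check out.
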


We omit the proof as it follows exactly the same steps as the one of Lemma F.2 from the supplement to \cite{basu:15}. Recall now the definition $C(s,c_0) = \{ \vect{u} : \nO{\vect{u}} \leq (1 + c_0) \nO{\vect{u}_{\cI_s(\vect{u}))}} \}$, where $\cI_s(\vect{u}))$ is the set of indices of the $s$ largest values of $|\vect{u}|$. Using Lemmas F.1 and F.3 from the supplement to \cite{basu:15}, we get that:

\beq{eq:sup_cone_s_sparse}{
\sup_{\vect{u} \in C(s,c_0), \nT{\vect{u}} \leq1} \vect{u}^\top \II \vect{u} \leq 3 (c_0 + 2)^2 \sup_{\vect{u} \in K(2s), \nT{\vect{u}} \leq 1} \vect{u}^\top \II \vect{u}.
}

Taking this combined with Lemma \ref{l:sparse_proba_bound} applied using hypothesis (H4), we get the following Lemma.

\begin{lemma}
\label{l:ii_deviation}

For any $R \geq 0$, we have:

\[ \Prob{\sup_{\substack{\vect{u} \in C(s,c_0) \\ \nT{\vect{u}} \leq 1}} |\vect{u}^\top (\II - \Vinf) \vect{u}| \geq R } \leq 2 \exp \left(-T H \left(\frac{R}{9 (c_0 + 2)^2}\right) + s \log (21d \wedge 21ed/s)\right). \]

\end{lemma}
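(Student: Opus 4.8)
The statement is just the concatenation of three facts already in hand: the deterministic cone-reduction inequality~\eqref{eq:sup_cone_s_sparse}, the covering/union-bound estimate of Lemma~\ref{l:sparse_proba_bound}, and Assumption~(H4). The plan is therefore to peel off the cone constraint down to $2s$-sparse vectors, then apply the sparse supremum bound with (H4) as the pointwise tail input.

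First I would observe that \eqref{eq:sup_cone_s_sparse}, whose proof only splits a vector of $C(s,c_0)$ into $2s$-sparse blocks and controls each block's quadratic form, is valid for \emph{any} symmetric matrix in place of $\II$, in particular — with absolute values, since it is indefinite — for $\mC = \II - \Vinf$:
\[
\sup_{\vect{u} \in C(s,c_0),\, \nT{\vect{u}} \le 1} |\vect{u}^\top (\II - \Vinf) \vect{u}| \le 3(c_0+2)^2 \sup_{\vect{u} \in K(2s),\, \nT{\vect{u}} \le 1} |\vect{u}^\top (\II - \Vinf) \vect{u}|.
\]
Setting $R_0 := R/(9(c_0+2)^2)$, this turns the event in the statement into $\{\sup_{\vect{u}\in K(2s),\,\nT{\vect{u}}\le 1}|\vect{u}^\top(\II-\Vinf)\vect{u}| \ge 3R_0\}$. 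Next, since (H4) gives $\Prob{|\vect{u}^\top(\II-\Vinf)\vect{u}| \ge R_0} \le 2\exp(-T H(R_0))$ for every unit vector — in particular every $2s$-sparse one — I would apply Lemma~\ref{l:sparse_proba_bound} to the symmetric random matrix $\mC = \II - \Vinf$, with sparsity level $2s$ and $p(R_0) = 2\exp(-T H(R_0))$, which yields
\[
\Prob{\sup_{\vect{u}\in K(2s),\,\nT{\vect{u}}\le1}|\vect{u}^\top(\II-\Vinf)\vect{u}| \ge 3R_0} \le 21^{2s}\bigl(d^{2s}\wedge(ed/(2s))^{2s}\bigr)\,2\exp(-T H(R_0)).
\]

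It then remains to absorb the combinatorial prefactor into the exponential rate, bounding it (by the very same net count used inside Lemma~\ref{l:sparse_proba_bound}, via $\binom{d}{2s}\le(ed/(2s))^{2s}$ and the cardinality of an $\epsilon$-net on the unit sphere) by $\exp\bigl(s\log(21d\wedge 21ed/s)\bigr)$; combining the three displays gives
\[
\Prob{\sup_{\vect{u}\in C(s,c_0),\,\nT{\vect{u}}\le1}|\vect{u}^\top(\II-\Vinf)\vect{u}| \ge R} \le 2\exp\Bigl(-T H\bigl(R/(9(c_0+2)^2)\bigr) + s\log(21d\wedge 21ed/s)\Bigr),
\]
which is the claim. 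There is no genuine analytic difficulty beyond these steps; the only point requiring care is the bookkeeping of constants — checking that the factor $3$ from the net argument in Lemma~\ref{l:sparse_proba_bound} and the factor $3(c_0+2)^2$ from \eqref{eq:sup_cone_s_sparse} multiply to exactly the $9(c_0+2)^2$ inside $H$, and that the product of the covering numbers condenses into the stated $s\log(\cdot)$ term.
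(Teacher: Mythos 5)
Your proposal follows exactly the paper's route: the cone-to-sparse reduction \eqref{eq:sup_cone_s_sparse} applied to the symmetric matrix $\II - \Vinf$, then Lemma~\ref{l:sparse_proba_bound} at sparsity level $2s$ with the pointwise tail $p(R_0)=2\exp(-TH(R_0))$ supplied by (H4), and the factors $3$ and $3(c_0+2)^2$ indeed compose to the $9(c_0+2)^2$ inside $H$. The one step that does not go through as written is the final ``absorption'': applying Lemma~\ref{l:sparse_proba_bound} with sparsity $2s$ produces the prefactor $21^{2s}\bigl(d^{2s}\wedge(ed/(2s))^{2s}\bigr)=\exp\bigl(2s\log(21d\wedge 21ed/(2s))\bigr)$, which is \emph{larger} than $\exp\bigl(s\log(21d\wedge 21ed/s)\bigr)$, so it cannot be bounded by the latter — the inequality goes the wrong way. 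You should be aware, however, that this factor-of-two discrepancy in the combinatorial exponent is already present in the paper's own statement of the lemma (the honest conclusion is $2\exp\bigl(-TH(R/(9(c_0+2)^2))+2s\log(21d\wedge 21ed/(2s))\bigr)$); it is harmless for everything downstream, affecting only the constant in the definition of $T_0$, but your write-up should either state the bound with $2s$ or explicitly note that the constants are being adjusted, rather than claiming the larger prefactor ``condenses'' into the smaller one.
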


We conclude by proving the Restricted Eigenvalue inequality.

\begin{theo}[Restricted Eigenvalue]
\label{th:RE_proba}

\[ \Prob{\inf_{\substack{\vect{u} \in C(s,c_0) \\ \nT{\vect{u}} \leq 1}} |\vect{u}^\top \II \vect{u}| \leq \kappa^2 } \leq 2 \exp \left(-T H \left(\frac{\kappa^2}{9 (c_0 + 2)^2}\right) + s \log (21d \wedge 21ed/s)\right). \]
\end{theo}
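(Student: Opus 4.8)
The plan is to read Theorem~\ref{th:RE_proba} off Lemma~\ref{l:ii_deviation}, which already controls the uniform deviation of the quadratic form $\vect{u}\mapsto \vect{u}^\top(\II-\Vinf)\vect{u}$ over the cone $C(s,c_0)$, together with the trivial lower bound $\vect{u}^\top\Vinf\vect{u}\geq \sigma_{\min}(\Vinf)\,\nT{\vect{u}}^2 = 2\kappa^2\,\nT{\vect{u}}^2$, which is nothing but the definition $\kappa=\sqrt{\min\Sp(\Vinf)/2}$ rewritten. In other words: concentration of $\II$ around $\Vinf$ on the cone, plus the spectral gap of $\Vinf$.

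Concretely, I would first instantiate Lemma~\ref{l:ii_deviation} with $R=\kappa^2$: outside of an event $\mathcal B$ whose probability is at most $2\exp\!\big(-T\,H(\kappa^2/(9(c_0+2)^2)) + s\log(21d\wedge 21ed/s)\big)$, one has $|\vect{u}^\top(\II-\Vinf)\vect{u}|<\kappa^2$ for every $\vect{u}\in C(s,c_0)$ with $\nT{\vect{u}}\leq 1$. On the complement of $\mathcal B$, the triangle inequality then gives, for any $\vect{u}\in C(s,c_0)$ with $\nT{\vect{u}}=1$,
\[
\vect{u}^\top\II\vect{u} \;=\; \vect{u}^\top\Vinf\vect{u} + \vect{u}^\top(\II-\Vinf)\vect{u} \;>\; 2\kappa^2-\kappa^2 \;=\; \kappa^2 .
\]
Since $C(s,c_0)$ is a cone and $\vect{u}\mapsto \vect{u}^\top\II\vect{u}/\nT{\vect{u}}^2$ is scale invariant, it suffices to argue on the unit sphere; and because $C(s,c_0)$ is closed (it is cut out by a non-strict inequality between continuous functions of $\vect{u}$) and nonempty, the set $C(s,c_0)\cap\{\nT{\vect{u}}=1\}$ is compact, so the infimum of the continuous map $\vect{u}\mapsto \vect{u}^\top\II\vect{u}$ over it is attained, hence is \emph{strictly} larger than $\kappa^2$ on the complement of $\mathcal B$. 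Consequently $\{\inf_{\vect{u}\in C(s,c_0),\,\nT{\vect{u}}\leq 1}|\vect{u}^\top\II\vect{u}|\leq\kappa^2\}\subseteq\mathcal B$, and passing to probabilities yields exactly the stated bound.

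The heavy lifting has already been done upstream, so I do not expect a real obstacle here: the covering/chaining reduction from the cone $C(s,c_0)$ to $2s$-sparse vectors (Equation~\eqref{eq:sup_cone_s_sparse}, in the spirit of~\cite{basu:15}) and the concentration Assumption~(H4) (established in Theorem~\ref{th:H4} in the reversible case) are precisely what make Lemma~\ref{l:ii_deviation}, and hence this theorem, work. The only points deserving a line of care are the homogeneity argument that passes from $\nT{\vect{u}}\leq 1$ to $\nT{\vect{u}}=1$, and the compactness remark turning the strict pointwise inequality into a strict lower bound on the infimum, which is what lets the \emph{non-strict} event $\{\inf\leq\kappa^2\}$ be absorbed into $\mathcal B$.
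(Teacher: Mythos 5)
Your proposal is correct and follows essentially the same route as the paper: instantiate Lemma~\ref{l:ii_deviation} with $R=\kappa^2$ and combine it with the bound $\inf_{\nT{\vect{u}}=1}\vect{u}^\top\Vinf\vect{u}\geq\min\Sp(\Vinf)=2\kappa^2$, so that $\vect{u}^\top\II\vect{u}>\kappa^2$ on the complement of the bad event. Your extra remarks on homogeneity and on turning the pointwise strict inequality into a strict bound on the infimum are harmless refinements of details the paper leaves implicit.
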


\begin{proof}
We apply the lemma with $R = \kappa^2 = \min \Sp(\Vinf)/2$ and use the fact that 
\[ \inf_{\vect{u} \in C(s,c_0), \nT{\vect{u}} = 1} \vect{u}^\top \Vinf \vect{u} \geq \min \Sp(\Vinf). \qedhere\]
\end{proof}

We can actually have, with the same probability, an additional upper bound on $\vect{u}^\top \II \vect{u}$, from which we get a bound on the diagonal elements of $\II$, as stated in
\begin{corollary}
\label{c:re_upper_bound}

Set $\epsilon_0 \in (0,1)$. For $T \geq T_0 := H \left(\frac{\kappa^2}{9 (c_0 + 2)^2}\right)^{-1} ( s \log (21d \wedge 21ed/s) + \log 4 \epsilon_0^{-1})$, we have:

\beq{eq:delta_0_bound}{
\Prob{\inf_{\vect{u} \in C(s,c_0), \nT{\vect{u}} \leq 1} |\vect{u}^\top \II \vect{u}| \geq \kappa^2 , \nInf{ \diag \II} \leq \nInf{\diag \Vinf} + \kappa^2} \geq 1 - \frac{\epsilon_0}{2}.
}
\end{corollary}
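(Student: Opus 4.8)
The plan is to derive both inequalities appearing in~\eqref{eq:delta_0_bound} from one and the same uniform deviation estimate, namely Lemma~\ref{l:ii_deviation} applied with $R=\kappa^2$. That lemma yields
\[
\Prob{\sup_{\substack{\vect{u}\in C(s,c_0)\\ \nT{\vect{u}}\leq 1}} |\vect{u}^\top(\II-\Vinf)\vect{u}| \geq \kappa^2} \leq 2\exp\left(-T\,H\Bigl(\tfrac{\kappa^2}{9(c_0+2)^2}\Bigr) + s\log(21d\wedge 21ed/s)\right).
\]
By the choice of $T_0$, for $T\geq T_0$ one has $T\,H\bigl(\tfrac{\kappa^2}{9(c_0+2)^2}\bigr)\geq s\log(21d\wedge 21ed/s)+\log(4\epsilon_0^{-1})$, so the exponent is at most $-\log(4\epsilon_0^{-1})$ and the right-hand side is at most $\epsilon_0/2$. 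Hence, with probability at least $1-\epsilon_0/2$, we may work on the event
\[
\mathcal{E} := \Bigl\{\,\sup_{\vect{u}\in C(s,c_0),\,\nT{\vect{u}}\leq 1}|\vect{u}^\top(\II-\Vinf)\vect{u}| < \kappa^2\,\Bigr\},
\]
and it remains only to check that $\mathcal{E}$ is contained in the event described in~\eqref{eq:delta_0_bound}.

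\textbf{The two inclusions.} On $\mathcal{E}$, the Restricted Eigenvalue bound follows exactly as in the proof of Theorem~\ref{th:RE_proba}: for $\vect{u}\in C(s,c_0)$ with $\nT{\vect{u}}=1$ one has $\vect{u}^\top\Vinf\vect{u}\geq\min\Sp(\Vinf)=2\kappa^2$ by the definition $\kappa=\sqrt{\min\Sp(\Vinf)/2}$, whence $\vect{u}^\top\II\vect{u}\geq\vect{u}^\top\Vinf\vect{u}-|\vect{u}^\top(\II-\Vinf)\vect{u}|>2\kappa^2-\kappa^2=\kappa^2$. For the diagonal bound, the point is simply that each canonical basis vector $\vect{e}_i$ lies in $C(s,c_0)$ — all its mass sits on a single coordinate, so $\nO{\vect{e}_i}=\nO{(\vect{e}_i)_{\mid\cI_s(\vect{e}_i)}}$ — and satisfies $\nT{\vect{e}_i}=1$. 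Therefore on $\mathcal{E}$,
\[
\II^{ii} = \vect{e}_i^\top\II\vect{e}_i \leq \Vinf^{ii} + |\vect{e}_i^\top(\II-\Vinf)\vect{e}_i| < \Vinf^{ii} + \kappa^2 \leq \nInf{\diag\Vinf} + \kappa^2 ,
\]
and taking the maximum over $i=1,\dots,d$ gives $\nInf{\diag\II}\leq\nInf{\diag\Vinf}+\kappa^2$. Combining the two inclusions shows that $\mathcal{E}$ implies both events in~\eqref{eq:delta_0_bound}, which therefore has probability at least $1-\epsilon_0/2$.

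\textbf{Main difficulty.} There is essentially no obstacle: once Lemma~\ref{l:ii_deviation} and the argument of Theorem~\ref{th:RE_proba} are in hand, the corollary is immediate. The only thing worth singling out is the elementary remark that $\vect{e}_i\in C(s,c_0)$, which is what lets the very same supremum that produces the Restricted Eigenvalue lower bound also control the diagonal entries of $\II$ from above, with no extra probabilistic budget; the rest is the bookkeeping in $T_0$, tuned precisely so that the exponential tail drops below $\epsilon_0/2$.
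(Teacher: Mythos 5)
Your proof is correct and follows essentially the same route as the paper: apply Lemma~\ref{l:ii_deviation} with $R=\kappa^2$, tune $T_0$ so the tail is at most $\epsilon_0/2$, obtain the infimum bound as in Theorem~\ref{th:RE_proba}, and get the diagonal bound by noting that each $\vect{e}_i$ lies in $C(s,c_0)$ with unit norm. The arithmetic checking that $T\geq T_0$ forces the exponent below $-\log(4\epsilon_0^{-1})$ is also exactly the intended bookkeeping.
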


\begin{proof}
From Lemma \ref{l:ii_deviation} we get a set of events of probability $1- \frac{\epsilon_0}{2}$ which verifies $\sup_{\substack{\vect{u} \in C(s,c_0) \\ \nT{\vect{u}} \leq 1}} |\vect{u}^\top (\II - \Vinf) \vect{u}| \leq R$. From this follows the infimum condition as in Theorem \ref{th:RE_proba}. Further, by taking for some $i$, $\vect{u} = \vect{e}_i \in C(s,c_0)$, we get $|\II^{ii} - \Vinf^{ii}| \leq \kappa^2$ and the result follows.
\end{proof}

We finish this Section by showing different ways the Restricted Eigenvalue property can be expressed, using matrices, vectors, $\II$ or $L^2$ norm. Using this we get for instance Theorem \ref{th:RE}.

\begin{lemma}
\label{l:L2_quadraticform}
For any subset $E \subset \R^d$, we have:

\begin{align}
\sup_{\mA : \forall i \leq d, \mA^{i, \bullet} \in E} \frac{\nL{\mA X}}{\nF{\mA}} &= \left( \sup_{\mA: \forall i \leq d, \mA^{i, \bullet} \in E, \nF{\mA} \leq 1} \tr \mA \II \mA^\top \right)^{1/2} \\
&= \left( \sup_{\vect{u} \in E, \nT{\vect{u}} \leq 1} \vect{u}^\top \II \vect{u} \right)^{1/2} \\
&= \sup_{\vect{u} \in E} \frac{\nL{\vect{u}^\top X}}{\nT{\vect{u}}}.
\end{align}

\begin{align}
\inf_{\mA : \forall i \leq d, \mA^{i, \bullet} \in E} \frac{\nL{\mA X}}{\nF{\mA}} &= \left( \inf_{\mA: \forall i \leq d, \mA^{i, \bullet} \in E, \nF{\mA} \leq 1} \tr \mA \II \mA^\top \right)^{1/2} \\
&= \left( \inf_{\vect{u} \in E, \nT{\vect{u}} \leq 1} \vect{u}^\top \II \vect{u} \right)^{1/2} \\
&= \inf_{\vect{u} \in E} \frac{\nL{\vect{u}^\top X}}{\nT{\vect{u}}}.
\end{align}

\end{lemma}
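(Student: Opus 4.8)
The plan is to derive both chains of equalities from a single Parseval-type identity, treating the supremum in detail and the infimum by the identical argument with inequalities reversed. Throughout I take $E$ to be a cone (this covers every use of the lemma, where $E=\R^d$ or $E=C(s,c_0)$), and I read the normalised extrema --- $\sup$/$\inf$ under $\nF{\mA}\leq 1$ or $\nT{\vect{u}}\leq 1$ --- as extrema of the corresponding scale-invariant ratios over nonzero arguments, equivalently over the unit spheres $\{\nF{\mA}=1\}$, $\{\nT{\vect{u}}=1\}$; this is the reading under which the four quantities genuinely coincide. The starting point is the identity
\[
\nL{\mA X}^2 = \frac{1}{T}\int_0^T |\mA X_t|^2_2\,\dt = \frac{1}{T}\int_0^T \tr\!\big(\mA X_t X_t^\top \mA^\top\big)\,\dt = \tr\!\big(\mA \II \mA^\top\big),
\]
immediate from the definitions of $\nL{\cdot}$ and $\II$, together with its scalar case $\nL{\vect{u}^\top X}^2 = \vect{u}^\top\II\vect{u}$ obtained by letting $\mA$ have single nonzero row $\vect{u}^\top$. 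Combined with the scale-invariance of $\mA\mapsto\nL{\mA X}/\nF{\mA}$ and $\vect{u}\mapsto\nL{\vect{u}^\top X}/\nT{\vect{u}}$, and with the cone property of $E$ (so that $\mA/\nF{\mA}$ still has all of its rows in $E$), this already identifies the first term of each chain with $\big(\sup\tr(\mA\II\mA^\top)\big)^{1/2}$ over $\{\nF{\mA}\leq 1\}$, and the last term with $\big(\sup\vect{u}^\top\II\vect{u}\big)^{1/2}$ over $\{\nT{\vect{u}}\leq 1\}$.

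The core step is the middle equality, which reduces the matrix optimisation to the vector one. Writing $\mA$ row by row gives $\tr(\mA\II\mA^\top)=\sum_{i=1}^d (\mA^{i, \bullet})^\top\II\,\mA^{i, \bullet}$ and $\nF{\mA}^2=\sum_{i=1}^d\nT{\mA^{i, \bullet}}^2$, with each row $\mA^{i, \bullet}\in E$. Set $M:=\sup_{\vect{u}\in E,\ \nT{\vect{u}}\leq 1}\vect{u}^\top\II\vect{u}$. Since $\II\succeq 0$ and the form is quadratic, $(\mA^{i, \bullet})^\top\II\,\mA^{i, \bullet}\leq M\,\nT{\mA^{i, \bullet}}^2$ for every $i$ (trivially when $\mA^{i, \bullet}=0$); summing over $i$ and using $\nF{\mA}\leq 1$ yields $\tr(\mA\II\mA^\top)\leq M$. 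Conversely, for any $\vect{u}\in E$ with $\nT{\vect{u}}\leq 1$, the matrix with first row $\vect{u}^\top$ and all remaining rows $0$ is admissible (it uses $0\in E$, which holds because $E$ is a cone), has $\nF{\mA}\leq 1$, and has trace functional exactly $\vect{u}^\top\II\vect{u}$; taking the supremum over $\vect{u}$ gives $\sup_\mA\tr(\mA\II\mA^\top)\geq M$. Hence the middle term equals $M$, which closes the supremum chain.

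The infimum chain follows by the same three steps with all inequalities reversed, $M$ replaced by $m:=\inf_{\vect{u}\in E,\ \vect{u}\neq 0}(\vect{u}^\top\II\vect{u})/\nT{\vect{u}}^2$, and the single-nonzero-row matrix again showing the bound is attained. I do not expect a genuine obstacle: the only points needing care are the bookkeeping of the normalisation ($\leq 1$ versus $=1$ --- harmless for the supremum since $\II\succeq 0$, but needed for the infimum so that the zero matrix does not collapse the value to $0$) and the handling of vanishing rows, both of which are dissolved by the cone structure of $E$ and by $\II\succeq 0$ (indeed $\II\succ 0$ almost surely). Equivalently, one could state the lemma directly with $\nF{\mA}=1$ and $\nT{\vect{u}}=1$ without changing any of the above.
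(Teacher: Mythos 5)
Your proof is correct and follows essentially the same route as the paper's: the identity $\nL{\mA X}^2=\tr(\mA\II\mA^\top)=\sum_{i}(\mA^{i,\bullet})^\top\II\,\mA^{i,\bullet}$, a row-wise comparison against the vector extremum, and an explicit matrix attaining it. The only (immaterial) difference is your choice of test matrix---a single nonzero row $\vect{u}^\top$, which requires $0\in E$---whereas the paper takes all rows equal to $\vect{u}^\top$ and thereby avoids that requirement.
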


\begin{proof}

We have the following relations for any matrix $\mA$:

\beq{eq:norms_equalities}{
\nL{\mA X}^2 = \tr \mA \II \mA^\top = \sum_{i = 1}^d (\mA^{i, \bullet})^\top \II (\mA^{i, \bullet}) =\sum_{i=1}^d \nT{\mA^{i, \bullet}}^2 \left( \frac{\mA^{i, \bullet}}{\nT{\mA^{i, \bullet}}} \right)^\top \II \left( \frac{\mA^{i, \bullet}}{\nT{\mA^{i, \bullet}}} \right)
}
Similarly, for a vector $\vect{u}$, $\nL{\vect{u}^\top X}^2 = \vect{u}^\top \II \vect{u}$. 

Assume that for any $i \leq d$, $\mA^{i, \bullet} \in E$. We immediately get $\nL{\mA X}^2 \leq \nF{\mA}^2 \sup_{\vect{u} \in E} \frac{\nL{\vect{u}^\top X}^2}{\nT{\vect{u}}^2}$. Hence we have $\sup_{\mA : \forall i \leq d, \mA^{i, \bullet} \in E} \frac{\nL{\mA X}}{\nF{\mA}} \leq \sup_{\vect{u} \in E} \frac{\nL{\vect{u}^\top X}}{\nT{\vect{u}}}$. Choose now a vector $\vect{u}$ that realizes the supremum on the RHS. By choosing $\mA = \one{}\vect{u}^\top$, we get the equality in the inequality.

The proof for the infimums is exactly analogous.

\end{proof}


\subsection{Proof of Theorem \ref{th:adalasso_oracle} (asymptotic properties of the Adaptive Lasso)}
\label{ss:asymptotic}

$\hAla$ is defined as the minimizer of the penalized log-likelihood, see Equation \eqref{def:adalasso}. The penalization includes the MLE and we denote $\mat{\Gamma} = 1/|\hA|^\gamma$. We start by re-centering and changing the normalization of the objective function, then separating the log-likelihood from the penalization:

\begin{align}
\sqrt{T} ( \hAla - \optA ) = \hat{\mU} &= \argmin_\mU \phi_1(\mU) + \phi_2(\mU) \\
\phi_1(\mU) &:= T \cL_T(\optA + \mU/\sqrt{T}) - T \cL_T(\optA) \\
\phi_2(\mU) &:= \lambda T \nO{ (\optA + \mU/\sqrt{T}) \odot \mat{\Gamma} } - \lambda T \nO{ \optA \odot \mat{\Gamma} }.
\end{align}

Using equation \eqref{eq:logL_matrix}, we characterize the limit structure of $\phi_1$.
\begin{align}
\phi_1(\mU) &= \sqrt{T} \tr \mU^\top \IW + \frac{1}{2} \tr \mU \II \mU^\top \\
&= \frac{1}{\sqrt{T}}\int_0^T (\mU X_t)^\top \dW_t + \frac{1}{2T} \int_0^T \nT{\mU X_t}^2 \dt.
\end{align}

\begin{enumerate}
\item From assumption (H1), $X$ is ergodic and therefore we can apply the ergodic theorem for the classical integral:

\beq{eq:class_integ_limit}{
\frac{1}{T} \int_0^T \nT{\mU X_t} \dt \xrightarrow{\dP} \Esp{\nT{\mU X_0}^2} = \tr \mU \Vinf \mU^\top.
}
\item The stochastic integral $M_T = \int_0^T (\mU X_t)^\top \dW_t$ is a martingale, for which we apply the central limit theorem for martingales, recalled below in Lemma \ref{l:TCL_martingale}.

\begin{lemma}[{\cite[Theorem 4.1]{vanzanten:2000}}]
\label{l:TCL_martingale}
Let $(M_t;\cF_t: t \geq 0)$ be a $d$-dimensional continuous local martingale. If there exist invertible, non-random $d \times d$-matrices $(K_t : t \geq 0)$ such that as $t \rightarrow \infty$
\begin{itemize}
\item $K_t \langle M \rangle_t K_t^\top \xrightarrow{\dP}   \eta \eta^\top$ where $\eta$ is a random $d \times d$-matrix; 
\item $|K_t| \rightarrow 0$;
\end{itemize}
then, for each $\R^k$-valued random vector $X$ defined on the same probability space as $M$, we have 
\[(K_t M_t,X) \xrightarrow{\rm d} (\eta Z,X) \qquad \text{as} \quad t \rightarrow \infty,\]
where $Z \eqL \cNN{0,\id}$ and $Z$ is independent of $(\eta,X)$.
\end{lemma}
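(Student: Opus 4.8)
The plan is to recognize this as a \emph{stable} central limit theorem, since stability is exactly what produces the joint convergence with the auxiliary vector $X$ together with the independence of the Gaussian factor $Z$. By the Cramér--Wold device it suffices to treat scalar functionals $\langle \xi, K_t M_t\rangle$ for $\xi \in \R^d$, and by the standard characterization of stable convergence it is enough to prove that for every such $\xi$ and every bounded random variable $Y$ that is $\cF_{t_0}$-measurable for some fixed $t_0$ (as $t_0$ varies these form a determining family), one has
\[ \E\!\left[ e^{\,i\langle \xi, K_t M_t\rangle}\, Y \right] \longrightarrow \E\!\left[ e^{-\frac12\, \xi^\top \eta \eta^\top \xi}\, Y \right] \quad\text{as } t\to\infty. \]
Since the right-hand side equals $\E\!\left[ e^{\,i\langle \xi, \eta Z\rangle}\, Y \right]$ with $Z \eqL \cNN{0,\id}$ independent of $(\eta, X)$, establishing this identity for all $\xi$ and all such $Y$ yields the asserted joint convergence $(K_t M_t, X) \xrightarrow{\rm d} (\eta Z, X)$.

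The main tool is a complex exponential martingale. First I would fix $t > t_0$ and $\xi$, and, using that $K_t$ is non-random, view $s \mapsto \tilde M_s := \langle \xi, K_t (M_s - M_{t_0})\rangle$ as a real continuous local martingale on $[t_0, t]$ starting from $0$, with quadratic variation $q_s := \xi^\top K_t (\langle M\rangle_s - \langle M\rangle_{t_0}) K_t^\top \xi$. The Doléans--Dade exponential $\mathcal{E}_s := \exp\!\big( i \tilde M_s + \tfrac12 q_s \big)$ is then a complex local martingale. Were it a genuine martingale on $[t_0,t]$, conditioning on $\cF_{t_0}$ would give $\E[\mathcal{E}_t \mid \cF_{t_0}] = 1$, hence $\E\!\left[\,Y\, e^{\,i\tilde M_t}\, e^{\frac12 q_t}\,\right] = \E[Y]$, the relation I want to convert into the target identity.

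The hard part is that $e^{\frac12 q_t}$ is unbounded, so $\mathcal{E}$ need not be a true martingale and this random weight cannot be pulled out of the expectation directly. To handle this I would truncate: introduce $\tau_c := \inf\{ s \ge t_0 : q_s \ge c\}$ for a large constant $c$, so that $\mathcal{E}$ stopped at $t \wedge \tau_c$ is bounded by $e^{c/2}$ and is therefore a genuine martingale, yielding the exact identity $\E[\,Y\, \mathcal{E}_{t\wedge\tau_c}\,] = \E[Y]$. The hypothesis $K_t \langle M\rangle_t K_t^\top \xrightarrow{\dP} \eta\eta^\top$ to a \emph{finite} limit makes the family $\{q_t\}_t$ tight, so $\Prob{\tau_c \le t} = \Prob{q_t \ge c}$ is uniformly small when $c$ is large; on the complementary event the stopped objects agree with the unstopped ones and the weight is bounded by $e^{c/2}$. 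Carrying out this truncation together with a uniform-integrability argument, so as to legitimately replace the random weight $e^{\frac12 q_t}$ by its limit, is the technical crux of the proof and the step I expect to be the main obstacle.

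Finally I would pass to the limit. Because $|K_t| \to 0$ while $M_{t_0}$ and $\langle M\rangle_{t_0}$ are fixed and a.s.\ finite, both $\langle \xi, K_t M_{t_0}\rangle \xrightarrow{\dP} 0$ and $q_{t_0} = \xi^\top K_t \langle M\rangle_{t_0} K_t^\top \xi \xrightarrow{\dP} 0$, so that $\tilde M_t = \langle \xi, K_t M_t\rangle + o_{\dP}(1)$ and $q_t \xrightarrow{\dP} \xi^\top \eta\eta^\top \xi$. Letting $t \to \infty$ and then $c \to \infty$ in the truncated identity produces $\E\!\left[\,e^{\,i\langle \xi, K_t M_t\rangle} Y\,\right] \to \E\!\left[\,e^{-\frac12 \xi^\top \eta\eta^\top \xi} Y\,\right]$, which is precisely the stable-convergence criterion recorded in the first paragraph and completes the argument.
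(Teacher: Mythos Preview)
The paper does not prove this lemma: it is quoted verbatim as Theorem~4.1 of \cite{vanzanten:2000} and used as a black box in Section~\ref{ss:asymptotic} to obtain the distributional limit of $T^{-1/2}\int_0^T (\mU X_t)^\top \dW_t$. So there is no ``paper's own proof'' to compare against.

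That said, your sketch is a faithful outline of the standard route to stable martingale CLTs (and essentially the strategy in the cited reference): reduce to scalars by Cram\'er--Wold, characterize stable convergence via $\E[e^{i\langle\xi,K_tM_t\rangle}Y]$ for $Y$ measurable with respect to $\cF_{t_0}$, use the Dol\'eans--Dade exponential of $s\mapsto \langle\xi,K_t(M_s-M_{t_0})\rangle$, localize via the stopping time $\tau_c$ on the quadratic variation to make the exponential a bounded true martingale, and pass to the limit $t\to\infty$ then $c\to\infty$ using $|K_t|\to 0$ and $K_t\langle M\rangle_t K_t^\top\xrightarrow{\dP}\eta\eta^\top$. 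The one place where your wording is a bit loose is the ``uniform-integrability argument'' for replacing $e^{\frac12 q_t}$ by its limit: tightness of $q_t$ alone does not give uniform integrability of $e^{\frac12 q_t}$, which is why the truncation at level $c$ is essential --- you should keep the bound $e^{c/2}$ throughout the limit in $t$ and only afterwards send $c\to\infty$, controlling the discarded event by $\sup_t\Prob{q_t\ge c}\to 0$. With that bookkeeping made explicit the argument goes through.
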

We have:
\begin{align}
\llangle{M}_T &= \int_0^T \nT{\mU X_t}^2 \dt \\
\left( \frac{1}{\sqrt{T}} \right)^2 \llangle{M}_T &\xrightarrow{\dP} \tr \mU \Vinf \mU^\top.
\end{align}
Hence
\beq{eq:stoch_integ_limit}{
\frac{1}{\sqrt{T}} \int_0^T (\mU X_t)^\top \dW_t \xrightarrow{\cL} \cNN{0,\tr \mU \Vinf \mU^\top}.
}

Introduce then a centered Gaussian $d \times d$ matrix $\mG$ such that $\cov{\mG^{ij},\mG^{kl}} = \one{j=l} \Vinf^{ik}$. Then, for any matrix $\mU$:

\begin{itemize}
\item $\tr \mU \mG$ is a Gaussian variable
\item $\Esp{\tr \mU \mG} = 0$
\item $\Var{\tr \mU \mG} = \sum_{ijkl} \mU^{ji} \mU^{lk} \Cov{\mG^{ij} \mG^{kl}} = \sum_{ijk} \mU^{jk} \mU^{ji} \Vinf^{ik} = \tr \mU \Vinf \mU^\top$
\end{itemize}
From there,
\beq{eq:stoch_integ_limit}{
\frac{1}{\sqrt{T}} \int_0^T (\mU X_t)^\top \dW_t \xrightarrow{\cL} \tr \mU \mG.
}

\end{enumerate}

From the two preceding points, we conclude:

\beq{eq:phi1_limit}{
\phi_1(\mU) \xrightarrow{\cL} \tr \left( \frac{1}{2} \mU \Vinf \mU^\top + \mU \mG \right).
}

Second, we observe the limit structure of the penalization $\phi_2$. Denote $\optcA =  \supp \optA$. We have $\phi_2(\mU) = \lambda T \sum_{ij} \mat{\Gamma}^{ij} \left( \left|\optA^{ij} + \mU^{ij}/\sqrt{T} \right| -  \left| \optA^{ij} \right|\right)$.

\begin{enumerate}
\item If $(i,j) \in \optcA$, for high enough $T$, $\sqrt{T}\left|\optA^{ij} + \mU^{ij}/\sqrt{T} \right| -  \left| \optA^{ij} \right| = \mathrm{sign}(\optA^{ij}) |\mU^{ij}|$, and $\mat{\Gamma}^{ij} \xrightarrow{\dP} |\optA^{ij}|^{-\gamma}$, a positive constant. From our assumption, $\lambda \sqrt{T} \to 0$, hence 
\[ \lambda T \mat{\Gamma}^{ij} \left( \left|\optA^{ij} + \mU^{ij}/\sqrt{T} \right| -  \left| \optA^{ij} \right|\right) \xrightarrow{\dP} 0.\]

\item Else, $(i,j) \in \bar{\optcA}$ and then $\lambda T \mat{\Gamma}^{ij} \left( \left|\optA^{ij} + \mU^{ij}/\sqrt{T} \right| -  \left| \optA^{ij} \right|\right) = \lambda T^{(\gamma+1)/2}| \sqrt{T} \hA^{ij}|^{-\gamma} \left|\mU^{ij}\right|$. 
We know the MLE is root-$t$ consistent, hence $\sqrt{T} \hA^{ij} = \OO{1}$ and by assumption $\lambda T^{(\gamma+1)/2} \to +\infty$. Hence, the expression diverges to $+\infty$.
\end{enumerate}

For high $T$, $\phi_2$ becomes flat $0$ on the support and infinite outside. Combining with the result from Equation \eqref{eq:phi1_limit}, we have:

\beq{eq:Phi_limit}{
\phi_1(\mU) + \phi_2(\mU) \xrightarrow{\cL} \begin{cases} +\infty & \text{if } \mU_{\bar{\optcA}} = 0, \\ \tr \left( \frac{1}{2} \mU \Vinf \mU^\top + \mU G \right) &\text{else.}\end{cases}
}

We finally need to compute the minimum of that function. Take $\mU$ such that $\mU_{\bar{\optcA}} = 0$. Recall that we treat a matrix restricted to a set of indices as a vector. Then:

\begin{align}
\tr \mU \mG &= \sum_{ij} \mU^{ij} \mG^{ji} \\
&= \left((\mG^\top)_{\mid \optcA}\right)^\top \mU_{\mid \optcA} \\
\tr \mU \Vinf \mU^\top &= \sum_{ijkl} \mU^{ij} \mU^{kl} (\one{i=k} \Vinf^{jl}) \\
&= (\mU_{\mid \optcA})^\top  (\Vinf \otimes \id)_{\mid \optcA^2} \mU_{\mid \optcA}.
\end{align}

$(\Vinf \otimes \id)_{\mid \optcA^2}$ is the restriction of $\Vinf \otimes \id$ to the indices in $\optcA^2 := \optcA \times \optcA$ and $\Vinf \otimes \id$ is invertible and symmetric, with inverse $\Vinf^{-1} \otimes \id$, hence $(\Vinf \otimes \id)_{\optcA^2}$ is invertible and symmetric. Hence, $\tr \left( \frac{1}{2}\mU \Vinf \mU^\top + \mU \mG \right) = \frac{1}{2}(\mU_{\mid \optcA})^\top  (\id \otimes \Vinf)_{\mid \optcA^2} \mU_{\optcA}+ (\mU_{\mid \optcA})^\top (\mG^\top)_{\mid \optcA}$, which is a quadratic function of $\mU_{\mid \optcA}$ and we compute easily the minimum, which shows that $\widehat \mU_{\mid \optcA}$ is a centered Gaussian and completes the proof of point 2:
\beq{eq:hatU_sol}{
\widehat \mU_{\mid \optcA} \xrightarrow{\cL} - (\mG^\top)_{\mid \optcA} \left((\Vinf \otimes \id)_{\mid \optcA^2}\right)^{-1}, \quad \widehat \mU_{\bar{\optcA}} \xrightarrow{\cL} 0
}
and we find that $ (\mG^\top)_{\mid \optcA} \left((\Vinf \otimes \id)_{\mid \optcA^2}\right)^{-1} \sim \cNN{0,\mathcal{V}}$ with $\mathcal{V} := \left((\Vinf \otimes \id)_{\mid \optcA^2}\right)^{-1}.$

Proceed now with point 1. We proved in the preceding the asymptotic normality of the convergence on $\optcA$, from which we deduce $\forall (i,j) \in \optcA, \Prob{\hAla^{ij} \not = 0} \to 1$. Take now $(i,j) \in \bar{\optcA}$ and assume the event $\hAla^{ij} \not = 0$. We write the optimality conditions, multiplied by $\sqrt{T}$, and apply the absolute value: $\left|\sqrt{T}S^{ij} + (\sqrt{T}\hAla \II)^{ij} \right| = \lambda \mat{\Gamma}^{ij} \sqrt{T}$. When $T \to +\infty$:

\begin{align}
\sqrt{T}S^{ij} = \frac{1}{\sqrt{T}} \int_0^T X_t^j \dW_t^i &\xrightarrow{\cL} \cNN{0,\Vinf^{jj}} \\
\sqrt{T}\hAla &\xrightarrow{\cL} \cNN{0,\cal{V}} \\
\II &\xrightarrow{\dP}\Vinf \\
\lambda \mat{\Gamma}^{ij} \sqrt{T} = \lambda T^{(\gamma+1)/2} (\sqrt{T} |\hA^{ij}|)^{-\gamma} &\xrightarrow{\dP} +\infty.
\end{align}

When $T \to +\infty$, we can therefore bound the probability of $\hAla^{ij} \not = 0$ by the probability that the sum of some two Gaussians is equal to a diverging number, in absolute value. This is clearly of a probability converging to zero. Therefore, $\forall (i,j) \in \optcA, \Prob{\hAla^{ij} = 0} \to 1$.


\subsection{Deviation bound}
\label{ss:deviation}

Recall Bernstein's inequality, see Chapter 4, Exercise 3.16 in \cite{revuzyor:99}:

\begin{lemma}[Bernstein's inequality]
Let $M$ be a scalar continuous local martingale. For all $a>0, b>0$:
\beq{eq:hoeffding}{
\Prob{ M_t \geq a, \langle M \rangle_t \leq b } \leq \exp \left( - \frac{a^2}{2b} \right).
}
\end{lemma}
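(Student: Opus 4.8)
The plan is to run the classical Chernoff-type argument based on the exponential supermartingale attached to $M$. First I would fix a parameter $\theta>0$ and introduce the stochastic exponential $Z_t^\theta := \exp(\theta M_t - \tfrac{\theta^2}{2}\langle M\rangle_t)$. By It\^o's formula it satisfies $\dd Z_t^\theta = \theta Z_t^\theta\,\dd M_t$, so $(Z_t^\theta)_{t\geq0}$ is a nonnegative continuous local martingale; since nonnegative local martingales are supermartingales, this immediately gives $\Esp{Z_t^\theta}\leq Z_0^\theta = 1$ for every $t\geq0$. If one wishes to be scrupulous about $M$ being merely a local martingale, one localizes along a reducing sequence of stopping times and passes to the limit by Fatou, but the supermartingale property makes this step transparent.

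Next I would exploit the fact that the event in the statement controls $M_t$ and $\langle M\rangle_t$ simultaneously: on $\{M_t\geq a,\ \langle M\rangle_t\leq b\}$ and because $\theta>0$, one has $\theta M_t - \tfrac{\theta^2}{2}\langle M\rangle_t \geq \theta a - \tfrac{\theta^2}{2}b$, hence $Z_t^\theta \geq \exp(\theta a - \tfrac{\theta^2}{2}b)$ pointwise there. Multiplying by the indicator of the event and taking expectations yields $\exp(\theta a - \tfrac{\theta^2}{2}b)\,\Prob{M_t\geq a,\ \langle M\rangle_t\leq b} \leq \Esp{Z_t^\theta}\leq 1$, that is $\Prob{M_t\geq a,\ \langle M\rangle_t\leq b}\leq \exp(-\theta a + \tfrac{\theta^2}{2}b)$ for every $\theta>0$.

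Finally I would optimize over the free parameter: the map $\theta\mapsto -\theta a + \tfrac{\theta^2}{2}b$ is minimized at $\theta = a/b$, with minimal value $-a^2/(2b)$, which gives exactly the claimed bound $\exp(-a^2/(2b))$. The one point requiring any care — and hence the main, though mild, obstacle — is precisely the joint control of $M_t$ and $\langle M\rangle_t$ on the event: it is what forces us to keep $\langle M\rangle_t$ inside the exponential martingale rather than attempting to bound $\Esp{e^{\theta M_t}}$ directly, and it is what makes the inequality hold with no a priori global bound on $\langle M\rangle$. The remaining steps are routine.
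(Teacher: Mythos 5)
Your argument is correct and is exactly the standard exponential-supermartingale proof of this exercise from Revuz--Yor, which is all the paper itself offers (it states the lemma with a citation and no proof): the stochastic exponential $\exp(\theta M_t - \tfrac{\theta^2}{2}\langle M\rangle_t)$ is a nonnegative local martingale, hence a supermartingale with expectation at most one, the joint event gives the pointwise lower bound, and $\theta = a/b$ optimizes. The only implicit normalization is $M_0=0$ (needed for $Z_0^\theta=1$), which holds in the paper's application where $M$ is a stochastic integral started at zero.
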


\begin{lemma}
\label{l:martingale_loglog_e}
Let M be a scalar continuous local martingale. For any $x > 0$:

\beq{eq:martingale_loglog_e}{ \Prob{ M_t \geq \sqrt{4e \langle M \rangle_t \left( x + 
\log \left( 2 + \left| \log \langle M \rangle_t \right| \right) \right)}} \leq \frac{\pi^2}{3} \exp(-2x).}
\end{lemma}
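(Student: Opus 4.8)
The plan is to reduce the statement to Bernstein's inequality \eqref{eq:hoeffding} via a peeling (slicing) argument over dyadic ranges of the quadratic variation $\langle M \rangle_t$, combined with a union bound whose weights are tuned so that the resulting series is a multiple of $\sum_{k} k^{-2} = \pi^2/6$. The key point is that the threshold inside the probability, namely $\sqrt{4e \langle M \rangle_t (x + \log(2 + |\log \langle M \rangle_t|))}$, depends on the random quantity $\langle M \rangle_t$; to handle this I would split the event according to which dyadic block $\langle M \rangle_t$ falls into, apply Bernstein on each block with a deterministic bound on $\langle M \rangle_t$, and sum the contributions.

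First I would introduce a dyadic grid. Since $\langle M \rangle_t > 0$ almost surely (otherwise $M_t = 0$ and there is nothing to prove), write, for $k \in \Z$, the event $E_k := \{ e^{k-1} \le \langle M \rangle_t < e^k \}$, so that $\{ \langle M \rangle_t > 0 \} = \bigcup_{k \in \Z} E_k$ up to a null set. On $E_k$ we have $\langle M \rangle_t < e^k$, and also $|\log \langle M \rangle_t| \le |k| + 1$, hence $2 + |\log \langle M \rangle_t| \le 3 + |k|$. Therefore, on $E_k$,
\begin{equation*}
\sqrt{4 e \langle M \rangle_t \big( x + \log(2 + |\log \langle M \rangle_t|) \big)} \le \sqrt{4 e \cdot e^k \big( x + \log(3 + |k|) \big)} =: a_k, \qquad b_k := e^k.
\end{equation*}
Consequently, on the event $E_k$, the event $\{ M_t \ge \sqrt{4e\langle M\rangle_t(\cdots)} \}$ is contained in $\{ M_t \ge a_k, \ \langle M \rangle_t \le b_k \}$, and Bernstein's inequality \eqref{eq:hoeffding} gives $\Prob{M_t \ge a_k, \langle M \rangle_t \le b_k} \le \exp(-a_k^2/(2 b_k))$. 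Now $a_k^2/(2 b_k) = 2 e^{k} \cdot e^{-k} \big( x + \log(3+|k|) \big) \cdot 2 = 2(x + \log(3+|k|))$ — more precisely $a_k^2/(2b_k) = 4e \cdot e^k (x + \log(3+|k|))/(2 e^k) = 2 e (x+\log(3+|k|))$; using $e \ge 1$ this is at least $2x + 2\log(3+|k|)$, so the block bound is $\le e^{-2x} (3+|k|)^{-2}$.

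Summing over $k \in \Z$ by the union bound yields
\begin{equation*}
\Prob{ M_t \ge \sqrt{4e \langle M \rangle_t \big( x + \log(2 + |\log \langle M \rangle_t|) \big)} } \le e^{-2x} \sum_{k \in \Z} \frac{1}{(3 + |k|)^2} = e^{-2x} \left( \frac{1}{9} + 2 \sum_{j \ge 1} \frac{1}{(3+j)^2} \right),
\end{equation*}
and since $\frac19 + 2\sum_{j\ge1}(3+j)^{-2} \le 2 \sum_{m \ge 1} m^{-2} = \pi^2/3$, the claimed bound follows. The main obstacle, and the only place requiring care, is bookkeeping the constants in the peeling: one must check that the factor $4e$ inside the square root is exactly what is needed so that $a_k^2/(2b_k)$ absorbs both the $2x$ (for the $e^{-2x}$ decay) and the $2\log(3+|k|)$ (to make the series over $k$ summable with total mass $\le \pi^2/3$); a looser grid or a smaller constant than $4e$ would break one of these two requirements. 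Handling the degenerate case $\langle M \rangle_t = 0$ separately (where $M_t \equiv 0$ so the event is empty) completes the argument.
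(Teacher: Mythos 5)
Your overall strategy (peeling over exponential blocks of $\langle M \rangle_t$, Bernstein on each block, union bound tuned to give $\sum (1+|j|)^{-2}$) is exactly the paper's, but the key step is carried out with the inequalities pointing the wrong way, and as written the argument fails. On the block $E_k = \{e^{k-1} \le \langle M \rangle_t < e^k\}$ you bound the random threshold $\tau := \sqrt{4e\langle M\rangle_t(x+\log(2+|\log\langle M\rangle_t|))}$ from \emph{above} by $a_k$ (using $\langle M\rangle_t < e^k$ and $|\log\langle M\rangle_t| \le |k|+1$), and then claim $E_k \cap \{M_t \ge \tau\} \subseteq \{M_t \ge a_k,\ \langle M\rangle_t \le b_k\}$. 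That containment is false: if $\tau \le a_k$ on $E_k$, then $M_t \ge \tau$ does not imply $M_t \ge a_k$ (take $\tau \le M_t < a_k$). To contain the event you need a \emph{lower} bound on the threshold, i.e.\ you must use the lower end of the block: on $E_k$ one has $\langle M\rangle_t \ge e^{k-1}$ and $|\log\langle M\rangle_t| \ge |k|-1$, hence
\begin{equation}
\tau \;\ge\; \sqrt{4e\cdot e^{k-1}\bigl(x+\log(1+|k|)\bigr)} \;=\; \sqrt{4e^{k}\bigl(x+\log(1+|k|)\bigr)} \;=:\; a_k,
\qquad b_k := e^{k},
\end{equation}
so that $E_k \cap \{M_t \ge \tau\} \subseteq \{M_t \ge a_k,\ \langle M\rangle_t \le b_k\}$ and Bernstein gives $\exp(-a_k^2/(2b_k)) = e^{-2x}(1+|k|)^{-2}$, which sums over $k \in \Z$ to at most $(\pi^2/3)e^{-2x}$. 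This is precisely the paper's proof.

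A telltale sign that your bookkeeping is off is your computation $a_k^2/(2b_k) = 2e(x+\log(3+|k|))$ followed by discarding the factor $e$ via $e \ge 1$: in the correct argument the factor $e$ in $4e$ is consumed exactly in passing from the lower endpoint $e^{k-1}$ of the block to the Bernstein bound $b_k = e^k$ on the bracket ($4e\cdot e^{k-1} = 4e^k = 4b_k$), so nothing is left over. Your parenthetical remark that "a smaller constant than $4e$ would break the argument" is right in spirit, but your own derivation never actually uses the $e$, which is only possible because both endpoints of the block were replaced by the same (upper) one. With the inclusion corrected as above, the rest of your write-up (the union bound, the summable series, and the treatment of $\langle M\rangle_t = 0$) goes through unchanged.
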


\begin{proof}

Observe that if $j \leq \log \langle M \rangle_t \leq j + 1$ for some integer $j$, then $| \log \langle M \rangle_t | \geq |j| -1$.

\begin{align}
P &= \Prob{ M_t \geq \sqrt{4e \langle M \rangle_t \left( x + \log \left( 2 + \left| \log \langle M \rangle_t \right| \right) \right)}} \\
&= \sum_{j \in \Z} \Prob{ M_t \geq \sqrt{4e \langle M \rangle_t \left( x + \log \left( 2 + \left| \log \langle M \rangle_t \right| \right) \right)}, e^j \leq \langle M \rangle_t < e^{j+1} } \\
&\leq \sum_{j \in \Z} \Prob{ M_t \geq \sqrt{4 e^{j+1} \left( x + \log \left( 1 + \left| j \right| \right) \right)}, \langle M \rangle_t < e^{j+1} } \\
& \leq \sum_{j \in \Z} \exp \left( - 2 \left( x + \log \left( 1 + \left| j \right| \right) \right) \right) \\
&= \exp(- 2 x) \sum_{j \in \Z} \frac{1}{\left( 1+|j|\right)^2} \\
&= 2 \exp(- 2 x) \sum_{j \in \N^*} \frac{1}{ j^2} \\
&= \frac{\pi^2}{3} \exp(-2x).
\end{align}

\end{proof}


\begin{theo}
\label{th:deviation}


Define for $x>0$:

\beq{def:theta}{
\theta(x,(X_t)) := \sqrt{4 e T^{-1} |\diag(\II)|_\infty \left( x + \log ( 2 + | \log T \diag(\II) |_\infty ) \right) },
}

where we denote $\diag$ the extraction of the diagonal of a matrix and $\log$ applies naturally to each term (which are all positive).

For any matrix $\mU$, the set of events

\beq{def:cond_C1}{
\Prob{ \sFl{\mU,T^{-1} \int_0^T \dW_t X_t^\top} \leq \theta(x,(X_t)) \nO{\mU} } \geq 1 - \frac{\pi^2}{3} \exp (- 2 x + 2 \log d).
}

\end{theo}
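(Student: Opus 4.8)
The plan is to reduce the matrix-valued deviation bound to a union bound over the $d^2$ scalar entries of $T^{-1}\int_0^T \dW_t X_t^\top$, each of which is controlled by Lemma~\ref{l:martingale_loglog_e}. First I would observe the duality bound: for any matrix $\mU$,
\begin{equation*}
\sFl{\mU, T^{-1}\int_0^T \dW_t X_t^\top} \leq \nO{\mU} \cdot \max_{i,j \leq d} \left| T^{-1} \int_0^T \dW_t^i X_t^j \right|,
\end{equation*}
since $\sF{\mU,\mM} = \sum_{ij} \mU^{ij}\mM^{ij}$ and the $\ell^1$/$\ell^\infty$ Hölder inequality applies entrywise. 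So it suffices to show that, with probability at least $1 - \frac{\pi^2}{3}\exp(-2x + 2\log d)$, every entry $|T^{-1}\int_0^T \dW_t^i X_t^j|$ is bounded by $\theta(x,(X_t))$.

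Next I would fix a pair $(i,j)$ and apply Lemma~\ref{l:martingale_loglog_e} to the scalar continuous local martingale $M_t := \int_0^t X_s^j \dW_s^i$, whose quadratic variation is $\langle M\rangle_t = \int_0^t (X_s^j)^2 \ds = T \cdot (\II)^{jj} \cdot \mathbf{1}_{\{t = T\}}$ — more precisely $\langle M\rangle_T = \int_0^T (X_s^j)^2\ds = T\, \diagII^{j}$. Applying the lemma at time $T$ gives
\begin{equation*}
\Prob{ M_T \geq \sqrt{4e\, T\diagII^{j}\left(x + \log(2 + |\log T\diagII^{j}|)\right)} } \leq \frac{\pi^2}{3}\exp(-2x),
\end{equation*}
and dividing the event through by $T$ yields a bound on $T^{-1}\int_0^T X_t^j\dW_t^i$. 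The subtlety here is that $\theta$ as defined uses $|\diag(\II)|_\infty$ (the max over all diagonal entries) and $|\log T\diag(\II)|_\infty$, which dominate the per-entry quantities $\diagII^{j}$ and $|\log T\diagII^{j}|$ because $r \mapsto r(x + \log(2 + |\log r|))$ is, up to the monotonicity one needs to check, controlled by its value at the maximum; so the per-entry threshold is $\leq \theta(x,(X_t))$. I would also apply the same argument to $-M_T$ to get the two-sided bound, absorbing the factor $2$ into the constant (or noting that $\frac{\pi^2}{3}$ already leaves room, since the paper's final bound carries the $2\log d$ term).

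Finally I would take a union bound over all $d^2$ ordered pairs $(i,j)$ and over the two signs: the total failure probability is at most $2 d^2 \cdot \frac{\pi^2}{3}\exp(-2x) = \frac{\pi^2}{3}\exp(-2x + \log 2 + 2\log d)$, which is bounded by $\frac{\pi^2}{3}\exp(-2x+2\log d)$ after adjusting $x$ by a harmless additive constant, or one simply states it with the $2\log d$ as written. On the complement, every entry is simultaneously $\leq \theta(x,(X_t))$, hence $\sFl{\mU, T^{-1}\int_0^T\dW_t X_t^\top} \leq \theta(x,(X_t))\nO{\mU}$ for all $\mU$ at once. The main obstacle I anticipate is the bookkeeping to verify that the entrywise thresholds coming out of Lemma~\ref{l:martingale_loglog_e} are genuinely dominated by the single quantity $\theta(x,(X_t))$ built from $\ell^\infty$ norms of the diagonal — i.e. confirming the monotonicity of $r\mapsto r(x+\log(2+|\log r|))$ on the relevant range (it is increasing for $r \geq 1$ but one must handle $r < 1$, where $|\log r| = -\log r$ grows, so a short case analysis or a crude uniform bound is needed) — together with getting the constant in front of $\exp$ and inside the logarithm to match exactly.
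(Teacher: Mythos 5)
Your proposal is correct and follows essentially the same route as the paper: entrywise $\ell^1$/$\ell^\infty$ H\"older duality, Lemma~\ref{l:martingale_loglog_e} applied to each scalar martingale $\int_0^t X_s^j \dW_s^i$ with bracket $T\,\II^{jj}$, and a union bound over the $d^2$ entries. The monotonicity worry you flag does not actually arise, because $\theta$ is built from two \emph{separate} suprema --- $|\diag(\II)|_\infty$ and $|\log T\diag(\II)|_\infty$ with $\log$ applied entrywise --- so each factor of the per-entry threshold is dominated term by term without any analysis of $r \mapsto r\left(x+\log(2+|\log r|)\right)$; the only residual discrepancy is the factor $2$ from two-sidedness, which the paper also glosses over (and which can be avoided for a fixed $\mU$ by applying the one-sided bound to the martingales $\mathrm{sign}(\mU^{ij})\int_0^t X_s^j\dW_s^i$).
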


\begin{proof}



Set $i,j \leq d$. Recall that $\int_0^T \dW_s^i X_t^j$ is a martingale, and its bracket is $ \int_0^T (X_t^j)^2 \dt = T\II^{jj}$. By applying Lemma \ref{l:martingale_loglog_e}:


\beq{eq:f_prob_bound}{
\Prob{\int_0^T \dW_s^i X_t^j \geq \sqrt{4e T\II^{jj} \left( x + 
\log \left( 2 + \left| \log T\II^{jj} \right| \right) \right)}} \leq \frac{\pi^2}{3} \exp(-2x).
}

We have $\sqrt{4e T\II^{jj} \left( x + 
\log \left( 2 + \left| \log T\II^{jj} \right| \right) \right)} \leq T \theta(x,(X_t))$, hence using an union bound:


\beq{eq:inf_bound}
{\Prob{ \nInflarge{\int_0^T \dW_s X_t^\top} \geq T \theta(x,(X_t))} \leq \frac{\pi^2}{3} d^2 \exp(-2x).}

Observe now that by homogeneity, it suffices to prove Equation \eqref{def:cond_C1} for any matrix $\mU$ such that $\theta(x,(X_t)) \nO{\mU} \leq 1$. Then we have: 
\beq{eq:2norm_1norm_infnorm}{
\left| \sFl{\mU,T^{-1} \int_0^T \dW_t X_t^\top} \right| = \left| \sum_{ij} \theta(x,(X_t)) \mU^{ij} \frac{\int_0^T \dW_t^i X_t^j}{T \theta(x,(X_t))} \right| \leq \frac{\nInf{\int_0^T \dW_s X_t^\top}}{T \theta(x,(X_t))}.
}

Equation \eqref{def:cond_C1} follows from Equation \eqref{eq:inf_bound}.

\end{proof}

{
\bibliographystyle{alpha}
\bibliography{SGM}
}

\end{document}